\pdfoutput=1
\documentclass[11pt]{article}

\usepackage[a4paper,margin=22mm]{geometry}
\usepackage[T1]{fontenc}
\usepackage{amsmath,amssymb,amsthm}
\usepackage{newtxtext}
\usepackage{newtxmath}
\usepackage{parskip}

\usepackage{natbib}
\setcitestyle{authoryear,round,citesep={;},aysep={,},yysep={;}}

\usepackage{amsmath,amssymb,amsthm}
\usepackage{aliascnt}
\usepackage{array}
\usepackage{booktabs}
\usepackage{longtable}
\usepackage{tabularx}
\usepackage{enumitem}
\usepackage{flafter}
\usepackage{placeins}
\usepackage[final]{microtype}
\usepackage{graphicx}
\usepackage{capt-of}
\usepackage{wrapfig}
\usepackage{pgfplots}
\usepackage{pgfplotstable}
\usepackage{hyperref}
\usepackage{url}
\usepackage{xurl}
\hypersetup{
    colorlinks=true,
    linkcolor=blue,
    filecolor=blue,
    urlcolor=blue,
    citecolor=blue,
    anchorcolor=blue
}
\usepackage[nameinlink,capitalize,noabbrev]{cleveref}

\pgfplotsset{compat=1.18}
\usepgfplotslibrary{groupplots}
\usetikzlibrary{arrows.meta,positioning,calc}
\graphicspath{{figures/}}

\newlength{\paperpairsep}
\pgfplotsset{
  paper axis/.style={
    scale only axis,
    tick align=outside,
    tick pos=left,
    axis line style={draw=black!75, line width=0.6pt},
    tick style={draw=black!75, line width=0.6pt},
    tick label style={font=\footnotesize},
    label style={font=\footnotesize},
    title style={font=\small, yshift=1.5pt},
    grid=major,
    grid style={draw=black!12, line width=0.4pt},
    every axis plot/.append style={line width=1.1pt, mark size=2pt,
      mark options={solid, line width=0.7pt}},
    error bars/error bar style={line width=0.6pt, solid},
    error bars/error mark options={rotate=90, mark size=1.5pt,
      line width=0.6pt},
    legend style={font=\footnotesize, draw=none, fill=none,
      cells={anchor=west},
      /tikz/every even column/.append style={column sep=0.85em}},
    legend image code/.code={%
      \draw[mark repeat=2, mark phase=2, ##1]
        plot coordinates {(0cm,0cm) (0.26cm,0cm) (0.52cm,0cm)};},
  },
  paper pair/.style={
    paper axis,
    width=0.368\linewidth,
    height=0.27\linewidth,
  },
  paper wide/.style={
    paper axis,
    width=0.70\linewidth,
    height=0.30\linewidth,
  },
}

\newsavebox{\domainpanelbox}

\newcommand{\plotlegend}[1]{%
  \par\noindent\makebox[\linewidth][c]{\ref*{#1}}}

\definecolor{cbblue}{HTML}{0072B2}
\definecolor{cborange}{HTML}{D55E00}
\definecolor{cbgreen}{HTML}{009E73}
\definecolor{cbpurple}{HTML}{CC79A7}

\newcolumntype{R}{>{\raggedleft\arraybackslash}X}

\theoremstyle{plain}
\newtheorem{theorem}{Theorem}[section]
\newaliascnt{proposition}{theorem}
\newtheorem{proposition}[proposition]{Proposition}
\aliascntresetthe{proposition}
\newaliascnt{corollary}{theorem}
\newtheorem{corollary}[corollary]{Corollary}
\aliascntresetthe{corollary}
\newaliascnt{lemma}{theorem}
\newtheorem{lemma}[lemma]{Lemma}
\aliascntresetthe{lemma}
\theoremstyle{definition}
\newaliascnt{definition}{theorem}
\newtheorem{definition}[definition]{Definition}
\aliascntresetthe{definition}
\theoremstyle{plain}

\newcommand{\Prob}{\mathbb P}
\newcommand{\E}{\mathbb E}
\DeclareMathOperator{\Var}{Var}
\DeclareMathOperator{\Cov}{Cov}

\DeclareMathOperator{\Bern}{Bern}

\newcommand{\Eval}{\mathsf E}
\newcommand{\one}{\mathbf 1}

\DeclareMathOperator{\TV}{TV}

\newcommand{\dd}{\mathop{}\mathrm{d}}
\newcommand{\bigO}{\mathcal O}

\newcommand{\DL}{D_{\mathrm L}}
\newcommand{\DB}{D_{\mathrm B}}

\newcommand{\pM}{p_{\mathrm M}}
\newcommand{\pMmax}{p_{\mathrm M,\max}}
\newcommand{\Nround}{N_{\mathrm{round}}}

\newcommand{\Ncand}{N_{\mathrm{cand}}}

\newcommand{\KLlog}{\mathcal K^{\mathrm L}}

\DeclareMathOperator{\sigm}{sigm}
\newcommand{\tsep}{t_{\mathrm{sep}}}
\newcommand{\tsepM}{t_{\mathrm{sep,M}}}
\newcommand{\tsepB}{t_{\mathrm{sep,B}}}

\newcommand{\SATcount}{\#\mathrm{SAT}}
\DeclareMathOperator{\KL}{KL}
\DeclareMathOperator{\logit}{logit}
\newcommand{\binaryKL}{d_{\mathrm B}}

\crefname{theorem}{Theorem}{Theorems}
\crefname{proposition}{Proposition}{Propositions}
\crefname{corollary}{Corollary}{Corollaries}
\crefname{lemma}{Lemma}{Lemmas}
\crefname{definition}{Definition}{Definitions}
\crefname{section}{Section}{Sections}

\input{figures/tables}

\hypersetup{
  pdftitle={The cost of useful natural gradient updates},
  pdfauthor={Subhransu S. Bhattacharjee, Dylan Campbell, Rahul Shome}
}

\title{The cost of useful natural gradient updates}
\author{%
  Subhransu S. Bhattacharjee \qquad Dylan Campbell \qquad Rahul Shome\\[1ex]
  School of Computing, The Australian National University\\
  \normalsize Corresponding author: \texttt{Subhransu.Bhattacharjee@anu.edu.au}%
}
\date{}

\begin{document}
\maketitle
\begin{abstract}
What information is needed to turn a natural-gradient direction into a useful
finite update? Under a population Kullback--Leibler (KL) budget, we call a step
\emph{useful} if it is feasible and loses at most a fraction $\varepsilon$ of
the best feasible gain along the direction. We construct a four-state
exponential family whose laws share their initial gradient, scalar Fisher
information and natural gradient, yet two laws have disjoint useful-step sets.
With these quantities supplied exactly and the law otherwise known only
through draws, the family's worst-case sample complexity is $\Theta(\log(1/\delta)/(p\varepsilon^2))$ for small
$\varepsilon$, where $p$ scales rare-state probabilities and $\delta$ is the
failure probability. The budget is fixed and the optimal gain stays bounded
away from zero, so the step length, not the direction, carries this cost. For succinctly described event-tilt models, returning a useful
step is NP-hard even with the exact natural gradient and efficient exact
sampling. Recovering the unit natural gradient to constant error is also
NP-hard even in a two-parameter logistic family with Fisher condition number at
most $3$. We also give matching sample bounds for event tilts, sample bounds for damped
Fisher solves and a population-KL certificate for affine classifiers. In frozen-feature classifier heads, stopping at a sampled KL boundary
succeeds in about half of the trials, and a 10\% KL margin raises joint
success above 93\% at a KL budget of $0.01$.
Thus, knowing where to move is not enough: how
far to move can carry an update's entire cost.
\end{abstract}

\section{Introduction}
\label{sec:introduction}

An exact natural-gradient direction need not determine a useful finite update,
which also needs a step length. Natural-gradient and trust-region methods estimate the update direction with
care and usually take its length from the local quadratic model
\citep{Kakade2001,SchulmanTrustRegion2015,MartensGrosse2015}. 

\begin{figure}[!hb]
\centering
\input{figures/construction_geometry}
\vspace{-1em}
\caption{\textbf{Direction, calibration and sampling cost.}
\textbf{(a)} Schematic: in the four-state laws $P_\omega$ of
\cref{thm:matched-moment-main}, $\omega$ scales the rare top state's
probability. At $\omega=1$ and $\omega=3/2$, they share the gradient,
Fisher matrix and natural gradient $v$ at the initial parameters $\theta_0$, but their KL boundaries
(curves) and useful steps (shaded rings) differ.
\textbf{(b)} Exact gain $\Delta_\omega(t)$ of the step $t$ along $v$ over the best feasible
gain $\Delta_\omega^\star$, reached at the KL budget. Steps are useful where this ratio lies in
$[7/8,1]$ (shaded), and the two intervals are disjoint.
\textbf{(c)} Logistic controls with directions estimated from $m$ samples.
A useful step respects the budget and attains at least $90\%$ of the
globally optimal KL-feasible query gain. Exact KL calibration is useful in every trial.
A $95\%$ confidence rule reusing the samples is useful in $0.5\%$ to
$98.7\%$ of trials.}
\label{fig:construction-geometry}
\end{figure}

Under a
population Kullback--Leibler (KL) budget, the step length can be far off even
along the exact direction. In our four-state family, the quadratic step is
over $450$ times longer than any feasible step, and its KL is about
$8.6\times10^3$ nats against a budget of $1/10$
(\cref{sec:matched-local-calibration}). 
Fisher approximation and inversion
address direction recovery \citep{MartensGrosse2015,Martens2020}, and
step-selection analyses assume step-size or accuracy conditions
(\cref{sec:bg}). We instead ask what information a useful length requires
beyond an exact direction, and what acquiring it costs, a question that remains under-explored. 
We prove that knowing an 
\textit{exact direction}, even with \textit{efficient exact sampling}, does not make a useful
finite step cheap to find. 
Here the cost comes from rare states rather than
from dimension or conditioning, so better Fisher approximations cannot remove
it, and the sample lower bound holds for \textit{every procedure}, including line searches
and backtracking.

At initial parameters $\theta_0\in\mathbb R^d$, the natural gradient $v=F^{-1}b$ combines the
nonzero objective gradient $b$ and positive definite Fisher matrix $F$
\citep{Amari1998,Martens2020}. For a KL budget $\eta>0$, the quadratic trust-region model maximizes
$b^{\mathsf T}\xi$ over parameter displacements $\xi$ subject to
$\dfrac12\xi^{\mathsf T}F\xi\leq\eta$. Its optimizer is
$\xi=\sqrt{\dfrac{2\eta}{b^{\mathsf T}F^{-1}b}}\,v$.
Let $\mathcal K(u)$ and $\Delta(u)$ denote the actual population KL cost and
gain of parameters $u$.
Over step lengths $t\geq0$ along $v$, the best feasible gain is
\begin{equation}
 \Delta^\star:=\sup_{t\geq0,\ \mathcal K(\theta_0+tv)\leq\eta}
 \Delta(\theta_0+tv).
 \label{eq:calibration-problem}
\end{equation}
For $0<\varepsilon<1$, a step is \emph{useful} if it stays within budget and
attains at least $(1-\varepsilon)\Delta^\star$.
\emph{Calibration} selects such a length. A zero step fails the gain requirement
whenever $\Delta^\star>0$.

\noindent\textit{Local information does not determine the length.}
Our four-state family is a softmax distribution over statistic values
$0,1,2,3$, with their mean as objective, and a natural-gradient step shifts
the logits along these values. The outcomes can be the actions of a policy at one state, where the best
action is rarely chosen and the probabilities are known only through draws. At a fixed
scale $\pM$ of the rare-state probabilities, all laws in the family share the initial gradient,
scalar Fisher information and natural gradient, yet some pairs of laws have disjoint intervals of useful
steps (\cref{fig:construction-geometry}(b)). Four fixed statistic values are the
fewest that allow distinct laws with equal first two moments
(\cref{sec:matched-local-calibration}).

\noindent\textit{Calibration costs as much as estimating a rare probability.}
With exact local information, a two-point lower bound and a matching
confidence-calibration upper bound show that the sample complexity
$m^\star(\pM,\varepsilon,\delta)$, the fewest draws that yield a useful step
with probability at least $1-\delta$, is
$\Theta(\log(1/\delta)/(\pM\varepsilon^2))$, as for estimating a rare
probability \citep{DagumEtAl2000}. The rare best outcome's
tilted weight grows exponentially, so it can control finite-step KL while
contributing little to the initial Fisher information. With the budget held fixed,
the optimal gain stays above a positive constant as $\pM$ decreases, so this
cost does not come from a vanishing target.

\noindent\textbf{Contributions.}
(i) Matching upper and lower bounds characterize calibration in the
four-state family with exact local information (\cref{thm:matched-moment-main}).
(ii) For event tilts, an explicit confidence rule attains matching sample
bounds, and selecting a useful tilt parameter from a succinct description is
NP-hard, although the exact natural gradient is supplied and exact sampling is
efficient (\cref{thm:accuracy-calibration,cor:fixed-budget-hardness}).
(iii) Recovering the unit natural gradient of a two-parameter logistic model
is NP-hard from its description, although its Fisher matrix has condition
number at most $3$, and needs exponentially many draws when only samples are
available (\cref{thm:logistic-preconditioning}).
(iv) For bounded statistics, our sample bounds for estimating the damped
natural gradient match up to a logarithmic factor in the dimension, at small
damping and fixed confidence (\cref{prop:damped-sampling-main}).
(v) For affine classifiers with known feature bounds, independent calibration
draws certify the KL feasibility of a fixed set of candidate steps, even
when the chosen step depends on those draws
(\cref{sec:sampling,prop:affine-calibration-main}). (vi) Experiments match these predictions (\cref{sec:experiments}). With $m$
draws, count-simulation success at fixed $mp\varepsilon^2$ is similar across
event probabilities $p$, exact-KL calibration meets the logistic gain target
despite inaccurate directions, and a $10\%$ KL margin lifts joint success in
classifier heads from about $50\%$ to over $93\%$.

\section{Background \& Related Work}
\label{sec:bg}
\noindent\textbf{Preliminaries.}
A \emph{law} is a probability distribution. On a finite state space
$\mathcal S$, the \emph{base law} $P$ describes the model before updating, and
a \emph{draw} is an independent exact sample from it unless stated otherwise.
For statistics $h:\mathcal S\to\mathbb R^d$, exponential tilting defines
$P_\theta(s)=e^{\langle\theta,h(s)\rangle}P(s)/Z(\theta)$, where
$Z(\theta)=\E_P e^{\langle\theta,h\rangle}$ \citep{WainwrightJordan2008}.
The \emph{reference law} for the KL budget is $P$, so an update costs
$\KL(P\Vert P_\theta)=\E_P\log(P/P_\theta)$ nats.
Substituting the tilted density gives
$\KL(P\Vert P_\theta)=\log Z(\theta)-\langle\theta,\E_P h\rangle$.
The Fisher information matrix $F(\theta)=(g_{ij}(\theta))$, with
$g_{ij}(\theta)=\Cov_{P_\theta}(h_i,h_j)$, determines the quadratic term of KL
between nearby parameters \citep{Amari1982}. For $d=1$, it is the scalar Fisher
information. We write $\one\{E\}$ for the indicator of a condition $E$
and $\one_A(s)=\one\{s\in A\}$. For conditional models, population KL averages predictive KL over a reference
input law. The population Fisher matrix averages outer products
of the score over inputs and model-generated labels, whereas the empirical
Fisher matrix uses observed labels \citep{KunstnerEtAl2019}.

\noindent\textbf{A primer on complexity theory.}
Our results bound two resources, \emph{draws} and \emph{computation}, with one
tool for each. A \emph{procedure} is a randomized algorithm that uses an
independent seed and at most a fixed number of draws, and returns a measurable
output defined for every seed and draw sequence (\cref{app:procedures}). Every
step-selection rule in use is a procedure, from quadratic-model rescaling to
TRPO's backtracking line search. The \emph{sample complexity} of a problem is the
smallest number of draws with which some procedure attains the required
success probability under every law considered. \emph{Sample lower bounds}
allow unlimited computation and use \emph{two-point testing}
\citep{Tsybakov2009}. If two laws admit no common valid output, any procedure
that succeeds under both must tell them apart from its draws, which takes more
draws the closer the laws are in per-draw KL (\cref{lem:padding}).
\emph{Computational lower bounds} start from a \emph{succinct description}, here
a Boolean formula $\varphi$ with encoding $\langle\varphi\rangle$ that
specifies the law. \emph{Boolean satisfiability} (SAT) asks whether $\varphi$ is
satisfiable, and $\SATcount(\varphi)$ counts its satisfying assignments
\citep{Valiant1979}. A problem is \emph{NP-hard} if a polynomial-time algorithm
for it would decide SAT, and hence every problem in \emph{NP}
(nondeterministic polynomial time). A \emph{language} is a set of binary
strings, and a language in NP is \emph{NP-complete} if every language in NP
reduces to it by a polynomial-time \emph{many-one reduction}, which maps each
input to one with the same answer. \emph{BPP} (bounded-error probabilistic
polynomial time) contains problems that randomized polynomial-time
algorithms solve with error at most $1/3$ \citep{AroraBarak2009}. Each
reduction decides SAT from one valid output, so a randomized
polynomial-time procedure succeeding with probability at least $2/3$ on every
input would imply $\mathrm{NP}\subseteq\mathrm{BPP}$, which is believed false.

\noindent\textbf{Inference and sampling access.}
Evaluating the expected Fisher matrix involves inference in Bayesian networks
\citep{Kontkanen2000}, where exact and relative-error inference are hard
\citep{Cooper1990,DagumLuby1993}. Compiling a network into an arithmetic
circuit makes inference linear in the circuit's size, but this size can be
exponential in the size of the network's description \citep{Darwiche2003}. Our reductions allow efficient exact sampling and
target the returned direction or step itself.

\noindent\textbf{Step selection.}
Natural-policy-gradient analyses establish convergence with inexact policy
evaluation \citep{AgarwalEtAl2021,CenEtAl2022}.
Conservative policy iteration \citep{KakadeLangford2002} and adaptive
policy-gradient step sizes \citep{PirottaEtAl2013} maximize lower bounds on
improvement. Probabilistic \citep{MahsereciHennig2015} and stochastic
\citep{VaswaniEtAl2019} line searches use noisy function values and gradients.
Stochastic trust-region \citep{ChenMenickellyScheinberg2018,BlanchetEtAl2019}
and line-search analyses \citep{PaquetteScheinberg2020} impose probabilistic
accuracy conditions. \citet{JinEtAl2025} quantify oracle and sample costs for
such methods.

\noindent\textbf{Rare events and certification.}
Estimating a rare Bernoulli mass $p$ to relative error $\varepsilon$ at
failure probability $\delta$ has classical sample order
$\log(1/\delta)/(p\varepsilon^2)$ \citep{DagumEtAl2000}.
Our construction transfers this estimation cost to calibration. Our matching upper bounds control relative error using multiplicative concentration \citep{Hoeffding1963}
and Bernoulli-KL confidence intervals \citep{GarivierCappe2011}.
\citet{BuEtAl2018} study the related problem of estimating KL between unknown
discrete laws.
The equivalence of approximate counting and almost-uniform generation for
self-reducible problems \citep{JerrumEtAl1986} concerns sampling solutions. Our reductions instead sample uniformly from all
assignments, among which satisfying ones can be exponentially rare.
Empirical Bernstein bounds \citep{MaurerPontil2009} certify population KL
using its observed per-input variance and known range. High-confidence policy
improvement \citep{ThomasEtAl2015} likewise uses finite-sample confidence
bounds, but to certify a new policy's value rather than an update's KL.

\section{Complexity of Useful Updates}
\label{sec:core-result}

After fixing what makes an update useful, we ask whether exact local
information (\cref{sec:matched-local-calibration}), efficient exact sampling
(\cref{sec:fixed-budget-calibration}) or a well-conditioned Fisher matrix
(\cref{sec:logistic}) makes a useful step or direction cheap to find. The
answers are negative in the worst case, through sample lower bounds that hold
for every procedure, NP-hardness or both. \Cref{sec:sampling} then shows what
draws can still guarantee.

\begin{definition}[Useful update and calibration]
\label{def:useful}
Fix initial parameters $\theta_0\in\mathbb R^d$, an objective $J$, a reference
population and a declared update family $\mathcal U\subseteq\mathbb R^d$.
An example is the ray $\{\theta_0+tv:t\geq0\}$ for a fixed direction
$v\in\mathbb R^d$. For updated parameters $u\in\mathcal U$, the \emph{KL cost}
$\mathcal K(u)$ is the population KL divergence from the initial model to the
updated one, and the \emph{gain} is $\Delta(u)=J(u)-J(\theta_0)$. Given a budget
$\eta>0$, let
$\Delta^\star=\sup\{\Delta(u):u\in\mathcal U,\ \mathcal K(u)\leq\eta\}$. For
$0<\varepsilon<1$, an update $u$ is \emph{useful} if $\mathcal K(u)\leq\eta$ and
$\Delta(u)\geq(1-\varepsilon)\Delta^\star$. \emph{Calibration} is the problem of selecting a useful update from the available information. On the ray above, it seeks a feasible
$(1-\varepsilon)$-approximation to \cref{eq:calibration-problem}. \emph{Local information} consists of the
objective gradient, Fisher information matrix and natural gradient at $\theta_0$. An
\emph{access model} specifies which of these are supplied exactly, and whether draws or a succinct description of the base law are available.
\end{definition}

\subsection{Calibration complexity with exact gradient and scalar Fisher information}
\label{sec:matched-local-calibration}
We first ask whether exact local information determines a useful length. To
test this, we look for laws that agree on every quantity in the local linear
objective and quadratic KL models \citep{Martens2020}. When
the objective is the mean of a statistic $h$, its gradient and scalar Fisher
information both equal the variance of $h$, so such laws share the first two
moments of $h$. The family below meets this constraint with four states and
one free parameter $\omega$, which moves mass to a rare top state. The paragraph
after the theorem explains each constant.

\begin{theorem}[Calibration with exact gradient and scalar Fisher information]
\label{thm:matched-moment-main}
Fix $0<\pM\leq2^{-30}$, $0<\varepsilon\leq1/8$ and $0<\delta\leq1/3$.
For an unknown $\omega\in[1,3/2]$, let $h\in\{0,1,2,3\}$ have law
\begin{equation}
 P_\omega=\bigl(1-(5+\omega)\pM,\ (3\omega-2)\pM,\ (7-3\omega)\pM,\ \omega\pM\bigr).
 \label{eq:matched-moment-main}
\end{equation}
For $t\geq0$, define $Z_\omega(t)=\E_{P_\omega}e^{th}$,
$P_{\omega,t}(h)=e^{th}P_\omega(h)/Z_\omega(t)$,
$J_\omega(t)=\E_{P_{\omega,t}}h$ and scalar Fisher information
$F_\omega(t)=\Var_{P_{\omega,t}}h$. The gain and KL cost are
$\Delta_\omega(t)=J_\omega(t)-12\pM$ and
${\mathcal K}_\omega(t)=\KL(P_\omega\Vert P_{\omega,t})
=\log Z_\omega(t)-12\pM t$, and the best feasible gain is $\Delta_\omega^\star=\max_{t\geq0:\,{\mathcal K}_\omega(t)\leq1/10}\Delta_\omega(t)$.
Every law has $\E h=12\pM$ and $\E h^2=26\pM$. Its initial gradient and Fisher information satisfy
$J_\omega'(0)=F_\omega(0)=26\pM-144\pM^2>0$, and the natural gradient
$F_\omega(t)^{-1}J_\omega'(t)$ is $1$ for all $t\geq0$.
A procedure receives $\pM$, this family, $h$, the exact initial gradient and
scalar Fisher information, and the exact natural gradient. Its only further information about $\omega$ is
draws from $P_\omega$. It must return a scalar
$\widehat t\geq0$ satisfying
${\mathcal K}_\omega(\widehat t)\leq1/10$ and
$\Delta_\omega(\widehat t)\geq(1-\varepsilon)\Delta_\omega^\star$.
Requiring success with probability at least $1-\delta$ for every $\omega\in[1,3/2]$
gives sample complexity $\Theta(\log(1/\delta)/(\pM\varepsilon^2))$.
\end{theorem}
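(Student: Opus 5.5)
The plan has four steps: verify the algebraic claims, reduce calibration to locating the KL boundary $t^\star_\omega$, prove a two-sided sensitivity estimate for that boundary in $\omega$, and then get the lower bound from two-point testing and the upper bound from a confidence interval on the top-state count.

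\textbf{Algebraic claims.} Expanding \cref{eq:matched-moment-main} gives $\E h=\pM[(3\omega-2)+2(7-3\omega)+3\omega]=12\pM$ and $\E h^2=\pM[(3\omega-2)+4(7-3\omega)+9\omega]=26\pM$. Since $P_{\omega,t}$ is an exponential family in the natural parameter $t$, we have $J_\omega'(t)=\Var_{P_{\omega,t}}h=F_\omega(t)$. This gives $J'_\omega(0)=F_\omega(0)=26\pM-144\pM^2$, and the natural gradient equals $1$ for all $t$. The key identity is
\[
Z_\omega(t)=1+\pM\bigl[-2(e^t-1)+7(e^{2t}-1)\bigr]+\omega\pM(e^t-1)^3 ,
\]
because the $\omega$-coefficients $3,-3,1$ on $e^t,e^{2t},e^{3t}$ form $(e^t-1)^3$. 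So $\omega$ enters $Z$, and hence $\mathcal K_\omega$, linearly and increasingly. Consequently $\mathcal K_\omega(t)$ is strictly increasing in $\omega$ for $t>0$.

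\textbf{Reduction to the boundary.} Since $\mathcal K_\omega$ is convex with $\mathcal K_\omega(0)=0$ and is unbounded, and since $\Delta_\omega$ is increasing, $\Delta^\star_\omega=\Delta_\omega(t^\star_\omega)$, where $t^\star_\omega$ is the unique positive root of $\mathcal K_\omega(t)=1/10$. The useful set is therefore an interval $[t_\varepsilon(\omega),t^\star_\omega]$, and $t^\star_\omega$ is decreasing in $\omega$.

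\textbf{Scaling at the boundary.} For $\pM\le2^{-30}$, the boundary sits at $e^{3t}\pM=\Theta(1)$, so $t^\star_\omega\approx\tfrac13\log(1/\pM)$. The terms $12\pM t$ and $7\pM e^{2t}=O(\pM^{1/3})$ are then perturbations. At the boundary the tilted law puts constant mass $q_\omega(t)=\omega\pM e^{3t}/Z_\omega(t)$ on $h=3$, so $\Delta^\star_\omega\approx3q$ stays bounded below by a positive constant.

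\textbf{Main obstacle: two-sided sensitivity.} I need explicit constants $c_1,c_2>0$, uniform over $\pM\le2^{-30}$, such that:
\begin{itemize}
\item $\Delta_\omega(t^\star_{\omega'})\le(1-c_1(\omega'-\omega))\Delta^\star_\omega$ for $\omega<\omega'$ in $[1,3/2]$, so distinct laws lose gain at a linear rate;
\item $\Delta_\omega(t^\star_{\omega'})\ge(1-c_2(\omega'-\omega))\Delta^\star_\omega$, which the upper bound needs.
\end{itemize}
I would prove these by implicit differentiation: $\partial_\omega t^\star=-\partial_\omega\mathcal K/\partial_t\mathcal K$, with $\partial_\omega\mathcal K=\pM(e^t-1)^3/Z$ and $\partial_t\mathcal K=J_\omega(t)-12\pM$. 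Both are $\Theta(1)$ at the boundary, so $|\partial_\omega t^\star|=\Theta(1)$. Meanwhile $\partial_t\Delta=F_\omega(t)\approx9q(1-q)=\Theta(1)$ there. Combined with $\Delta^\star=\Theta(1)$, this gives the rates, with the lower-order terms controlled by the $2^{-30}$ threshold. This step is where most of the work lies, because the constants must hold uniformly over $\omega\in[1,3/2]$ and all small $\pM$. The specific thresholds $\varepsilon\le1/8$ and $\pM\le2^{-30}$ are presumably chosen exactly to make these inequalities clean.

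\textbf{Lower bound.} Take $\omega_0=1$ and $\omega_1=1+C\varepsilon$ with $C>1/c_1$. This is admissible because $C\varepsilon\le1/2$ for $\varepsilon\le1/8$ once $C\le4$; the constants must be arranged so this holds. By the first sensitivity bound, every feasible step for $\omega_1$, that is every $t\le t^\star_{\omega_1}$, has gain below $(1-\varepsilon)\Delta^\star_{\omega_0}$. Hence the two useful sets are disjoint. The per-draw divergence is $\KL(P_{\omega_0}\Vert P_{\omega_1})$, which compares the three rare coordinates, each of order $\pM$, shifted by $O(\varepsilon\pM)$. It is therefore $O(\pM\varepsilon^2)$, by the chi-square bound $\sum(\Delta P)^2/P$. \Cref{lem:padding} then gives $m\gtrsim\log(1/\delta)/(\pM\varepsilon^2)$.

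\textbf{Upper bound.} Let $N$ count the draws with $h=3$, so that $N\sim\mathrm{Bin}(m,\omega\pM)$ and $\pM$ is known. Multiplicative Chernoff bounds \citep{Hoeffding1963} give an interval $[\omega_{\mathrm{lo}},\omega_{\mathrm{hi}}]\ni\omega$, clipped to $[1,3/2]$, of width at most $\varepsilon/c_2$. This holds with probability at least $1-\delta$ once $m\ge C'\log(1/\delta)/(\pM\varepsilon^2)$, since the mean count $m\omega\pM\ge m\pM$.

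The procedure returns $\widehat t=t^\star_{\omega_{\mathrm{hi}}}$, which it computes exactly because $\mathcal K_{\omega'}$ is explicit for any candidate $\omega'$. This step is feasible because $\mathcal K$ is increasing in $\omega$ and $\omega\le\omega_{\mathrm{hi}}$. By the second sensitivity bound, $\Delta_\omega(\widehat t)\ge(1-c_2(\omega_{\mathrm{hi}}-\omega))\Delta^\star_\omega\ge(1-\varepsilon)\Delta^\star_\omega$. The two bounds together give $\Theta(\log(1/\delta)/(\pM\varepsilon^2))$.
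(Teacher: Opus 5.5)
Your proposal is correct and follows the paper's proof step for step: the $(e^t-1)^3$ identity that makes $\mathcal K_\omega$ increasing in $\omega$, the hard pair $\omega=1$ and $\omega=1+4\varepsilon$ whose useful intervals are separated by implicit differentiation of the KL boundary, the per-draw bound $\KL(P_1\Vert P_{1+4\varepsilon})<220\pM\varepsilon^2$ fed into two-point testing, and an upper confidence estimate of $\omega$ from the count of $h=3$ whose boundary is returned, with gain loss controlled by $|\partial_\omega T_\omega|\leq100/27$ and tilted variance at most $9/4$. The constants you leave open are exactly those the paper supplies ($J_\omega'>3/5$ on the useful interval, $27/100<\Delta_\omega^\star\leq7/20$, $-\partial_\omega T_\omega>1/5$), which bound the useful-interval width by $7\varepsilon/12$ and the boundary motion below by $4\varepsilon/5$, so $C=4$ works.
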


\begin{proof}[Proof sketch]
The candidate $t=\tfrac13\log(1/(10\omega\pM))$ gives the rare top state
unnormalized weight $1/10$. Bounding the other tilted weights proves
feasibility and $\Delta_\omega^\star>27/100$, uniformly over the family.
Gain and KL increase for $t>0$. Useful steps form
$[L_{\omega,\varepsilon},T_\omega]$, where $T_\omega$ exhausts the KL budget
and $L_{\omega,\varepsilon}$ attains the required gain fraction.
The identity
$Z_{\omega'}(t)-Z_\omega(t)=(\omega'-\omega)\pM(e^t-1)^3$
shows that the normalizers agree through second order at zero, but KL
increases with $\omega$ for $t>0$. Hence $T_\omega$ decreases.
On each useful interval, the bounds $J_\omega'(t)\geq3/5$ and
$\Delta_\omega^\star\leq7/20$ give
$T_\omega-L_{\omega,\varepsilon}\leq7\varepsilon/12$ by integration.
Changing $\omega$ from $1$ to $1+4\varepsilon$ moves the KL boundary by at
least $4\varepsilon/5$, exceeding this width. Hence $T_{1+4\varepsilon}<L_{1,\varepsilon}$, so the
intervals are disjoint.
Thresholding a useful output therefore distinguishes
two laws with identical local information and one-draw KL
$O(\pM\varepsilon^2)$, requiring
$\Omega(\log(1/\delta)/(\pM\varepsilon^2))$ draws.
For the upper bound, the frequency of $h=3$ gives an upper confidence
estimate of $\omega$. On the coverage event, its KL boundary is no larger
than the true one.
At the matching sample size, concentration controls coverage and estimation
error, while boundary sensitivity bounds the lost gain by
$\varepsilon\Delta_\omega^\star$.
\Cref{app:matched-moments} gives the proofs.
\end{proof}

\noindent\textbf{Why four states and these constants?}
On at most three fixed, distinct statistic values,
normalization and the first two moments determine the law, since the
corresponding Vandermonde system has full column rank. On $\{0,1,2,3\}$, they
leave one free direction, $\pM(-1,3,-3,1)$, the third finite difference, which
is orthogonal to $1$, $h$ and $h^2$. Varying $\omega$ moves $P_\omega$ along it,
so every law has the same first two moments and local quantities, while the
rare top state's mass $\omega\pM$ grows with $\omega$. Its tilted weight
dominates finite-step KL, so $\omega$ shifts the KL boundary and the useful
steps. The range
$\omega\in[1,3/2]$ keeps every rare mass between $\pM$ and $4\pM$, and
$\varepsilon\leq1/8$ keeps the hard pair $\omega=1$ and $\omega=1+4\varepsilon$
inside it. The scale $\pM\leq2^{-30}$ keeps the middle states' tilted weights
negligible near the KL boundary, and the fixed budget $1/10$ keeps the target
from shrinking with $\pM$. Both values only make the constants explicit, and
$\delta\leq1/3$ is the usual regime for testing bounds.
Beyond the construction, the quadratic KL model
\citep{Amari1998,Martens2020} behind trust-region steps
\citep{SchulmanTrustRegion2015} uses only the statistic's covariance, whereas
finite-step KL depends on its whole distribution. Any law
with positive mass on at least four values of a scalar statistic has nearby
laws with the same first two moments but different finite-step KL.

\noindent\textit{Consequence for local step rules.}
A rule using only the supplied local information cannot depend on $\omega$.
Each returned step is useful under at most one law of each hard pair. At
$\pM=2^{-30}$, the quadratic-model length $\sqrt{\dfrac{2\eta}{b^{\mathsf T}F^{-1}b}}$
of \cref{sec:introduction} is about $2.9\times10^3$ for every $\omega$, with
finite KL of about $8.6\times10^3$ nats. The KL boundaries $T_\omega$ instead
lie between $6.04$ and $6.18$. This step misses them because the rare
state enters the initial curvature $F_\omega(0)<26\pM$ only through its
probability, whereas its tilted weight $\omega\pM e^{3t}$ grows exponentially until KL
rises sharply near the budget. \Cref{app:quadratic-finite-kl}
shows the same failure for a binary tilt.

\subsection{Complexity of computing useful steps at a fixed KL budget}
\label{sec:fixed-budget-calibration}

The four-state result concerns draws. We next ask whether a useful step is also
hard to compute when sampling is easy and the direction is exact. The simplest
such model tilts the base law toward one event and maximizes the
event's probability. Its natural gradient is identically $1$, so the direction
is known, while the useful step lengths depend on the unknown event mass.
\Cref{thm:accuracy-calibration} shows that calibration again costs as much as
estimating this mass, and \cref{cor:fixed-budget-hardness} encodes
satisfiability in it.
\begin{theorem}[Sample complexity of useful event-tilt selection]
\label{thm:accuracy-calibration}
Set $p_0=2^{-10}$. Let $P_0$ be a finite law and $A$ a known event with
unknown mass $r=P_0(A)\in(0,p_0]$. For $t\geq0$, tilt the law to
$P_t(s)=e^{t\one_A(s)}P_0(s)/(1-r+re^t)$ and maximize $J(t)=P_t(A)$.
Define the event probability, KL cost, gain and optimal feasible gain by
\begin{equation}
 \begin{aligned}
 q_r(t)&=\frac{re^t}{1-r+re^t},&
 \mathcal K_r(t)&=\KL(P_0\Vert P_t)=\log(1-r+re^t)-rt,\\
 \Delta_r(t)&=q_r(t)-r,&
 \Delta_r^\star&=\max_{t\geq0:\,\mathcal K_r(t)\leq1/10}\Delta_r(t).
 \end{aligned}
 \label{eq:event-calibration-main}
\end{equation}
A procedure receives the event, the tilted family and the exact natural
gradient $1$, and otherwise only draws from $P_0$. It returns a scalar step
$t\geq0$. For $0<\varepsilon\leq1/4$, this step is useful at mass $r$ when
$\mathcal K_r(t)\leq1/10$ and
$\Delta_r(t)\geq(1-\varepsilon)\Delta_r^\star$.
Fix $0<p\leq p_0/2$ and $0<\delta\leq1/3$.

\noindent\textup{(a) Lower bound.}
Consider Bernoulli base laws with masses $p$ and $(1+4\varepsilon)p$ and event
$A=\{1\}$. A procedure takes at most $m$ draws. If it returns a useful step
with probability at least $1-\delta$ under both laws, then
$m=\Omega(\log(1/\delta)/(p\varepsilon^2))$.

\noindent\textup{(b) Upper bound.}
Let $\widehat p$ be the event frequency in $m\geq1$ draws.
Use Bernoulli-KL inversion \citep{GarivierCappe2011} to form
$\mathcal I_m=\{r\in[0,1]:m\binaryKL(\widehat p\Vert r)\leq\log(2/\delta)\}$, where
$\binaryKL(a\Vert b)=a\log(a/b)+(1-a)\log((1-a)/(1-b))$
with boundary values defined by limits. Choose the largest $t\geq0$ such that
$\mathcal K_r(t)\leq1/10$ for every $r\in\mathcal I_m$, and round downward by
at most $\varepsilon/32$ to a nonnegative step. For every finite base law
with event mass $p$, this step is useful with probability at least $1-\delta$
whenever $m$ exceeds a universal constant multiple of
$\log(1/\delta)/(p\varepsilon^2)$. The constants in both bounds are uniform
over the stated parameter ranges.
\end{theorem}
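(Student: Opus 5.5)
The plan follows the proof sketch of \cref{thm:matched-moment-main}, now in the simpler binary-event geometry. First I will record the monotonicity facts. For $t>0$ we have $\mathcal K_r'(t)=q_r(t)-r>0$ and $\Delta_r'(t)=q_r(t)(1-q_r(t))>0$. Hence the optimum is attained at the KL boundary $T_r$, the unique $t>0$ with $\mathcal K_r(T_r)=1/10$, and the useful steps form an interval $[L_{r,\varepsilon},T_r]$. Next, $\partial_r\mathcal K_r(t)=(e^t-1)/(1-r+re^t)-t>0$ at the relevant $t$, because $e^t-1\geq t(1-r+re^t)$ for $r\leq p_0$ and $t$ up to about $\log(1/r)$. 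Thus $T_r$ decreases in $r$.

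For $r\leq p_0$, I will pin down the boundary region explicitly. Write $x=re^t$. Then $\mathcal K_r\approx\log(1+x)-rt$, so $T_r\approx\log((e^{1/10}-1)/r)$ up to corrections of order $r\log(1/r)$, which $p_0=2^{-10}$ makes explicit. This gives $q_r(T_r)\approx1-e^{-1/10}\approx0.095$, so $\Delta_r^\star$ lies between two absolute constants, say $[0.08,0.1]$. On $[L_{r,\varepsilon},T_r]$, $\Delta_r'=q(1-q)$ is bounded below by a constant $c$, so integration gives $T_r-L_{r,\varepsilon}\leq\varepsilon\Delta_r^\star/c$, which is $C_1\varepsilon$. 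In the other direction, $\Delta_r'\leq q\leq0.1$, so the gain lost by stepping back $s$ below $T_r$ is at most $0.1s$. The boundary sensitivity then follows from implicit differentiation: $dT_r/d\log r=-(\partial_r\mathcal K)r/\mathcal K'$, which is $-1+O(\text{small})$ because $\mathcal K'=q-r\approx0.095$ and $r\partial_r\mathcal K\approx q-rt$. So $T_r-T_{r'}\approx\log(r'/r)$.

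For the lower bound (a), $\log(1+4\varepsilon)\geq(4\log 5/4)\varepsilon\approx0.89\varepsilon$ for $\varepsilon\leq1/4$. I will check $C_1<0.8$ with explicit constants, which gives $T_{(1+4\varepsilon)p}<L_{p,\varepsilon}$, so no single step is useful under both laws. Because $p\leq p_0/2$, the second mass stays at most $3p\leq p_0$. Then \cref{lem:padding} applies: thresholding the output at $T_{(1+4\varepsilon)p}$ yields a test with both error probabilities at most $\delta$. Hence $m\,\KL(\Bern(p)\Vert\Bern((1+4\varepsilon)p))\geq\log(1/(4\delta))$-type bounds, and this per-draw KL is $O(p\varepsilon^2)$, giving the bound.

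For the upper bound (b), on the coverage event $r=p\in\mathcal I_m$, which has probability at least $1-\delta$ by the Chernoff/KL-inversion bound, the chosen step is $\min_{r\in\mathcal I_m}T_r=T_{\sup\mathcal I_m}\leq T_p$, so it is feasible, and rounding down only preserves feasibility. For the gain, I will show that $m\geq C\log(1/\delta)/(p\varepsilon^2)$ forces $\sup\mathcal I_m\leq(1+c\varepsilon)p$ with probability $1-\delta$, using multiplicative Chernoff for $\widehat p$ together with $\binaryKL(a\Vert b)\gtrsim(b-a)^2/b$. Applying a union bound at level $\delta/2$ each handles both events. The lost step is then at most $\log(1+c\varepsilon)+\varepsilon/32$, and the lost gain is at most $0.1$ times that, which is below $\varepsilon\Delta_r^\star\geq0.08\varepsilon$ once $c$ is small. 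The case $\sup\mathcal I_m>p_0$ needs care, since the ordering of $T_r$ is then only claimed on $(0,p_0]$. I will either extend the monotonicity or note that large $m$ rules this case out.

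I expect the main obstacle to be the explicit constants: proving $C_1<4\log(5/4)$ and the lower bound on the gain rate, with honest error terms in $T_r$ at $r\leq2^{-10}$. I would try first to bound $q$ on the useful interval numerically between $0.085$ and $0.096$ and use crude $O(r\log(1/r))$ error bounds.
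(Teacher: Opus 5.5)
Your plan follows the same structure as the paper's proof. Monotone cost and gain put the useful steps in an interval $[L_{p,\varepsilon},T_p]$. The boundary is differentiated implicitly, and its motion between $p$ and $(1+4\varepsilon)p$ is compared with the interval's width. The upper bound then combines coverage, boundary sensitivity and a rounding allowance. However, the separation step in (a) cannot be closed with the constants you chose.

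\textbf{The width target is false.} Since $q_p^\star=p+\Delta_p^\star\geq0.09$, the mean value theorem for $\logit$ gives
\[
 T_p-L_{p,\varepsilon}
 =\logit(q_p^\star)-\logit\bigl(p+(1-\varepsilon)\Delta_p^\star\bigr)
 \geq\frac{\varepsilon\Delta_p^\star}{q_p^\star(1-q_p^\star)}
 =\frac{\varepsilon(1-p/q_p^\star)}{1-q_p^\star}>1.09\,\varepsilon,
\]
so $C_1<0.8$ is impossible.

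\textbf{The cause is on the boundary-motion side.} $4\log(5/4)$ is the chord slope of $\log(1+\varepsilon)$ on $[0,1/4]$, not of $\log(1+4\varepsilon)$. Your motion bound of about $0.89\varepsilon$, times a slope near $0.92$, is therefore roughly three times too small and lies below the true width. The repair is as follows.
\begin{itemize}
\item Concavity gives $\log(1+4\varepsilon)\geq4\varepsilon\log2\geq2\varepsilon$ on $[0,1/4]$.
\item The quantitative slope bound $-\dd T/\dd\log p>0.923$ follows from $T(p)<\log(1/p)$, $p\log(1/p)<7/1024$ and $\Delta_p^\star\geq0.09-p$. With the previous item, the boundary moves by more than $1.846\varepsilon$.
\item This beats the paper's width bound $\varepsilon/((1-\varepsilon)(1-q_p^\star))<1.489\varepsilon$, obtained by integrating $\logit'$ between the two event-probability targets.
\item Your cruder bound $\varepsilon\Delta_p^\star/\min q(1-q)$ also suffices once you fix the range of $q$ on the useful interval. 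At $\varepsilon=1/4$ its lower end is $p+\tfrac34\Delta_p^\star\approx0.07$, not $0.085$. That gives a width bound of about $1.6\varepsilon$, still below $1.846\varepsilon$.
\end{itemize}

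\textbf{Three smaller repairs.}
\begin{itemize}
\item The second mass satisfies $(1+4\varepsilon)p\leq2p\leq p_0$. The bound $3p\leq p_0$ does not follow from $p\leq p_0/2$.
\item The statement asks for constants uniform over $0<\delta\leq1/3$, but a bound of the form $\log(1/(4\delta))$ is nonpositive for $\delta\geq1/4$. Apply data processing to the threshold test instead. This gives $m\,\binaryKL(p\Vert(1+4\varepsilon)p)\geq(1-2\delta)\log((1-\delta)/\delta)\geq\tfrac16\log(1/\delta)$, and $\binaryKL(p\Vert(1+4\varepsilon)p)\leq16p\varepsilon^2/(1-p_0)$.
\item In (b), coverage already uses the full $\delta$. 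A union bound with a further Chernoff event at level $\delta/2$ therefore gives failure probability $3\delta/2$, unless you note that feasibility uses only the lower tail.
\end{itemize}

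The paper avoids a second event. On the coverage event, both $p$ and $p_{\mathrm U}=\max\mathcal I_m$ lie in $\mathcal I_m$. The bound $\binaryKL(a\Vert b)\geq(\sqrt a-\sqrt b)^2+(\sqrt{1-a}-\sqrt{1-b})^2$ and the triangle inequality then give $\sqrt{p_{\mathrm U}}\leq\sqrt p+2\sqrt{\log(2/\delta)/m}$. Hence $p_{\mathrm U}\leq(1+\varepsilon/4)p<p_0$ once $m\geq1024\log(2/\delta)/(p\varepsilon^2)$. This also settles your concern about $\sup\mathcal I_m>p_0$, and your identity $\widehat t_m=T_{p_{\mathrm U}}$ then holds.
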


Since $p\leq p_0/2$, the condition $\varepsilon\leq1/4$ keeps the hard pair
$p$ and $(1+4\varepsilon)p$ inside $(0,p_0]$, while $p_0=2^{-10}$ and the
finite-precision rounding by $\varepsilon/32$ only make the constants explicit
(\cref{app:accuracy-calibration}). As $p\downarrow0$, the optimal gain at any
fixed budget $\eta>0$ in place of
$1/10$ tends to $1-e^{-\eta}$. By the KL chain rule, restricting updates to event tilts preserves the
best feasible event-probability gain over all laws on the base support
(\cref{app:fixed-budget-bounds-proof}).
Supplying the initial gradient or scalar Fisher information would
reveal the event mass, because both equal $p(1-p)$, which increases strictly on
the stated range.

\begin{proof}[Proof sketch]
Disjoint useful-step intervals reduce the lower bound to Bernoulli testing.
For the upper bound, the worst-case KL at $t>0$ is attained at
$r=1/t-1/(e^t-1)$, clipped to $\mathcal I_m$.
Doubling and bisection locate the $1/10$ boundary.
Coverage ensures feasibility, while concentration and boundary sensitivity
bound the gain lost to conservatism and rounding. \Cref{app:accuracy-calibration} gives the proof,
and \cref{thm:fixed-budget-calibration} shows that even three quarters of the
optimal gain requires $\Omega(1/p)$ draws.
\end{proof}

\noindent\textit{From event rarity to computational hardness.}
To turn this sample cost into computational hardness, we map a Boolean formula
$\varphi$ in polynomial time to an event predicate whose mass under the uniform
law encodes $\SATcount(\varphi)$. An always-accepted anchor state keeps the mass
positive, and satisfiability at least doubles it. Eleven tag bits keep the mass
below $p_0$ (\cref{app:sat-encoding}), and the gain target $3/4$ is the most
lenient one \cref{thm:accuracy-calibration} covers. Padding with unused
variables preserves satisfiability, so requiring $n\geq3$ loses no generality.

\begin{corollary}[Computational hardness of useful event-tilt selection]
\label{cor:fixed-budget-hardness}
Let $\varphi$ be a Boolean formula on $n\geq3$ variables and let $P_0$ be
uniform on $\{0,1\}^{11}\times\{0,1\}^n$. Define the event
\[
 A_\varphi=\{(b,x):b=0^{11},\ \varphi(x)=1\}
 \cup\{(1^{11},0^n)\}.
\]
With $\DB=2^{n+11}$, its probability is
$p_\varphi=(1+\SATcount(\varphi))/\DB<2^{-10}$.
For $t\geq0$, use the tilt
$P_{\varphi,t}(s)\propto e^{t\one_{A_\varphi}(s)}P_0(s)$ and objective
$J_\varphi(t)=P_{\varphi,t}(A_\varphi)$.
Its natural gradient equals $1$, and exact base sampling and event
membership are polynomial-time computable from $\langle\varphi\rangle$.

The input is the formula encoding $\langle\varphi\rangle$ and the exact
natural gradient. The required output is a nonnegative rational step
$\widehat t$ whose bit length is polynomial in the input length. It must
satisfy $\mathcal K_{p_\varphi}(\widehat t)\leq1/10$ and
$\Delta_{p_\varphi}(\widehat t)\geq(3/4)\Delta_{p_\varphi}^\star$,
with KL cost and gain as in \cref{eq:event-calibration-main}.
This problem is NP-hard. Unless $\mathrm{NP}\subseteq\mathrm{BPP}$
\citep{AroraBarak2009}, no
polynomial-time procedure with these inputs and draws returns such a step with probability at least $2/3$ on every instance.
\end{corollary}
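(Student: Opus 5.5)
The plan is a polynomial-time many-one reduction from SAT. Given $\langle\varphi\rangle$, any valid output $\widehat t$ decides whether $\varphi$ is satisfiable. If $\varphi$ is unsatisfiable, then $p_\varphi=p_-:=1/\DB$. If it is satisfiable, then $p_\varphi\geq 2/\DB=2p_-$. I will show that the sets of $(3/4)$-useful steps at all masses $r\geq 2p_-$ lie strictly on one side of those at $p_-$.

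First, the routine facts. The natural gradient is $J'(t)/\Var_{P_t}\one_A=1$, since $J'(t)=q(1-q)$ is the Fisher information of a one-parameter exponential family in $\one_A$. Sampling from $P_0$ takes $n+11$ fair coins, and membership in $A_\varphi$ is one formula evaluation plus a tag check. The bound $p_\varphi<2^{-10}$ follows from $1+\SATcount(\varphi)\leq 1+2^n<2^{n+1}$, so $p_\varphi<2^{n+1}/2^{n+11}$.

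The key step is monotonicity in $r$, following the four-state argument. For fixed $t>0$, the quantity $\partial_r\mathcal K_r(t)=(e^t-1)/(1-r+re^t)-t$ is positive for $r$ below the worst-case point $1/t-1/(e^t-1)$. That point stays bounded away from $0$ in the relevant range of $t$, around $t\approx\log(1/(10r))$ scale, which needs checking. So for $r\leq p_0$ the boundary $T_r$ (solving $\mathcal K_r(T_r)=1/10$) is decreasing in $r$. Approximately, $\mathcal K_r(t)\approx\log(1+re^t)$, so $T_r\approx\log((e^{1/10}-1)/r)$. Hence $T_{2r}\approx T_r-\log 2$.

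Next I bound the width of the useful interval $[L_r,T_r]$ at gain fraction $3/4$. Since $\Delta_r(t)=q_r(t)-r$, we have $q_r(T_r)\approx 1-e^{-1/10}\approx0.095$. Requiring $q_r(L_r)\geq (3/4)\cdot0.095$ with $q_r(t)\approx re^t/(1+re^t)$ gives $T_r-L_r\approx\log(4/3)+O(q)$, which is about $0.29$ plus small corrections. This is less than $\log 2\approx0.69$. Therefore $T_{2p_-}<L_{p_-}$, and by monotonicity $T_r<L_{p_-}$ for every $r\geq2p_-$. I will make these approximations rigorous with explicit error terms of order $r\leq2^{-10}$, or reuse the interval estimates from \cref{thm:accuracy-calibration}. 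This is the main obstacle: getting explicit constants so that the gap survives, uniformly for $r\leq p_0$.

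Given that separation, the decision rule is as follows. Compute a rational threshold $\tau$ strictly between $T_{2p_-}$ and $L_{p_-}$. Because $\DB=2^{n+11}$, we have $\tau=\log(c\,\DB)$ up to constants, so instead compare $e^{\widehat t}$ with $c\,\DB$ using exact rational arithmetic on $\widehat t$. Equivalently, compare $\widehat t$ with a rational approximation of $\log(c\DB)$ computed to polynomially many bits; the gap of constant size makes this easy. Output "unsatisfiable" iff $\widehat t\geq\tau$. This is a polynomial-time reduction, so the problem is NP-hard.

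For the randomized statement, a polynomial-time procedure may take polynomially many draws and succeed with probability at least $2/3$. The draws can be simulated in polynomial time from $\langle\varphi\rangle$. Composing the procedure with the threshold test then gives a BPP algorithm for SAT, so $\mathrm{NP}\subseteq\mathrm{BPP}$.
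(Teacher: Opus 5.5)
Your proposal is correct and uses the same reduction as the paper: each draw is simulated with $n+11$ fresh fair bits, the procedure runs once, and the returned step is thresholded near $\log(c\,\DB)$. The two arguments differ in the separation lemma.

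The paper fixes the explicit threshold $\tsepB=\log(\DB/16)$. It checks directly that $q_{1/\DB}(\tsepB)<3/50$, so useful outputs in the unsatisfiable case exceed $\tsepB+0.03$. It then covers the whole satisfiable range $[2/\DB,p_0]$ by concavity of $r\mapsto\mathcal K_r(\tsepB)$, which reduces the minimum cost to the two endpoints, both above $0.1079$.

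You instead compare boundary motion with the width of the useful interval. This is the $\varepsilon=1/4$ case ($1+4\varepsilon=2$) of the argument for \cref{thm:accuracy-calibration}. You then extend from $2p_-$ to every satisfiable mass by monotonicity of $T_r$. The constants you defer are already in that proof. \eqref{eq:accuracy-boundary-derivative} gives $-\dd T/\dd\log p>0.923$ on all of $(0,p_0]$. This yields monotonicity and also $T_{p_-}-T_{2p_-}>0.923\log2>0.63$. \eqref{eq:accuracy-success-width} gives $T_{p_-}-L_{p_-}<0.373$, so $L_{p_-}-T_{2p_-}>0.26$.

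One correction to your monotonicity check: the maximizer $1/t-1/(e^t-1)$ is not bounded away from zero uniformly in $n$, because $T_r$ grows like $\log(1/r)$. What you actually need is $r<1/T_r-1/(e^{T_r}-1)$. This holds because $rT_r\leq r\log(1/r)<7/1024$.

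Your route gives a wider margin and ties the corollary directly to the two-point structure of the sample bound. However, separating $T_{2p_-}$ from $L_{p_-}$ does not by itself give a computable $\tau$. You still have to locate both quantities uniformly in $n$. One way is $T_r=\logit(q_r^\star)-\logit(r)$ with $q_r^\star\in[0.09,0.104]$, together with $L_{p_-}\geq\logit(0.0675)-\logit(p_-)$. This essentially recovers the paper's choice $c=1/16$. The paper's fixed threshold gets explicit uniform margins from two endpoint evaluations, with no implicit differentiation.

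Finally, add the short check the paper includes: a useful step of polynomial bit length always exists, for example $\log(1+1/(10p_\varphi))$ rounded down. Without it the search problem need not be total, and the hardness claim could hold only vacuously.
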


\begin{proof}[Proof sketch]
We run a hypothetical calibration procedure on the model encoded by
$\varphi$ and compare its returned step with the threshold
$\tsepB=(n+7)\log2$, which is computable from $n$. Useful outputs at the
unsatisfiable mass $1/\DB$ exceed $\tsepB+0.03$, whereas, by concavity of the
KL cost in the event mass, feasible outputs at every satisfiable mass
$p_\varphi\geq2/\DB$ lie below $\tsepB-0.0079$. A polynomial-time calibration
procedure would therefore decide SAT. A rational threshold accurate to
$10^{-3}$ suffices, and each base draw uses $n+11$ fair bits.
\Cref{app:fixed-budget-bounds-proof} proves the finite-precision separation
and efficient simulation of exact draws.
\end{proof}

\subsection{Complexity of natural-gradient direction recovery}
\label{sec:logistic}
The preceding results take the direction as given. We now ask whether the
direction itself is easy to recover when the Fisher matrix is well
conditioned. A bounded condition number controls only the
ratio of eigenvalues, and rare informative observations can still make both
eigenvalues small. In the construction below, the covariate $(1,1)^{\mathsf T}$
couples the two Fisher coordinates, so the unit natural gradient turns as the
number of satisfying assignments grows and thereby encodes satisfiability. Here $\DL=2^{n+5}$ counts the $n$ formula bits, three tag
bits and two branch bits (\cref{app:sat-encoding}).

\begin{theorem}[Direction recovery with a well-conditioned Fisher matrix]
\label{thm:logistic-preconditioning}
Fix a Boolean formula $\varphi$ on $n\geq3$ variables. Write
$K=1+\SATcount(\varphi)$ and $\DL=2^{n+5}$. With the standard basis $e_1,e_2$ of $\mathbb R^2$, let $X$ have law $Q_\varphi$ on
$\{0,e_1,e_2,(1,1)^{\mathsf T}\}$ given by
\begin{equation}
 \begin{aligned}
 Q_\varphi(e_1)=Q_\varphi(e_2)&=\frac K{\DL},&
 Q_\varphi((1,1)^{\mathsf T})&=\frac1{\DL},\\
 Q_\varphi(0)&=1-\frac{2K+1}{\DL}.
 \end{aligned}
 \label{eq:logistic-model}
\end{equation}
Consider the binary logistic model
$\pi_\theta(Y=1\mid X)=\sigm(\theta^{\mathsf T}X)$, where
$\theta\in\mathbb R^2$ and $\sigm(t)=(1+e^{-t})^{-1}$, with query objective
$\ell(\theta)=\log\sigm(\theta_1)$.
Let $F_\varphi^{\mathrm L}(\theta)$ be the population Fisher information matrix, averaging over
$X\sim Q_\varphi$ and $Y\sim\pi_\theta(\cdot\mid X)$.
At $\theta=0$, the exact gradient $b=\nabla\ell(0)=e_1/2$ is supplied in both recovery
problems below. Define $v_\varphi^{\mathrm L}=F_\varphi^{\mathrm L}(0)^{-1}b$
and the unit natural gradient
$\nu_\varphi^{\mathrm L}=v_\varphi^{\mathrm L}/\|v_\varphi^{\mathrm L}\|_2$.
Given $\varphi$, draws at zero, with law $Q_\varphi\otimes\Bern(1/2)$, are
polynomial-time computable.

\noindent\textup{(a) Fisher matrix and direction.} For every such formula,
\begin{equation}
 F_\varphi^{\mathrm L}(0)=\frac1{4\DL}
 \begin{pmatrix}K+1&1\\1&K+1\end{pmatrix},\qquad
 \nu_\varphi^{\mathrm L}=\frac{(K+1,-1)^{\mathsf T}}{\sqrt{(K+1)^2+1}}.
 \label{eq:logistic-fisher-direction}
\end{equation}
The spectral condition number satisfies
$\kappa_2(F_\varphi^{\mathrm L}(0))=1+2/K\leq3$, and
$\lambda_{\min}(F_\varphi^{\mathrm L}(0))=\Theta(2^{-n})$ for $K=1,2$.

\noindent\textup{(b) Direction recovery from the description.}
Consider the problem whose input consists of $(1^n,\langle\varphi\rangle)$
and $b$, where $1^n$ encodes $n$ in unary. It asks for a polynomial-length
rational vector $\widehat\nu$ with
$\|\widehat\nu-\nu_\varphi^{\mathrm L}\|_2\leq1/32$.
This problem is NP-hard. Unless $\mathrm{NP}\subseteq\mathrm{BPP}$
\citep{AroraBarak2009}, no
polynomial-time procedure achieves this accuracy with probability at least
$2/3$ on every input, even with draws at $\theta=0$.

\noindent\textup{(c) Sample-only direction recovery.}
Consider the two laws with $K=1$ and $K=2$. A procedure receives $n$, this family and $b$, and learns which law holds only
from draws $(X,Y)$ at $\theta=0$. Any procedure that takes at
most $m$ draws and returns $\widehat\nu$ with
$\|\widehat\nu-\nu_\varphi^{\mathrm L}\|_2\leq1/32$ with probability at
least $2/3$ under each law must have $m\geq\DL/6$.

\noindent\textup{(d) Coordinate-threshold decision problem.}
The language of valid encodings $(1^n,\langle\varphi\rangle)$, with $n\geq3$
and $\varphi$ on $n$ variables, for which
$(\nu_\varphi^{\mathrm L})_2>-3/8$ consists exactly of the satisfiable
instances and is NP-complete under polynomial-time many-one reductions.
\end{theorem}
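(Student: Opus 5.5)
Part (a) is a direct computation, and I would do it first because (b)--(d) all rest on the closed form for $\nu_\varphi^{\mathrm L}$. At $\theta=0$ the logistic Fisher matrix is $\E_{X\sim Q_\varphi}[\sigm'(0)XX^{\mathsf T}]=\tfrac14\E XX^{\mathsf T}$. The four covariates contribute $0$, $e_1e_1^{\mathsf T}$, $e_2e_2^{\mathsf T}$ and $(1,1)(1,1)^{\mathsf T}$ with weights $K/\DL$, $K/\DL$ and $1/\DL$, which gives the displayed matrix. Its inverse is proportional to $\begin{pmatrix}K+1&-1\\-1&K+1\end{pmatrix}$ with a positive factor, since the determinant is $K(K+2)>0$. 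Applying it to $b=e_1/2$ and normalizing gives $\nu_\varphi^{\mathrm L}$. The eigenvalues are $(K+2)/(4\DL)$ along $(1,1)$ and $K/(4\DL)$ along $(1,-1)$, so $\kappa_2=1+2/K\le3$ and $\lambda_{\min}=K/(4\DL)=\Theta(2^{-n})$ for $K\le2$. For polynomial-time sampling, I would realize $Q_\varphi$ by drawing $n+5$ uniform bits and routing them through the encoding of \cref{app:sat-encoding}. Tag and branch patterns send satisfying assignments to $e_1$ or $e_2$ and one anchor pattern to $(1,1)$, with everything else mapped to $0$. Then $Y$ is an independent fair bit.

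For (d), the key quantity is the second coordinate, $-1/\sqrt{(K+1)^2+1}$, which increases strictly in $K$. At $K=1$ it equals $-1/\sqrt5\approx-0.447<-3/8$. At $K\ge2$ it is at least $-1/\sqrt{10}\approx-0.316>-3/8$. So the language is exactly the set of satisfiable encodings. Membership in NP uses a satisfying assignment as the witness, and SAT reduces to it by $\varphi\mapsto(1^n,\langle\varphi\rangle)$, after padding to $n\ge3$. This many-one reduction preserves the answer, so the language is NP-complete.

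For (b), I would reduce from SAT. Run a hypothetical solver and accept iff $\widehat\nu_2>-3/8$. This test is correct because the margins from the threshold, $0.072$ at $K=1$ and $0.059$ at $K=2$, both exceed $1/32\approx0.031$. For $K\ge3$ the coordinate only moves further from the threshold, and $\|\widehat\nu-\nu\|_2\le1/32$ bounds the coordinate error. Draws are simulable in polynomial time, so a randomized solver succeeding with probability $2/3$ gives a BPP algorithm for SAT.

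For (c), I would use two-point testing via \cref{lem:padding}. The $1/32$-balls around the $K=1$ and $K=2$ directions are disjoint, because their second coordinates differ by about $0.131>1/16$. Thresholding the output therefore tests the two laws with error at most $1/3$ under each. The $Y$-marginal is identical under both, so the per-draw KL equals $\KL(Q_1\Vert Q_2)$. On $e_1$ and $e_2$ this contributes $2\cdot\tfrac1{\DL}\log\tfrac12$, the $(1,1)$ term is zero, and on $0$ the contribution is about $\tfrac2{\DL}$ plus second-order terms. I would bound it by a chi-square bound of order $c/\DL$, and indeed a direct computation gives $\chi^2\approx 2/\DL$. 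Error $1/3$ requires $m\cdot\KL\ge\mathrm{kl}(1/3\Vert2/3)=\tfrac13\log2\approx0.231$, which yields $m\ge\DL/6$ up to checking constants.

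The main obstacle is landing exactly on the constant $1/6$. I would first try Bretagnolle--Huber or the binary-KL data-processing bound with the exact $\chi^2$ value $\sum(Q_1-Q_2)^2/Q_2=\tfrac{1}{2\DL}+\tfrac{1}{2\DL}+O(\DL^{-2})=\tfrac1{\DL}(1+o(1))$. That gives $m\ge0.231\,\DL$, which beats $\DL/6$, using $n\ge3$ to absorb the lower-order terms.
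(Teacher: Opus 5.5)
Your arguments for (a), (b) and (d) match the paper's. You compute $F=\tfrac14\E XX^{\mathsf T}$, invert the $2\times2$ matrix, and threshold the second coordinate at $-3/8$ using $-1/\sqrt5+1/32<-3/8<-1/\sqrt{10}-1/32$.

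For (c) you take a different route, and it is valid.

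\textbf{The paper's route.} It uses total variation. The $K=1$ and $K=2$ laws differ by $1/\DL$, $1/\DL$ and $2/\DL$ at $e_1$, $e_2$ and $0$, so their per-draw distance is exactly $2/\DL$; the common fair label changes nothing. \cref{lem:padding} then bounds the error sum below by $1-2m/\DL$. Since the two errors sum to at most $2/3$, this gives $m\geq\DL/6$ in one line, with no lower-order terms and no use of $n\geq3$. The constant $1/6$ is exactly what this bound produces, so the ``main obstacle'' you identify disappears.

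\textbf{Your route.} You combine $m\KL(Q_1\Vert Q_2)\geq\binaryKL(2/3\Vert1/3)=\tfrac13\log2$ with $\KL(Q_1\Vert Q_2)\leq\chi^2(Q_1\Vert Q_2)=1/\DL+4/(\DL(\DL-5))$. This yields the stronger bound $m\geq0.227\,\DL$ for $\DL\geq256$. The exact divergence, $(2-2\log2)/\DL+O(\DL^{-2})$, gives about $0.37\,\DL$.

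Its price is that the $\chi^2$ relaxation is direction-sensitive. The value $\chi^2\approx2/\DL$ you quote first is $\chi^2(Q_2\Vert Q_1)$. That orientation would give only $m\gtrsim0.115\,\DL<\DL/6$. You must commit to $\chi^2(Q_1\Vert Q_2)$, which is legitimate because the error requirement is symmetric in the two laws; your final computation does this.

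\textbf{A slip in the sampler for (a).} You send the satisfying assignments to $e_1,e_2$ and ``one anchor pattern'' to $(1,1)^{\mathsf T}$. That gives $Q(e_1)=Q(e_2)=\SATcount(\varphi)/\DL$ rather than $K/\DL$, and a singular Fisher matrix when $\varphi$ is unsatisfiable. The anchor must join the accepted block. In the paper:
\begin{itemize}
\item the states $(000,x)$ with $\varphi(x)=1$, together with the anchor $(111,0^n)$, feed the branches producing $e_1$ and $e_2$;
\item a separate fixed state $(001,0^n)$ on the third branch value produces $(1,1)^{\mathsf T}$;
\item the remaining $\DL-2K-1>0$ inputs map to $0$.
\end{itemize}
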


\begin{proof}[Proof sketch]
Given $\varphi$, we build its model, run a recovery procedure and threshold
the output's second coordinate at $-3/8$. By
\cref{eq:logistic-fisher-direction}, this coordinate is $-1/\sqrt5$ when
$K=1$ and at least $-1/\sqrt{10}$ when $K\geq2$. Both values lie more than
$1/32$ from $-3/8$, so an accurate output reveals satisfiability. Satisfying
assignments certify membership in the threshold language.
From samples, the $K=1,2$ laws differ in total variation by $2/\DL$
per draw, so testing needs $m\geq\DL/6$. \Cref{app:logistic-proof} gives the proof,
and \cref{fig:logistic-direction-threshold} illustrates the threshold.
\end{proof}

The hardness extends to finite updates. At KL budget $1/(100\DL)$ and with no
supplied direction, returning a feasible update within $1/200$ of the constrained query optimum is
NP-hard, because the small budget keeps every feasible first coordinate at
most $1/5$ when $K\geq2$, whereas such an update for $K=1$ needs a larger one. Succeeding from draws alone with
probability $2/3$ needs $m\geq\DL/6$
(\cref{cor:logistic-trust-region,cor:logistic-useful-samples}).

\subsection{Damped estimation and KL certification}
\label{sec:sampling}

The remaining results are positive. The logistic construction
makes direction recovery hard through small Fisher eigenvalues. Damping is the standard remedy \citep{Martens2020,MartensGrosse2015}.
Replacing $F$ with $F+\lambda I$ for a damping parameter $\lambda>0$ raises
every eigenvalue to at least $\lambda$,
and we ask how many draws then suffice to estimate the damped direction. The
answer is tight up to a logarithmic factor, and \cref{app:regularization}
bounds the damping bias relative to the undamped solution.

\begin{proposition}[Damped natural gradient estimation]
\label{prop:damped-sampling-main}
Let $P$ be an unknown probability law on $\mathcal X$ and
$h:\mathcal X\to[0,1]^d$ a known measurable statistic, with integer $d\geq1$.
For known $\lambda>0$, set $F=\Cov_P(h)$ and
${\mathsf A}=F+\lambda I_d$. Fix $0<\varepsilon\leq1/16$ and
$0<\delta\leq1/3$. From $m=2s$ draws $S_1,\ldots,S_{2s}\sim P$,
where $s\geq1$ is an integer, form
\[
 \xi_i=\frac{h(S_{2i})-h(S_{2i-1})}{\sqrt2},\qquad
 \widehat F=\frac1s\sum_{i=1}^s\xi_i\xi_i^{\mathsf T},\qquad
 \widehat v(b)=(\widehat F+\lambda I_d)^{-1}b.
\]
For a universal constant $C$, if
$m\geq C(1+d/\lambda)\varepsilon^{-2}\log(2d/\delta)$, then with probability
at least $1-\delta$,
\begin{equation}
 \|\widehat v(b)-{\mathsf A}^{-1}b\|_{\mathsf A}
 \leq\varepsilon\|{\mathsf A}^{-1}b\|_{\mathsf A}
 \qquad\text{for every }b\in\mathbb R^d\setminus\{0\},
 \label{eq:damped-solve-main}
\end{equation}
where $\|z\|_{\mathsf A}^2=z^{\mathsf T}{\mathsf A}z$ and $b$ is the
supplied objective gradient.

Conversely, for each $0<\lambda\leq d/8$, take
$h(z)=z\mathbf1_d$ for $z\in\{0,1\}$ and $b=\mathbf1_d/\sqrt d$, where
$\mathbf1_d$ is the all-ones vector. Consider a procedure that learns an unknown Bernoulli law $P$ only from at
most $m$ draws. If it satisfies \cref{eq:damped-solve-main} for this fixed $b$
with probability at least $2/3$ for every such $P$, then
$m=\Omega(d/(\lambda\varepsilon^2))$.
\end{proposition}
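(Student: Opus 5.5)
For the upper bound, we would reduce \cref{eq:damped-solve-main} to a spectral statement. The paired differences $\xi_i$ are i.i.d. with $\E\xi_i\xi_i^{\mathsf T}=F$, so $\widehat F$ is an unbiased covariance estimator that needs no centring. Set $\widehat{\mathsf A}=\widehat F+\lambda I_d$ and write $E={\mathsf A}^{-1/2}(\widehat F-F){\mathsf A}^{-1/2}$, so that $\widehat{\mathsf A}={\mathsf A}^{1/2}(I+E){\mathsf A}^{1/2}$. Then, for every $b$,
\[
 \widehat v(b)-{\mathsf A}^{-1}b={\mathsf A}^{-1/2}\bigl((I+E)^{-1}-I\bigr){\mathsf A}^{-1/2}b .
\]
This gives $\|\widehat v(b)-{\mathsf A}^{-1}b\|_{\mathsf A}\leq\|(I+E)^{-1}-I\|_2\,\|{\mathsf A}^{-1}b\|_{\mathsf A}\leq\frac{\|E\|_2}{1-\|E\|_2}\|{\mathsf A}^{-1}b\|_{\mathsf A}$. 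The guarantee is therefore uniform in $b$, and it suffices to show $\|E\|_2\leq\varepsilon/2$ with probability at least $1-\delta$.

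To bound $\|E\|_2$, we would apply the matrix Bernstein inequality to the centred summands $Y_i=\frac1s\bigl({\mathsf A}^{-1/2}\xi_i\xi_i^{\mathsf T}{\mathsf A}^{-1/2}-{\mathsf A}^{-1/2}F{\mathsf A}^{-1/2}\bigr)$. Since $h\in[0,1]^d$, each coordinate of $\xi_i$ has magnitude at most $1/\sqrt2$, so $\|\xi_i\|_2^2\leq d/2$. Combined with ${\mathsf A}\succeq\lambda I$, this gives $\|{\mathsf A}^{-1/2}\xi_i\|_2^2\leq d/(2\lambda)$, and hence $\|Y_i\|_2\leq(1+d/(2\lambda))/s$ up to a constant. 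For the variance we would use $\E(ww^{\mathsf T})^2\preceq\max\|w\|^2\,\E ww^{\mathsf T}$ with $w={\mathsf A}^{-1/2}\xi_i$. Since $\E ww^{\mathsf T}={\mathsf A}^{-1/2}F{\mathsf A}^{-1/2}\preceq I$, the variance proxy is at most $(d/(2\lambda))/s$. Bernstein then gives $\|E\|_2\leq\varepsilon/2$ with probability $1-\delta$ once $s\gtrsim(1+d/\lambda)\varepsilon^{-2}\log(2d/\delta)$, using $\varepsilon\leq1$ to absorb the linear term. The dimension factor $2d$ in the logarithm comes from Bernstein's dimensional prefactor. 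The main technical care here is tracking the constant $1$ in $(1+d/\lambda)$ when $\lambda$ is large, where the $\|Y_i\|$ bound is dominated by the centring term $\|{\mathsf A}^{-1/2}F{\mathsf A}^{-1/2}\|\leq1$.

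For the lower bound, take $h(z)=z\mathbf1_d$ with $P=\Bern(q)$. Then $F=q(1-q)\mathbf1_d\mathbf1_d^{\mathsf T}$, and $b=\mathbf1_d/\sqrt d$ is an eigenvector of ${\mathsf A}$ with eigenvalue $\lambda+d\,q(1-q)$. So ${\mathsf A}^{-1}b=b/(\lambda+dq(1-q))$, and the error criterion becomes relative error $\varepsilon$ on this scalar, up to restricting to outputs near the ray. Projecting $\widehat v$ onto $b$ only decreases the ${\mathsf A}$-error, because $b$ is an eigenvector of ${\mathsf A}$. We would pick $q$ so that $dq=\lambda$, and $q'=q(1+c\varepsilon)$. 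This $q\leq1/8$ because $\lambda\leq d/8$. The targets $1/(\lambda+dq(1-q))$ then differ by a relative amount of order $\varepsilon$ exceeding $2\varepsilon$ for a suitable constant $c$, so the two success sets are disjoint.

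A successful procedure therefore distinguishes $\Bern(q)$ from $\Bern(q')$ with probability $2/3$. The per-draw KL between them is $O(q\varepsilon^2)=O(\lambda\varepsilon^2/d)$, so \cref{lem:padding} (two-point testing) forces $m=\Omega(d/(\lambda\varepsilon^2))$. The delicate step is verifying disjointness: the map $q\mapsto\lambda+dq(1-q)$ has relative sensitivity of order $dq/(\lambda+dq)\approx1/2$ at this choice of $q$. This keeps the separation proportional to $c\varepsilon$, and choosing $c$ around $8$ then clears the $2\varepsilon$ width.
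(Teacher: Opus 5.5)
Your proposal is correct and follows the paper's proof in structure: the whitening identity for $\mathsf A^{-1/2}\widehat{\mathsf A}\mathsf A^{-1/2}$ gives the bound uniformly in $b$, and the lower bound uses the same rank-one embedding with masses $\lambda/d$ and $(1+8\varepsilon)\lambda/d$, projection onto the eigenvector $b$, and two-point testing. The only difference is the concentration tool: the paper applies matrix Chernoff to the positive semidefinite summands $\mathsf A^{-1/2}(\xi_i\xi_i^{\mathsf T}+\lambda I_d)\mathsf A^{-1/2}$, which have mean $I_d$ and largest eigenvalue at most $1+d/(2\lambda)$, so no variance calculation is needed; your matrix Bernstein route instead relies on the variance bound you supply, which is correct and gives the same order.
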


\begin{proof}[Proof sketch]
Paired differences give an unbiased covariance estimate, and matrix
concentration bounds the solve error \citep{HsuEtAl2012,Tropp2012}. A Bernoulli
statistic repeated across coordinates gives the lower bound, and the ranges
$\varepsilon\leq1/16$ and $\lambda\leq d/8$ keep that hard pair's masses small and
on the damping scale. \Cref{app:damped-fisher-proof} proves both bounds.
\end{proof}

\noindent\textbf{Certifying the KL budget.}
The lower bounds rule out cheap useful steps in the worst case, yet respecting
the budget can still be certified from draws. For a softmax classifier with
affine logits, a known bound on the feature norm limits how far each candidate
step can move the logits, and hence its KL at every input, before any sampling. Fix $\Ncand$ candidate steps
along a supplied direction, and draw $m$ calibration inputs independently of
the classifier, direction and candidates. Empirical Bernstein bounds
\citep{MaurerPontil2009}, each at failure probability $\delta/\Ncand$, then
bound the population KL of every candidate at once. Their slack grows at most
logarithmically in $\Ncand/\delta$ and shrinks as $m^{-1/2}$, or as $m^{-1}$
when the sample variance is small. Any candidate
whose bound is at most $\eta$, even one chosen using the same draws, is
feasible with probability at least $1-\delta$, and the zero step is always
accepted (\cref{prop:affine-calibration-main}). The certificate controls
feasibility, not gain.

\section{Empirical Illustrations}
\label{sec:experiments}

The experiments examine three distinctions in the theory: calibration sample
cost, direction accuracy versus finite gain, and feasibility under sampled KL.
Each trial computes the KL cost and gain of its returned update $u$, and the
gain benchmark, exactly under the reference law. The trial is a \emph{joint
success} if $u$ is both feasible and within the gain tolerance, that is,
useful (\cref{def:useful}), and its \emph{feasible gain}
$\Delta(u)\,\one\{\mathcal K(u)\leq\eta\}$ is zero for infeasible updates.

\noindent\textbf{Calibration sample scales.}
The upper-bound rules of \cref{thm:matched-moment-main,thm:accuracy-calibration}
stop conservatively short of an estimated KL boundary, since stopping at the
estimate itself overshoots about half the time. Their rates also predict similar
success across rarity levels at a fixed normalized budget, $m\pM\varepsilon^2$
or $mp\varepsilon^2$ with $m$ draws.
Both four-state rules receive exact local information and estimate the mass
of $h=3$ from the same count. The plug-in rule stops at the estimated KL
boundary, whereas the conservative rule adds an upward offset to the mass
estimate and rounds the step downward. At $\omega=1.25$, joint
success reaches $100\%$ for the conservative rule at the largest sample
budget, while plug-in success remains near $50\%$
(\cref{tab:calibration-sampling-main}(a)). At that budget, the plug-in rule's
mean positive KL excess is only about $0.004\%$ of the budget
(\cref{tab:matched-rounding-ablation}), so its failures are strict but tiny
boundary crossings.
For binary laws, the confidence rule receives the event count and sample size
but not the true mass. At $mp\varepsilon^2=16$, joint success ranges from
$95.8\%$ to $97.1\%$ as $p$ varies from $10^{-6}$ to $2^{-11}$
(\cref{tab:calibration-sampling-main}(b) and
\cref{app:experimental-details}).

\noindent\textbf{Direction accuracy and finite gain.}
\Cref{thm:logistic-preconditioning} concerns direction accuracy, whereas
usefulness concerns the finite step. We separate the two in the logistic query
of \cref{sec:logistic}, with the globally optimal KL-feasible query gain as
benchmark. Fix $K=1$, $\DL=65{,}536$ and
$\eta=0.05/\DL$. At $m=16\DL$, only $31.9\%$ of estimated
pseudoinverse directions (\cref{fig:construction-geometry}(c)) meet the
theorem's Euclidean tolerance $1/32$.
Population-KL calibration nevertheless attains $99.5\%$ of optimal gain on
average and meets the $90\%$ gain target in every trial.
The mean cosine between the estimated and population rays in the Fisher
geometry is $99.46\%$, which explains why the Euclidean pass rate
need not predict finite gain
(\cref{app:logistic-finite-step,tab:logistic-fisher-angle}).
Even along the exact direction, confidence calibration is useful in only
$1.1\%$ of trials at $m=16\DL$, rising to $98.9\%$ at $m=256\DL$
(\cref{tab:logistic-oracle-controls}). It uses counts even though this
direction identifies the cell masses, so its failures measure its conservatism
rather than direction error, and \cref{fig:construction-geometry}(c) shows the
same pattern with estimated directions. Thus, at $m=16\DL$, calibrating the
length, not estimating the direction, limits usefulness.

\begin{figure}[!thbp]
\begingroup
\centering
\includegraphics[width=0.90\linewidth]{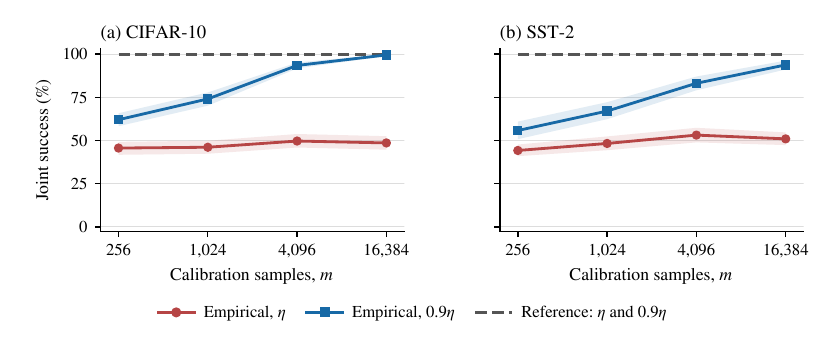}
\caption{\textbf{Calibration along supplied directions in classifier heads.}
\textbf{(a)} CIFAR-10 with ResNet-18 features and \textbf{(b)} SST-2 with
DistilBERT features. Directions and population KL use the full training set,
and calibration uses $m$ sampled inputs with KL budget
$\eta=0.01$ or the margin target $0.9\eta$.
Bands are 95\% bootstrap intervals over 32 queries, with 20 repetitions each.
Joint success requires KL feasibility and at least 75\% of the optimal feasible
gain along the supplied direction at the full budget. Dashed curves use
population KL, and colored curves use sampled KL.
}
\label{fig:learned-updates-recovery}
\endgroup
\end{figure}

\noindent\textbf{Calibration in learned classifier heads.}
Learned classifier heads test whether stopping at a sampled KL boundary is
reliable without constructed rarity.
We freeze ResNet-18 representations \citep{HeEtAl2016} on CIFAR-10
\citep{Krizhevsky2009} and DistilBERT representations \citep{SanhEtAl2019} on
SST-2 \citep{WangEtAl2019,SocherEtAl2013}. Each trial restores the original
head and increases an initially misclassified query's true-label log
probability. Directions computed on the full training set isolate calibration, and the
benchmark
is the optimal feasible gain along the supplied ray.
The empirical rules share calibration inputs and target the full KL budget or
reserve a $10\%$ margin.
Under exact KL, convexity of the cost and concavity of the query gain ensure
that this margin retains at least $90\%$ of the full-budget ray optimum
(\cref{lem:learned-margin-control}). At the primary budget $\eta=0.01$ and the largest sample size
$m=16{,}384$,
the margin rule's joint success is $99.53\%$ on CIFAR-10 and $93.75\%$ on
SST-2, compared with $48.59\%$ and $50.94\%$ at the full budget, which stays near
$50\%$ at every sample size because the sampled boundary lies beyond the
population one about half the time
(\cref{fig:learned-updates-recovery,tab:learned-calibration-main}).
Joint success uses a $75\%$ gain target, and every margin update at this setting
also meets a stricter $90\%$ target, so the remaining failures are KL violations. Including infeasible updates, the
margin keeps the mean raw-gain ratio to the full-budget optimum above $99\%$ in
both models, so it costs almost no gain (magnitudes, other
budgets and query uncertainty in \cref{app:learned-updates}).

\section{Discussion}
\label{sec:discussion}

Our results locate a cost that direction-centered analyses leave out. Choosing
a useful length needs information about rare states that local quantities do
not carry, acquiring it costs as many draws as estimating a rare probability,
and computing a useful step can be \textit{intractable} even when sampling is
easy and efficient. A natural-gradient update therefore has two distinct costs, estimating the
direction and calibrating its length, and the second can be large even when the first is zero.

Any procedure that guarantees useful steps over a larger
class containing the four-state family must succeed on the family, so the
sample lower bound transfers to that class under the same access and
permitted updates. The rate $\log(1/\delta)/(p\varepsilon^2)$ is classical for
rare-probability estimation, and our contribution is that calibration inherits
it, up to constants, even with every local quantity supplied. The hardness results hold with
efficient exact sampling, so they do not restate the hardness of inference
(\cref{sec:bg}).
Classifier heads cross the boundary about half the time
without constructed rarity.

For natural policy gradient, TRPO and K-FAC
\citep{Kakade2001,SchulmanTrustRegion2015,MartensGrosse2015}, the length
deserves the scrutiny now given to the direction. Sample budgets should
include the draws and computation spent on selecting it, and evaluations should
state the KL budget, gain benchmark and success probability together. Violation frequencies need their
magnitudes, because under the conditions in \cref{app:learned-access} the
probability of crossing the boundary tends to one half while the excess KL
vanishes. A small KL margin is a cheap safeguard (\cref{sec:experiments}), and
independent calibration draws can certify the budget (\cref{sec:sampling}).

\noindent\textbf{Limitations.}
Our bounds are worst case, concern single updates and hold under the stated
access and output restrictions. Probability tables or extra evaluation oracles can change the
information cost, and event-tilt hardness concerns returning a tilt
parameter. Under the reverse KL orientation, the optimal event-tilt gain
vanishes as $p\downarrow0$ (\cref{app:divergence-orientation}). The experiments
fix checkpoints, representations and queries under a finite reference law,
and do not test generalization or training trajectories.

\phantomsection\label{section:future}
\noindent\textbf{Open questions.}
We highlight three questions for future research.
First, how do single-update bounds accumulate
over a training run, under either KL orientation?
Second, what does it cost to select the direction
and the length from the same draws, when confidence bounds must remain
valid under reuse \citep{Howard2021ConfidenceSequences,DworkEtAl2015}? Third,
can importance sampling that oversamples the rare states reduce the draws
needed for a useful step
\citep{GlynnIglehart1989,ChatterjeeDiaconis2018}?

% References formatted by BibTeX (plainnat) from the paper's references.bib.

\clearpage
\setcounter{equation}{0}
\renewcommand{\theequation}{A\arabic{equation}}
\renewcommand{\theHequation}{appendix.\arabic{equation}}
\setcounter{table}{0}
\renewcommand{\thetable}{A\arabic{table}}
\renewcommand{\theHtable}{appendix.\arabic{table}}
\setcounter{figure}{0}
\renewcommand{\thefigure}{A\arabic{figure}}
\renewcommand{\theHfigure}{appendix.\arabic{figure}}
\crefalias{section}{appendix}
\crefalias{subsection}{appendix}
\crefalias{subsubsection}{appendix}
\appendix
\section*{Appendix}
The appendices follow the order of the main text. They first prove the
theoretical results and then report reproducibility details and the
experiments that illustrate them.
\Cref{app:procedures} fixes the model of a procedure, the padding lemma used
by every sample lower bound and the satisfiability encoding used by both
hardness results.
\Cref{app:fixed-budget-proof} treats calibration with an exact natural gradient
at a fixed budget. Within it, \cref{app:matched-moments} proves
\cref{thm:matched-moment-main} through
\cref{thm:matched-moment-calibration}(b)--(c).
\Cref{app:fixed-budget-bounds-proof} proves \cref{cor:fixed-budget-hardness}
using the constant-fraction separation in \cref{thm:fixed-budget-calibration},
and \cref{app:accuracy-calibration} proves \cref{thm:accuracy-calibration}.
\Cref{app:additional-calibration} compares quadratic and finite-step KL.
\Cref{app:logistic-proof} proves \cref{thm:logistic-preconditioning} on
direction recovery and extends it to useful finite updates
(\cref{cor:logistic-trust-region,cor:logistic-useful-samples}).
\Cref{app:regularization} proves \cref{prop:damped-sampling-main} through
\cref{thm:damped-fisher-sampling,cor:damped-dimension-lower} and bounds the bias introduced by damping. \Cref{app:finite-candidate-calibration} states the population-KL certificate
summarized in \cref{sec:sampling} (\cref{prop:affine-calibration-main}) and
proves it through \cref{prop:finite-candidate-calibration}.
\Cref{app:provenance} gives reproducibility details. The experiments then cover
count simulations (\cref{app:experimental-details}), logistic controls and
damping simulations (\cref{app:logistic-experiments}) and learned classifier heads
(\cref{app:learned-updates}).

\noindent\textbf{Notation across the appendices.}
The sample count \(m\) counts individual observations. Thus \(s\) independent
pairs contain \(2s\) observations. The four-state family uses supplied scale
\(\pM\) and unknown coefficient \(\omega\), while \(p\) denotes an event probability and
\(c\) a formula code. The count \(K\) includes one always-accepted state
in addition to the satisfying assignments. KL costs are written \(\mathcal K\),
and \(\DL,\DB\) denote the logistic and event-tilt encoding sizes. Absolute KL budgets are \(\eta\), and logistic experiments also use
\(\bar\eta=\DL\eta\).

\section{Procedures and lower-bound tools}
\label{app:procedures}

Every sample lower bound in the appendices uses the following model of a
procedure, which makes the description in \cref{sec:bg} precise.
Draws \(X_1,X_2,\ldots\) are independent with a common law \(P\) on a
measurable space, and an independent seed \(U\) has a law that does not depend
on \(P\). Write \(\mathcal F_k=\sigma(U,X_1,\ldots,X_k)\) for the information
available after \(k\) draws. A procedure with sample budget \(m\) stops after
\(N\leq m\) draws, where \(N\) is a stopping time for
\((\mathcal F_k)_{k\geq0}\). Thus whether the procedure takes another draw depends
only on the seed and the draws seen so far. Its output lies in a Euclidean space and
is \(\mathcal F_N\)-measurable. Supplied quantities, such as an exact natural
gradient, are identical under the laws being compared and enter as fixed
inputs. Write \(\mathcal L_P(Y)\) for the law of a random variable \(Y\) when the
draws have law \(P\).

\begin{lemma}[Padding and two-point testing]
\label{lem:padding}
The output equals \(g(U,X_1,\ldots,X_m)\) for a measurable map \(g\) that
does not depend on \(P\). Consequently, for two laws \(P\) and \(P'\) of the
draws, the output \(Y\) satisfies
\begin{equation}
 \begin{aligned}
 \KL(\mathcal L_P(Y)\Vert\mathcal L_{P'}(Y))&\leq m\KL(P\Vert P'),\\
 \TV(\mathcal L_P(Y),\mathcal L_{P'}(Y))&\leq1-(1-\TV(P,P'))^m\leq m\TV(P,P').
 \end{aligned}
 \label{eq:padding-bounds}
\end{equation}
If \(S\) and \(S'\) are disjoint measurable sets of outputs, then
\(\Prob_P(Y\notin S)+\Prob_{P'}(Y\notin S')\geq
1-\TV(\mathcal L_P(Y),\mathcal L_{P'}(Y))\).
\end{lemma}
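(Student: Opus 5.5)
The plan has three parts: show the procedure's output is a fixed measurable function of a fixed-length input, transfer the single-draw divergences to the output by data processing, and finish with the standard two-point inequality.

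\textbf{Step 1: padding.} Extend the procedure to one that always takes exactly \(m\) draws and ignores those after \(N\). Since \(N\) is a stopping time for \((\mathcal F_k)\) with \(N\leq m\), the event \(\{N=k\}\) lies in \(\mathcal F_k\). It is therefore determined by a measurable function of \((U,X_1,\ldots,X_k)\). On \(\{N=k\}\), the \(\mathcal F_N\)-measurable output \(Y\) coincides with an \(\mathcal F_k\)-measurable variable, hence with \(g_k(U,X_1,\ldots,X_k)\) for some measurable \(g_k\) (Doob--Dynkin).

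Set
\[
g(U,X_1,\ldots,X_m)=\sum_{k=0}^m\one\{N=k\}\,g_k(U,X_1,\ldots,X_k).
\]
This is measurable and reproduces \(Y\) pointwise. The maps \(g_k\) and the stopping rule are part of the procedure's specification, so they do not depend on \(P\). The main point needing care is this last claim: the Doob--Dynkin representation must come from the procedure's decision rule, not from the law. I would therefore phrase the procedure directly as a sequence of measurable rules, as in the model stated in \cref{app:procedures}.

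\textbf{Step 2: divergence bounds.} Under \(P\), the input \((U,X_1,\ldots,X_m)\) has law \(\mu\otimes P^{\otimes m}\), where \(\mu\) is the seed law, and under \(P'\) it has law \(\mu\otimes P'^{\otimes m}\).

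For KL, apply data processing to the map \(g\). Then use the chain rule for product measures, noting that the common seed contributes zero:
\[
\KL(\mathcal L_P(Y)\Vert\mathcal L_{P'}(Y))\leq\KL(P^{\otimes m}\Vert P'^{\otimes m})=m\KL(P\Vert P').
\]

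For TV, again use data processing, so it suffices to bound \(\TV(P^{\otimes m},P'^{\otimes m})\). Take a maximal coupling of each coordinate pair, which agrees with probability \(1-\TV(P,P')\). Taking \(m\) independent copies gives a coupling of the products that agrees with probability \((1-\TV(P,P'))^m\). The coupling inequality then yields
\[
\TV(P^{\otimes m},P'^{\otimes m})\leq1-(1-\TV(P,P'))^m.
\]
Bernoulli's inequality \((1-x)^m\geq1-mx\) gives the final bound \(m\TV(P,P')\).

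\textbf{Step 3: disjoint output sets.} Because \(S\cap S'=\emptyset\), we have \(S\subseteq (S')^c\). Hence
\[
\Prob_P(Y\notin S)+\Prob_{P'}(Y\notin S')\geq 1-\Prob_P(Y\in S')+\Prob_{P'}(Y\notin S')=1-\bigl(\Prob_P(Y\in S')-\Prob_{P'}(Y\in S')\bigr).
\]
The bracketed difference is at most \(\TV(\mathcal L_P(Y),\mathcal L_{P'}(Y))\) by the definition of total variation, which gives the stated inequality.
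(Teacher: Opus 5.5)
Your Steps 1 and 2 follow the paper's proof essentially line for line. The steps are: the Doob--Dynkin representation on each \(\{N=k\}\), assembled into \(g=\sum_k\one\{N=k\}g_k\); the chain rule plus data processing for KL; and a coordinatewise maximal coupling with a shared seed, followed by Bernoulli's inequality, for TV. Your concern that \(g\) might depend on \(P\) is unnecessary. Doob--Dynkin involves only the maps and \(\sigma\)-algebras, never the measure, so \(g\) is automatically law-free as long as the filtration is not completed.

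Step 3, however, does not hold as displayed, and here there is a genuine error. Disjointness gives \(S'\subseteq S^{c}\), hence \(\Prob_P(Y\notin S)\geq\Prob_P(Y\in S')\). It does not give \(\Prob_P(Y\notin S)\geq1-\Prob_P(Y\in S')=\Prob_P(Y\notin S')\), which would require \(S\subseteq S'\). For a procedure that always outputs a fixed \(y_0\in S\), your first inequality reads \(1\geq2\). The equality after it is also wrong, since \(1-\Prob_P(Y\in S')+\Prob_{P'}(Y\notin S')=2-\Prob_P(Y\in S')-\Prob_{P'}(Y\in S')\), not \(1-\bigl(\Prob_P(Y\in S')-\Prob_{P'}(Y\in S')\bigr)\). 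The two slips happen to land on a true final expression, but the chain proves nothing. The repair is one line:
\[
\Prob_P(Y\notin S)+\Prob_{P'}(Y\notin S')\geq\Prob_P(Y\in S')+1-\Prob_{P'}(Y\in S')\geq1-\TV(\mathcal L_P(Y),\mathcal L_{P'}(Y)).
\]
This is the paper's argument in complement form; the paper bounds \(\Prob_P(Y\in S)+\Prob_{P'}(Y\in S')\leq1+\TV(\mathcal L_P(Y),\mathcal L_{P'}(Y))\) using \(S'\subseteq S^{c}\) under \(P'\).
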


\begin{proof}
On \(\{N=k\}\), the output is \(\mathcal F_k\)-measurable, so the Doob--Dynkin
lemma gives a measurable map \(g_k\) with \(Y=g_k(U,X_1,\ldots,X_k)\) there.
The map \(g=\sum_{k\leq m}\one\{N=k\}g_k\) ignores every draw after the
stopping time, so padding the transcript to \(m\) draws leaves the output
unchanged. The seed has the same law under both hypotheses and is independent
of the draws, so the chain rule for KL and independence give
\(\KL\) equal to \(m\KL(P\Vert P')\) for \((U,X_1,\ldots,X_m)\). Data processing
\citep{vanErvenHarremoes2014} transfers this bound to
\(Y=g(U,X_1,\ldots,X_m)\). For total variation, couple each pair of draws
maximally, independently across positions, and use the same seed under both
laws. The two transcripts then agree with probability \((1-\TV(P,P'))^m\), and
so do the outputs. Bernoulli's inequality gives the last bound in
\cref{eq:padding-bounds}. Finally, \(S'\) lies in the complement of \(S\), so
\(\Prob_P(Y\in S)+\Prob_{P'}(Y\in S')\leq
1+\TV(\mathcal L_P(Y),\mathcal L_{P'}(Y))\).
\end{proof}

In each lower bound below, useful outputs under two laws form disjoint sets.
The lemma then turns a success guarantee under both laws into a bound on the
number of draws, whatever the procedure's computational resources.

\noindent\textbf{How the constructions encode satisfiability.}
\phantomsection\label{app:sat-encoding}
Both hardness results start from a classical encoding. An assignment \(x\)
drawn uniformly from \(\{0,1\}^n\) satisfies \(\varphi\) with probability
\(\SATcount(\varphi)/2^n\), so fair bits and one formula evaluation sample a
law that encodes the number of satisfying assignments. This encoding makes exact and approximate
inference in Bayesian networks hard
\citep{Cooper1990,DagumLuby1993,Roth1996}. Computing the count is
\#P-complete \citep{Valiant1979}, yet sampling stays easy because draws range
over all assignments rather than satisfying ones \citep{JerrumEtAl1986}.
We add three ingredients. An \emph{anchor state} is always accepted, so the
accepted count \(K=1+\SATcount(\varphi)\) is at least one. This keeps event masses
positive, as \cref{thm:accuracy-calibration} requires, and the logistic Fisher
matrix positive definite. Satisfiability then becomes the question
whether \(K=1\) or \(K\geq2\). \emph{Tag bits}, drawn uniformly with \(x\),
split the states into disjoint blocks with separate acceptance rules, such as
one for satisfying assignments and one for the anchor. Each tag bit halves every state's mass. Eleven tag bits are the fewest that keep the total mass of \(2^n+1\) accepted states
below \(p_0=2^{-10}\) in \cref{cor:fixed-budget-hardness}.
Three tag bits separate the formula block, the anchor and one fixed state in
\cref{thm:logistic-preconditioning}. In the logistic model, two \emph{branch
bits} choose whether a state produces \(e_1\), \(e_2\), \((1,1)^\mathsf T\) or
\(0\), so every covariate probability is a count divided by \(\DL=2^{n+5}\). The shared covariate \((1,1)^\mathsf T\) couples the two Fisher coordinates,
so the direction turns as \(K\) grows. When there is only one satisfying assignment, a draw reveals it with
probability exponentially small in \(n\). Thus sampling cannot cheaply
distinguish \(K=1\) from \(K\geq2\) uniformly over formulas. The separating threshold depends only on \(n\),
so a polynomial-time procedure that returns a useful step or an accurate
direction would decide SAT.

\section{Fixed-budget calibration with an exact natural gradient}
\label{app:fixed-budget-proof}

This appendix proves the results of
\cref{sec:matched-local-calibration,sec:fixed-budget-calibration}.
\Cref{app:matched-moments} treats the four-state family,
\cref{app:fixed-budget-bounds-proof} gives the event-tilt bounds and the
satisfiability reduction, and \cref{app:accuracy-calibration} derives the
dependence on the required improvement.

\subsection{Calibration with exact gradient and scalar Fisher information}
\label{app:matched-moments}

We prove \cref{thm:matched-moment-main} by varying the finite-step KL
boundary while preserving the moments that determine the supplied gradient
and scalar Fisher information. We first derive the
local identities and boundary geometry, then state the quantitative
bounds in \cref{thm:matched-moment-calibration}.
Its parts~\textup{(b)}--\textup{(c)} give the main result on
\(\omega\in[1,3/2]\). The wider interval \(\omega\leq2\) also permits the
constant-fraction comparison in part~\textup{(a)}.

\noindent\textbf{Minimal support for preserving two moments.}
For fixed distinct statistic values $a_1,\ldots,a_k$, normalization,
mean and second moment constrain the probabilities through the matrix with
columns $(1,a_j,a_j^2)^{\mathsf T}$. Its Vandermonde rank is $\min\{3,k\}$.
For $k\leq3$, these constraints therefore determine the law uniquely.
Four values are the minimum permitting distinct laws with the same first
two moments. On $\{0,1,2,3\}$, varying $\omega$ in the construction below
changes the mass vector in direction $\pM(-1,3,-3,1)$, which lies in this
matrix's one-dimensional nullspace. Thus the family realizes this minimum
while retaining positive probabilities.

Set \(\pMmax=2^{-30}\). For \(0<\pM\leq\pMmax\) and \(1\leq\omega\leq2\),
define a law on \(h\in\{0,1,2,3\}\) by
\begin{equation}
 \bigl(P_{\pM,\omega}(0),P_{\pM,\omega}(1),P_{\pM,\omega}(2),P_{\pM,\omega}(3)\bigr)
 =\bigl(1-(5+\omega)\pM,(3\omega-2)\pM,(7-3\omega)\pM,\omega\pM\bigr).
 \label{eq:matched-moment-law}
\end{equation}
The scalar parameter \(t\geq0\) is optimized, while \(\omega\) indexes the
unknown base law. The main text suppresses the fixed scale \(\pM\) in
\(P_\omega\). Tilt by \(e^{th}\), and write
\begin{equation}
 \begin{aligned}
 Z_\omega(t)&=\E_{P_{\pM,\omega}}e^{th},&
 P_{\pM,\omega,t}(h)&=e^{th}P_{\pM,\omega}(h)/Z_\omega(t),\\
 J_\omega(t)&=\E_{P_{\pM,\omega,t}}h,&
 {\mathcal K}_\omega(t)&=\KL(P_{\pM,\omega}\Vert P_{\pM,\omega,t}),\\
 \Delta_\omega(t)&=J_\omega(t)-J_\omega(0),&
 \Delta_\omega^\star&=\max_{t\geq0:\,{\mathcal K}_\omega(t)\leq1/10}\Delta_\omega(t).
 \end{aligned}
 \label{eq:matched-moment-objective}
\end{equation}
This objective is a mean of a four-valued statistic. Let \(b_\omega(t)=J_\omega'(t)\)
and let \(F_\omega(t)\) denote the scalar Fisher information for the coordinate
\(t\). A procedure receives \(\pM\), the family specification, the statistic
\(h\), and the exact \(b_\omega(0),F_\omega(0)\) and natural gradient.
Its only further information about \(\omega\) consists of independent base draws.

\noindent\textbf{Local identities and the feasible boundary.}
All four masses are positive and sum to one. Direct summation gives
\(\E h=12\pM\) and \(\E h^2=26\pM\), independently of \(\omega\).
The normalizers satisfy
$Z_{\omega'}(t)-Z_\omega(t)=(\omega'-\omega)\pM(e^t-1)^3$.
Thus their difference vanishes through second order at zero while
changing the finite-step KL boundary.
The tilted score is \(h-J_\omega(t)\), so
\(J_\omega'(t)=F_\omega(t)=\Var_{P_{\pM,\omega,t}}h>0\), giving the shared local
quantities in \cref{eq:matched-moment-statistics} below.
Moreover,
\({\mathcal K}_\omega(t)=\log Z_\omega(t)-12\pM t\) and \({\mathcal K}_\omega'(t)=J_\omega(t)-12\pM>0\) for \(t>0\).
Since
\({\mathcal K}_\omega(t)\geq\log(\omega\pM)+(3-12\pM)t\), there is a unique positive
\(T_\omega\) with \({\mathcal K}_\omega(T_\omega)=1/10\). The objective increases, so
\(\Delta_\omega^\star=\Delta_\omega(T_\omega)\).
Thus a useful step must be large enough to achieve the requested gain
and small enough to remain below \(T_\omega\). For the lower bounds, we show that these allowable intervals are disjoint for selected pairs of laws.
For the upper bound, we estimate a boundary that keeps the returned step feasible under the unknown law.

\begin{theorem}[Calibration with matched base gradients and scalar Fisher information]
\label{thm:matched-moment-calibration}
Fix \(0<\pM\leq\pMmax=2^{-30}\) and \(0<\delta\leq1/3\).
Use the four-state law \cref{eq:matched-moment-law}, tilt, gain and benchmark
in \cref{eq:matched-moment-objective}. A step \(t\geq0\) is KL-feasible
when \(\mathcal K_\omega(t)\leq1/10\).
Every law with \(\omega\in[1,2]\) satisfies, for every \(t\geq0\),
\begin{equation}
 b_\omega(0)=F_\omega(0)=26\pM-144\pM^2>0,
 \qquad F_\omega(t)^{-1}b_\omega(t)=1.
 \label{eq:matched-moment-statistics}
\end{equation}
Consider total measurable procedures with an independent random seed and
at most a deterministic integer \(m\) base draws, including early stopping,
under the access specified above. Totality means that the procedure returns
an output for every sample sequence and every value of its seed.
\begin{enumerate}[label=\textup{(\alph*)},leftmargin=*]
 \item If an output \(\widehat t\geq0\) is KL-feasible and attains at least
 \(3/4\) of \(\Delta_\omega^\star\), with probability at least \(1-\delta\)
 under each of \(\omega=1,2\), then
 \begin{equation}
  m\geq\frac{\log(1/(2\delta))}{-\log(1-4\pM)}.
  \label{eq:matched-moment-constant-lower}
 \end{equation}
 At \(\delta=1/3\), this implies \(m\geq1/(16\pM)\).
 \item For \(0<\varepsilon\leq1/8\), if an output is KL-feasible and
 attains at least \((1-\varepsilon)\Delta_\omega^\star\), with probability
 at least \(1-\delta\) under both \(\omega=1\) and \(\omega=1+4\varepsilon\), then
 \begin{equation}
  m\geq\frac{(1-2\delta)\log((1-\delta)/\delta)}
                 {220\pM\varepsilon^2}.
  \label{eq:matched-moment-accuracy-lower}
 \end{equation}
 \item Conversely, for \(0<\varepsilon\leq1/8\), a rule using the supplied
 \(\pM\) returns a KL-feasible step with gain at least
 \((1-\varepsilon)\Delta_\omega^\star\), with probability at least
 \(1-\delta\) for every \(\omega\in[1,3/2]\), whenever
 \begin{equation}
  m\geq\frac{120000\log(2/\delta)}{\pM\varepsilon^2}.
  \label{eq:matched-moment-accuracy-upper}
 \end{equation}
 It can return a nonnegative rational step. This guarantee concerns
 sample count and does not assert a running-time bound.
\end{enumerate}
\end{theorem}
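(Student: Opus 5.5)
We first verify the local identities \cref{eq:matched-moment-statistics}. Summing the four masses in \cref{eq:matched-moment-law} against $1,h,h^2$ gives $1$, $12\pM$ and $26\pM$, independently of $\omega$. Hence $b_\omega(0)=F_\omega(0)=26\pM-144\pM^2$. This is positive because $\pM\le2^{-30}$. For the tilted family, the score of $t$ is $h-J_\omega(t)$, so $J_\omega'(t)=\Var_{P_{\pM,\omega,t}}h=F_\omega(t)>0$, and the ratio is identically $1$.

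Next, we establish the boundary geometry from the normalizer identity $Z_{\omega'}(t)-Z_\omega(t)=(\omega'-\omega)\pM(e^t-1)^3$. It shows that $\mathcal K_\omega(t)=\log Z_\omega(t)-12\pM t$ increases strictly in $\omega$ for $t>0$, so the boundary $T_\omega$ decreases in $\omega$. Since $J_\omega$ increases, $\Delta_\omega^\star=\Delta_\omega(T_\omega)$, and the useful set for gain fraction $1-\varepsilon$ is an interval $[L_{\omega,\varepsilon},T_\omega]$.

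We then localize $T_\omega$ quantitatively. Near $t\approx\tfrac13\log(1/(10\omega\pM))$ the top-state weight $\omega\pM e^{3t}$ is of order $1/10$. There the middle weights $\pM e^{t},\pM e^{2t}$ are at most $O(\pM^{1/3})\le 2^{-10}$, so $Z_\omega(t)=1+\omega\pM e^{3t}+O(2^{-10})$. This gives
\[
T_\omega=\tfrac13\log\frac{e^{1/10}-1}{\omega\pM}+O(2^{-10}).
\]
On the useful interval, $q:=P_{\omega,t}(3)$ is close to $0.1$, so $J_\omega'\approx 9q(1-q)\ge 3/5$ and $\Delta^\star_\omega\in(27/100,7/20)$.

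\textbf{Part (a).} Useful sets at fraction $3/4$ for $\omega=1$ and $\omega=2$ are disjoint. The $\omega=2$ boundary is $T_1-\tfrac13\log2+O(2^{-10})$. At $\omega=1$, a step at that point has $q$ near $0.05$, so its gain is below $3/4$ of $\Delta^\star_1$; this requires an explicit numerical check. Then \cref{lem:padding} with the maximal coupling applies. The laws share the mass on $h=0$ up to $\pM$, and we compute $1-\TV(P_1,P_2)\ge 1-4\pM$ (more precisely, via the Bhattacharyya/overlap bound, error $\ge\tfrac12(1-\TV)^m$). This yields $2\delta\ge(1-4\pM)^m$, which is \cref{eq:matched-moment-constant-lower}.

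\textbf{Part (b).} We integrate $J'\ge3/5$ over the useful interval to get $T_\omega-L_{\omega,\varepsilon}\le \varepsilon\Delta^\star/(3/5)\le 7\varepsilon/12$. We also bound $\partial_\omega T_\omega=-\partial_\omega\mathcal K/\partial_t\mathcal K$. Here $\partial_\omega\mathcal K=\pM(e^t-1)^3/Z\approx q/\omega$ and $\partial_t\mathcal K\approx 3q$. This gives $|\partial_\omega T_\omega|\ge 1/(3\omega)\cdot(1-o(1))\ge 1/5$ on $[1,3/2]$. Hence $T_1-T_{1+4\varepsilon}\ge4\varepsilon/5>7\varepsilon/12$, and the intervals are disjoint. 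The per-draw KL between $P_1$ and $P_{1+4\varepsilon}$ is a chi-square-type sum over the four states $\pM\sum_j(\Delta_j)^2/c_j$ with $\Delta=4\varepsilon(-1,3,-3,1)$, giving $O(\pM\varepsilon^2)$ with explicit constant at most $220\cdot$ (our normalization). The standard two-point bound $\KL\ge(1-2\delta)\log((1-\delta)/\delta)$ for tests with errors $\le\delta$, combined with \cref{lem:padding}, gives \cref{eq:matched-moment-accuracy-lower}.

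\textbf{Part (c).} Let $N_3$ be the count of $h=3$, with mean $m\omega\pM$. We form an upper confidence bound $\widehat\omega=\min\{3/2,\ N_3/(m\pM)+\text{Bernstein/Chernoff slack}\}$ at level $\delta/2$, so that $\widehat\omega\ge\omega$ with probability $\ge1-\delta/2$. The procedure returns a rational step slightly below $T_{\widehat\omega}$. On coverage, monotonicity gives $T_{\widehat\omega}\le T_\omega$, so the step is feasible. Multiplicative Chernoff bounds give $\widehat\omega-\omega\le c\varepsilon$ once $m\pM\varepsilon^2\gtrsim\log(2/\delta)$. An upper bound $|\partial_\omega T|\le 1$ then gives $T_\omega-\widehat t\le c'\varepsilon$. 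We need $J'\le 9/4$ on the interval, so the lost gain is $\le (9/4)c'\varepsilon\le\varepsilon\cdot 27/100\le\varepsilon\Delta^\star$ after tuning constants, which produces the $120000$.

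The main obstacle is the explicit, uniform two-sided control of $\partial_\omega T_\omega$ and of $J'$ on the useful intervals with the stated constants. All the error terms from the middle states, of size $\pM^{1/3}$, must be tracked, and the check in part (a) that $q$ near $0.05$ falls below the $3/4$ gain threshold needs careful numerics. We would handle these first through the explicit $Z$ bounds above.
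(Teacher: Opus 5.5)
Your proposal is correct and follows the paper's proof essentially step for step. It uses the same moment and normalizer identities, the same interval width $T_\omega-L_{\omega,\varepsilon}\le 7\varepsilon/12$ set against boundary motion of at least $4\varepsilon/5$, the same $\chi^2$-type per-draw KL bound with the padding-lemma testing argument, and the same upper-confidence estimate of $\omega$ from the frequency of $h=3$, with gain loss controlled by boundary sensitivity and the variance bound $9/4$. The only real deviation is in (a), where you separate at the $\omega=2$ boundary $T_2$ instead of the paper's fixed step $\tsepM=\tfrac13\log(1/(16\pM))$. That choice also works, since the $\omega=1$ gain at $T_2$ is about $0.15<\tfrac34\cdot\tfrac{27}{100}$, but the fixed step avoids having to localize $T_2$.
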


\begin{proof}[Proof of \cref{thm:matched-moment-calibration}]
We first establish a uniform lower bound on gain.
We then prove the sample lower bounds by separating useful-step intervals
and obtain the upper bound by confidence calibration.

\noindent\textbf{Uniform gain bound.}
A feasible candidate gives a lower bound on the optimal gain, uniformly in \(\omega\).
Take \(t_{\mathrm{M,cand}}=\frac13\log(1/(10\omega\pM))\), and put \(z=e^{t_{\mathrm{M,cand}}}\geq1\).
This step gives the state \(h=3\) a fixed unnormalized weight. The weights
of \(h=1,2\), involving lower powers of the tilt, remain small.
Then \(\omega\pM z^3=1/10\), \(\pM z\leq\pM z^2\), and
\((\pM z^2)^3=\pM/(100\omega^2)\leq1/4096^3\).
Both middle coefficients in \cref{eq:matched-moment-law} are at most four,
giving
\begin{equation}
 Z_\omega(t_{\mathrm{M,cand}})\leq\frac{11}{10}+\frac8{4096}
 <\frac{221}{200}\leq e^{1/10}.
\end{equation}
Thus \(t_{\mathrm{M,cand}} \) is feasible. Its first-moment numerator is at least \(3/10\),
and \(Z_\omega(t_{\mathrm{M,cand}})<111/100\), so
\begin{equation}
 \Delta_\omega^\star\geq\frac{10}{37}-12\pM
 \geq\frac{10}{37}-\frac1{10000}>\frac{27}{100}.
 \label{eq:matched-moment-gain-lower}
\end{equation}

\noindent\textbf{Part (a): a fixed separator.}
For part~\textup{(a)}, set \(\tsepM=\frac13\log(1/(16\pM))\) and \(z=e^{\tsepM}\).
This step is too short to attain the required gain under the first law and infeasible under the
second.
At this step, \(\pM z^3=1/16\) and \(\pM z\leq\pM z^2\leq1/4096\).
Under \(\omega=1\), \(Z_1(\tsepM)\geq1-6\pM\geq31/32\), and hence
\begin{equation}
 J_1(\tsepM)\leq\frac{3/16+9/4096}{31/32}
 <\frac15<\frac34\frac{27}{100}.
\end{equation}
Every useful output under this law therefore exceeds \(\tsepM\).
Under \(\omega=2\), \(Z_2(\tsepM)\geq9/8-7\pM>1.12\).
Using \(\log(1+x)\geq x-x^2/2\) for \(x\geq0\) gives
\(\log Z_2(\tsepM)>0.11\). Since \(\pM\log(1/\pM)\leq2\sqrt\pM\),
\begin{equation}
 {\mathcal K}_2(\tsepM)>0.11-8\sqrt\pM>0.1.
\end{equation}
The bound on \(\pM\log(1/\pM)\) follows from \(\log x\leq x-1\) at
\(x=1/\sqrt\pM\). Every feasible output under \(\omega=2\) is below \(\tsepM\).
The mass differences between the two base laws are \(\pM(-1,3,-3,1)\),
so their total variation is \(4\pM\). Their useful-step sets are disjoint, so
\cref{lem:padding} bounds the error sum below by \((1-4\pM)^m\), proving \cref{eq:matched-moment-constant-lower}.
At \(\delta=1/3\), use \(\log(3/2)\geq1/3\),
\(-\log(1-4\pM)\leq4\pM/(1-4\pM)\), and \(\pM\leq1/16\).

\noindent\textbf{Part (b): interval width and boundary motion.}
For part~\textup{(b)}, restrict \(\omega\) to \([1,3/2]\).
Let \(L_{\omega,\varepsilon}\in(0,T_\omega)\) be the unique step with
\(\Delta_\omega(L_{\omega,\varepsilon})=(1-\varepsilon)\Delta_\omega^\star\).
The useful-step interval is \([L_{\omega,\varepsilon},T_\omega]\).
The contribution of \(h=3\) to the normalizer and the boundary equation give
\begin{equation}
 12\pM T_\omega\leq\frac{4\pM}{1-4\pM}\left(\frac1{10}-\log\pM\right)
 \leq8\left(\frac\pM {10}+2\sqrt\pM\right)<\frac1{1024}.
\end{equation}
Using \(e^x\leq(1-x)^{-1}\) for \(0\leq x<1\), we obtain
\(Z_\omega(T_\omega)=e^{1/10+12\pM T_\omega}\leq9/8\).
Since \(h\leq3\),
\begin{equation}
 J_\omega(T_\omega)\leq3\left(1-\frac{P_{\pM,\omega}(0)}{Z_\omega(T_\omega)}\right)
 \leq\frac13+\frac{52}{3}\pM\leq\frac7{20}.
 \label{eq:matched-moment-boundary-upper}
\end{equation}

Write \(q_j=P_{\pM,\omega,t}(h=j)\) and \(z=e^t\).
For \(0\leq t\leq T_\omega\), \(\omega\pM z^3\leq Z_\omega(t)\leq9/8\), so
\((\pM z^2)^3\leq81\pM/64\leq1/512^3\).
Also \(Z_\omega(t)\geq1\), \(z\geq1\), \(3\omega-2\leq5/2\), and
\(7-3\omega\leq4\). Consequently \(q_1+q_2\leq13/1024\).
On the useful-step interval,
\(189/800=(7/8)(27/100)\leq J_\omega(t)\leq7/20\).
Under the tilted law, the identity \(\E h^2=3J_\omega(t)-2(q_1+q_2)\) yields
\begin{equation}
 J_\omega'(t)=3J_\omega(t)-J_\omega(t)^2-2(q_1+q_2)
 \geq3\frac{189}{800}-\left(\frac{189}{800}\right)^2
       -\frac{13}{512}>\frac35.
\end{equation}
Here \(3J-J^2\) increases on the stated interval.
Integrating and using \cref{eq:matched-moment-boundary-upper} gives
\begin{equation}
 T_\omega-L_{\omega,\varepsilon}\leq\frac{\varepsilon\Delta_\omega^\star}{3/5}
 \leq\frac7{12}\varepsilon.
 \label{eq:matched-moment-success-width}
\end{equation}

The implicit-function theorem applies to \({\mathcal K}_\omega(T_\omega)=1/10\), since
\(\partial_t{\mathcal K}_\omega(T_\omega)=\Delta_\omega^\star>0\).
Uniqueness makes the local solutions agree on overlaps, so they define a
differentiable boundary function on a neighborhood of \([1,3/2]\).
Differentiating in \(\omega\), with \(z=e^{T_\omega}\), gives
\begin{equation}
 -\partial_\omega T_\omega=\frac{\pM(z-1)^3}{Z_\omega(T_\omega)(J_\omega(T_\omega)-12\pM)}.
\end{equation}
The boundary satisfies \(z>64\): at \(t=\log64\),
\(Z_\omega(t)\leq1+409760\pM<11/10\), so \({\mathcal K}_\omega(t)<1/10\).
We enlarge the denominator by dropping the negative term
\(-12\pM Z_\omega(T_\omega)\) and bounding the three remaining coefficients
using the stated parameter range. This gives
\begin{equation}
 -\partial_\omega T_\omega\geq
 \frac{(1-1/z)^3}{(5/2)/z^2+8/z+9/2}
 \geq\frac{(63/64)^3}{(5/2)/64^2+8/64+9/2}>\frac15.
\end{equation}
Thus \(T_1-T_{1+4\varepsilon}\geq(4/5)\varepsilon\), which exceeds
the width in \cref{eq:matched-moment-success-width}.
In particular, \(L_{1,\varepsilon}>T_{1+4\varepsilon}\).

\noindent\textbf{Per-observation KL bound.}
The base mass differences are \(\varepsilon\pM(-4,12,-12,4)\).
Applying \(\log x\leq x-1\) termwise gives
\begin{equation}
 \begin{aligned}
 \KL(P_{\pM,1}\Vert P_{\pM,1+4\varepsilon})
 &\leq\varepsilon^2\pM\left[
 \frac{16\pM}{1-(6+4\varepsilon)\pM}+\frac{144}{1+12\varepsilon}
 +\frac{144}{4-12\varepsilon}+\frac{16}{1+4\varepsilon}\right]\\
 &\leq\varepsilon^2\pM\left[1+144+\frac{288}{5}+16\right]
 <220\pM\varepsilon^2.
 \end{aligned}
\end{equation}
Thresholding the output between the disjoint intervals defines a test
with both errors at most \(\delta\).
\Cref{lem:padding} and monotonicity in the two test probabilities give
\begin{equation}
 m\KL(P_{\pM,1}\Vert P_{\pM,1+4\varepsilon})
 \geq(1-2\delta)\log((1-\delta)/\delta).
\end{equation}
The supplied quantities are identical under both laws. This proves \cref{eq:matched-moment-accuracy-lower}.

\noindent\textbf{Part (c): confidence calibration and rounding.}
The supplied scale converts the observed frequency of \(h=3\) into an
estimate of \(\omega\). We add a positive offset to this estimate and clip the result to the allowed parameter interval.
On the concentration event below, the resulting step stays within the true KL boundary.
We then bound the gain lost by this adjustment.
For part~\textup{(c)}, let \(\widehat q\) be the observed frequency of
\(h=3\), and set
\begin{equation}
 \omega_U=\min\left\{\frac32,\max\left\{1,\frac{\widehat q}{\pM}
                                      +\frac{\varepsilon}{100}\right\}\right\}.
\end{equation}
The true probability of \(h=3\) is \(\omega\pM\).
Multiplicative Bernoulli concentration with relative tolerance
\(\varepsilon/200\) bounds the failure probability by
\(2\exp(-m\omega\pM\varepsilon^2/120000)\) \citep{Hoeffding1963}.
Under \cref{eq:matched-moment-accuracy-upper}, with probability at least
\(1-\delta\),
\begin{equation}
 \left|\frac{\widehat q}{\pM}-\omega\right|\leq\frac{3\varepsilon}{400},
 \qquad\omega\leq\omega_U\leq\omega+\frac{7\varepsilon}{400}<\omega+\frac{\varepsilon}{50}.
\end{equation}
Return the boundary \(T_{\omega_U}\) of the upper estimate. Since \(\partial_\omega {\mathcal K}_\omega(t)=\pM(e^t-1)^3/Z_\omega(t)\geq0\),
this step is feasible under the true law on the same event.
At a boundary, \(\partial_\omega {\mathcal K}_\omega\leq\pM e^{3T_\omega}/Z_\omega(T_\omega)=q_3/\omega\leq1\).
Together with \cref{eq:matched-moment-gain-lower}, this gives
\(\lvert\partial_\omega T_\omega\rvert\leq100/27\), and hence
\(0\leq T_\omega-T_{\omega_U}\leq2\varepsilon/27\).
Every law on \([0,3]\) has variance at most \(9/4\), as follows by
centering at \(3/2\). Therefore
\begin{equation}
 \Delta_\omega^\star-\Delta_\omega(T_{\omega_U})
 \leq\frac94(T_\omega-T_{\omega_U})\leq\frac{\varepsilon}{6}
 <\varepsilon\Delta_\omega^\star.
\end{equation}
For a rational output, take \(\Nround=\lceil100/\varepsilon\rceil\) and return
\(\lfloor\Nround T_{\omega_U}\rfloor/\Nround\). This is a measurable nonnegative downward
rounding with error at most \(\varepsilon/100\).
It preserves feasibility and adds at most \(9\varepsilon/400\) gain loss.
The total gain loss is less than \(0.19\varepsilon<\varepsilon\Delta_\omega^\star\)
by \cref{eq:matched-moment-gain-lower}, proving the gain guarantee.
\end{proof}

By \cref{eq:accuracy-confidence-order}, the confidence factor in part~\textup{(b)}
is of order $\log(1/\delta)$ uniformly for $0<\delta\leq1/3$.
Together with part~\textup{(c)}, this proves the optimal worst-case sample
complexity in \cref{thm:matched-moment-main}.

\subsection{Calibration bounds and computational reduction}
\label{app:fixed-budget-bounds-proof}

We use event tilts to isolate the cost of scalar calibration when the
natural gradient is already known. We give explicit constants for
$0<p\leq p_0=2^{-10}$.
\Cref{thm:fixed-budget-calibration} supplies the boundary and confidence
bounds used in \cref{thm:accuracy-calibration}. Its separation argument
also supports the proof of \cref{cor:fixed-budget-hardness} below.
Let $P_0$ be a finite base law and $A$ a known event of unknown mass
$p=P_0(A)\in(0,p_0]$. For $t\geq0$, define the event tilt and objective by
\begin{equation}
 P_t(s)=\frac{e^{t\one_A(s)}P_0(s)}{1-p+pe^t},\qquad
 J(t)=q_p(t)=\frac{pe^t}{1-p+pe^t}.
\end{equation}
Its population KL cost, gain and optimal feasible gain are
\begin{equation}
 \begin{aligned}
 \mathcal K_p(t)&=\KL(P_0\Vert P_t)=\log(1-p+pe^t)-pt,\\
 \Delta_p(t)&=q_p(t)-p,\qquad
 \Delta_p^\star=\max_{t\geq0:\,\mathcal K_p(t)\leq1/10}\Delta_p(t).
 \end{aligned}
\end{equation}
Write $F(t)$ for the scalar Fisher information of this tilt.
A procedure receives the family specification, event-membership access and
the exact natural gradient $1$. Independent base draws are its only further
information about $p$. Neither the initial gradient nor scalar Fisher information is supplied.
Write
$\binaryKL(a\Vert b)=a\log(a/b)+(1-a)\log((1-a)/(1-b))$
for Bernoulli KL, with boundary values defined by limits.
Write $\logit(q)=\log(q/(1-q))$ for $0<q<1$.

The required output is a tilt parameter, which matters even for a binary
base law. At KL budget $1/10$, the fixed law $\Bern(0.09)$ is feasible
and 75\%-useful for every $0<p\leq2^{-10}$. Its prescribed tilt,
however, is $t=\logit(0.09)-\logit(p)$ and depends on the unknown mass.
Thus specifying a useful law need not require the information needed to
select its tilt.

\begin{theorem}[Useful-step calibration with an exact natural gradient]
\label{thm:fixed-budget-calibration}
For the event-tilt model above with $0<p\leq p_0=2^{-10}$, the following
hold under the stated sampling access. A feasible step is a scalar
$t\geq0$ satisfying $\mathcal K_p(t)\leq1/10$, and its gain is $\Delta_p(t)$.
\begin{enumerate}[label=\textup{(\alph*)},leftmargin=*]
 \item\textup{Exact natural gradient and attainable improvement.}
 The scalar Fisher information satisfies
 $J'(t)=F(t)=q_p(t)(1-q_p(t))$, so $F(t)^{-1}J'(t)=1$.
 Moreover, $\Delta_p^\star\geq9/100-p$ and
 $\lim_{p\downarrow0}\Delta_p^\star=1-e^{-1/10}$.
 \item\textup{Sample lower bound.}
 For every $0<p\leq p_0/2$, there exist two base laws with the same
 known event $A$ and event probabilities $p$ and $2p$.
 Consider any procedure using at most $m$ draws. Success means returning a
 feasible step attaining at least $3/4$ of optimal gain.
 If the procedure succeeds with probability at least
 $1-\delta$ under each law, $0<\delta\leq1/3$, then
 $m\geq\log(1/(2\delta))/[-\log(1-p)]$.
 At $\delta=1/3$, this gives $m\geq1/(3p)$, regardless of
 computational power.
 \item\textup{Confidence-based calibration.}
 Fix $0<\delta<1$. For any known event $A$, let $\widehat p$
  be its observed frequency in $m\geq1$ draws.
  Use Bernoulli-KL inversion \citep{GarivierCappe2011} to form the interval
  $\mathcal I_m=\{r\in[0,1]:m\,\binaryKL(\widehat p\Vert r)
  \leq\log(2/\delta)\}$.
  Let $\widehat t_m$ be the largest $t\geq0$ satisfying $\mathcal K_r(t)\leq1/10$
  for every $r\in\mathcal I_m$.
 For every fixed $m$, this rule is KL-feasible with probability
 at least $1-\delta$. If $m\geq(2048/p)\log(8/\delta)$, the returned
 step is feasible and attains at least $3/4$ of optimal gain with
 probability at least $1-\delta$ under $P_0$.
\end{enumerate}
\end{theorem}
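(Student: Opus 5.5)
I prove the three parts in order, mirroring the four-state argument of \cref{thm:matched-moment-calibration}.

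\textbf{Part (a).} Differentiating $J(t)=q_p(t)$ gives $q_p'=q_p(1-q_p)$. The tilted score is $\one_A-q_p(t)$, so $F(t)=\Var_{P_t}\one_A=q_p(1-q_p)$ and the natural gradient is $1$.

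For the gain bound, take the candidate $t_c$ with $q_p(t_c)=9/100$. Its cost is
\[
\mathcal K_p(t_c)=\log\frac{1-p}{1-q_p(t_c)}-pt_c\leq-\log(91/100)<1/10,
\]
so $t_c$ is feasible and $\Delta_p^\star\geq9/100-p$. For the limit, note $\mathcal K_p'(t)=q_p(t)-p>0$, so the optimum sits at the boundary $T_p$ where $\mathcal K_p(T_p)=1/10$. There $e^{T_p}\approx 1/p$ and $pT_p\to0$. Writing $\log(1-p+pe^t)=1/10+pT_p$ gives
\[
q_p(T_p)=1-\frac{1-p}{e^{1/10+pT_p}}\to1-e^{-1/10}.
\]

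\textbf{Part (b).} I use a separator $\tsep$, chosen so that useful outputs under mass $p$ exceed it and feasible outputs under mass $2p$ lie below it. The key fact is that $\mathcal K_r(t)$ increases in $r$ for $t>0$ in this regime, since $\partial_r\mathcal K_r(t)=(e^t-1)/(1-r+re^t)-t$, which is positive at large $t$.

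\begin{enumerate}
\item Let $T_{2p}$ be the mass-$2p$ boundary, so every feasible output under $2p$ is at most $T_{2p}$.
\item Show that the gain at $T_{2p}$, computed under mass $p$, is below $\tfrac34\Delta_p^\star$. At $T_{2p}$ the quantity $pe^{t}$ is roughly halved compared with $T_p$, so $q\approx0.05$. This is well below $\tfrac34\cdot(1-e^{-1/10})\approx0.071$, and explicit constants for $p\leq2^{-11}$ make it rigorous.
\item Since the useful sets are disjoint, \cref{lem:padding} gives $2\delta\geq(1-\TV)^m$. Because $\TV(\Bern(p),\Bern(2p))=p$, this yields $m\geq\log(1/(2\delta))/(-\log(1-p))$.
\item At $\delta=1/3$, use $\log(3/2)\geq1/3$ and $-\log(1-p)\leq p/(1-p)$.
\end{enumerate}

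\textbf{Part (c).} Feasibility comes from coverage. By the Chernoff/KL inequality for Bernoulli counts, $r_{\text{true}}\in\mathcal I_m$ with probability at least $1-\delta$ for every $m$. On that event, the defining property of $\widehat t_m$ gives $\mathcal K_p(\widehat t_m)\leq1/10$.

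For gain, multiplicative Chernoff with $m\geq(2048/p)\log(8/\delta)$ places $\mathcal I_m$ inside $[p/c,\,cp]$ with $c$ close to $1$, say $c\leq 5/4$. The worst-case cost over $\mathcal I_m$ then lies below the cost at $r=\sup\mathcal I_m$. The key observation is that $\max_{r\in\mathcal I_m}\mathcal K_r(t)$ for $t$ near $T_p$ is attained at the upper end of $\mathcal I_m$ (the interior maximizer is $r=1/t-1/(e^t-1)\approx1/t$, far above $cp$). Hence $\widehat t_m\geq T_{cp}$.

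It remains to show $q_p(T_{cp})\geq\tfrac34\Delta_p^\star+p$. This uses the same boundary-motion estimate as part (b): $e^{T_{cp}}\approx e^{T_p}/c$, so the gain is at least $q\approx1-e^{-1/10}$ scaled by $1/c$, which stays above $3/4$ of the optimum.

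\textbf{Main obstacle.} The main obstacle is the explicit boundary comparison. It appears in two places: showing $q_p(T_{2p})<\tfrac34\Delta_p^\star$ in (b), and showing $q_p(T_{cp})\geq\tfrac34\Delta_p^\star$ in (c), each with clean constants uniform in $p\leq2^{-10}$. My plan is to solve the boundary equation as $e^{T_r}=(e^{1/10+rT_r}-1+r)/r$, bound $rT_r\leq r\log(2/r)$ by a tiny constant, and substitute into $q_p$.
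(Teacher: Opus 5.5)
Your parts (a) and (b) go through. In (a), the candidate with $q_p(t_c)=9/100$ is a valid substitute for the paper's candidate $\log(1+1/(10p))$. In (b), the implicit separator $T_{2p}$ works. The boundary equation gives $2pe^{T_{2p}}=e^{1/10+2pT_{2p}}-1+2p$, and $2pT_{2p}\le2p\log(1/p)<0.008$ for $p\le2^{-11}$, so $q_p(T_{2p})<0.055$. Every useful output at $p$ needs $q_p(t)\ge27/400+p/4$, so the two sets are disjoint. The paper uses the explicit $\tsep(p)=\log(1/(16p))$ instead, because that computable threshold and its fixed margin $\mathcal K_{2p}(\tsep(p))>0.1079$ are reused in the SAT reduction. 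For this statement alone, either choice suffices. One arithmetic slip: $\log(3/2)\ge1/3$ together with $-\log(1-p)\le p/(1-p)$ yields only $m\ge(1-p)/(3p)$. Use $(1-p)\log(3/2)\ge1/3$ instead.

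The gap is in the gain half of (c). You localize $\mathcal I_m$ through a separate multiplicative Chernoff event. However, the coverage event $\{m\binaryKL(\widehat p\Vert p)\le\log(2/\delta)\}$ already consumes the whole failure budget $\delta$ (two tails of $\delta/2$ each). A union bound therefore only guarantees success probability $1-\delta-\delta'$, not $1-\delta$. Chernoff also controls only $\widehat p$, so you would still have to pass from $\widehat p$ to every $r$ in the KL ball.

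The missing idea is that the localization is deterministic on the coverage event. There, $p$ and every $r\in\mathcal I_m$ lie in the same ball of radius $\log(2/\delta)/m$ around $\widehat p$. From $-\log x\ge2(1-\sqrt x)$ you get $\binaryKL(a\Vert b)\ge(\sqrt a-\sqrt b)^2+(\sqrt{1-a}-\sqrt{1-b})^2$. The triangle inequality then gives $\sqrt r\le\sqrt p+2\sqrt{\log(2/\delta)/m}$, and the stated sample size yields $\max\mathcal I_m\le(17/16)^2p$. Only this upper endpoint matters.

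Keep $c=(17/16)^2$ rather than relaxing to $5/4$. With $c=5/4$, your estimate $\Delta_p(T_{cp})\ge(e^{1/10}-1)/(c+e^{1/10}-1)-p$ is about $0.0766$ at $p=p_0$. That is below $\tfrac34\cdot0.103$, where $0.103$ is the uniform upper bound on $\Delta_p^\star$, so the comparison would need sharper bounds on both sides.

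Your interior-maximizer argument for monotonicity in $r$ is correct, but the paper's route is simpler. It uses the candidate $t=\log(1+1/(10p_U))$ with $p_U=\max\mathcal I_m$. Then $\mathcal K_r(t)\le\log(1+r(e^t-1))\le\log(11/10)$ holds for every $r\le p_U$ at once, and the gain $p(1-p)/(10p_U+p)$ exceeds $0.0812>\tfrac34\cdot0.103$.

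Finally, your claim of feasibility for every $m$ presupposes that $\widehat t_m$ exists. This holds because $\log(2/\delta)>0$ forces $\mathcal I_m$ to contain a mass in $(0,1)$. That makes $\max_{r\in\mathcal I_m}\mathcal K_r(t)$ continuous and unbounded in $t$.
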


\begin{proof}[Proof of \cref{thm:fixed-budget-calibration}]
The KL boundary locates the optimal gain and separates useful outputs
under two event masses. For the upper bound, the returned step is feasible whenever the confidence interval contains the true mass.
Controlling the interval's width also gives usefulness on this event.

\noindent\textbf{Boundary geometry.}
Throughout, \(0<p\leq p_0=2^{-10}\).
The identities \(q_p'=q_p(1-q_p)>0\),
\(\mathcal K_p'=q_p-p>0\) for \(t>0\), and
\(\mathcal K_p(t)=(1-p)t+\log p+o(1)\) as \(t\to\infty\), together with
\(\mathcal K_p(0)=0\), give a unique positive boundary \(T(p)\) satisfying
\(\mathcal K_p(T(p))=1/10\). Strict increase of \(q_p\) places the optimum at
this boundary.
Substituting \(t=\log[q(1-p)/(p(1-q))]\) gives
\begin{equation}
 \mathcal K_p(t)=\binaryKL(p\Vert q_p(t)).
\label{eq:fixed-budget-binary-kl}
\end{equation}
The benchmark also optimizes event-probability gain over all distributions
on the base support. The KL chain rule gives
$\KL(P_0\Vert Q)\geq\binaryKL(p\Vert Q(A))$, with equality when $Q$
preserves the conditional laws inside $A$ and its complement. The event
tilt has exactly these conditional laws, so its optimal gain equals the
unrestricted event-probability optimum.

\noindent\textbf{Part (a): local information and attainable gain.}
The score of the finite tilt family is
\(\partial_t\log P_t(s)=\one_A(s)-q_p(t)\).
Its variance gives
\(F(t)=q_p(t)(1-q_p(t))=J'(t)>0\), so the full natural gradient is one.
The candidate \(t_{\mathrm{B,cand}}=\log(1+1/(10p))\) has normalizer \(11/10\), so
\(\mathcal K_p(t_{\mathrm{B,cand}})=\log(11/10)-pt_{\mathrm{B,cand}}\leq1/10\) and
\(q_p(t_{\mathrm{B,cand}})-p=(1-p)/11\geq9/100-p\).
For the limit, the bound \(\mathcal K_p(t)\geq(1-p)t+\log p\) gives
\begin{equation}
 0\leq pT(p)\leq\frac{p/10-p\log p}{1-p}\longrightarrow0.
\end{equation}
At the boundary, the normalizer equals \(e^{1/10+pT(p)}\). Hence
\begin{equation}
 \Delta_p^\star=(1-p)\bigl(1-e^{-1/10-pT(p)}\bigr)
 \longrightarrow1-e^{-1/10}.
\end{equation}
Replacing \(1/10\) by any fixed \(\eta>0\) in this argument gives
limiting gain \(1-e^{-\eta}\).

\noindent\textbf{Part (b): separation and indistinguishable samples.}
We choose a step too short to be useful under the smaller mass and already
infeasible under the larger mass, turning calibration into a two-law test.
Choose \(\tsep(p)=\log(1/(16p))\) to separate the two laws:
the factor \(16\) gives limiting event probability \(1/17<27/400\)
at mass \(p\), but limiting KL \(\log(9/8)>1/10\) at mass \(2p\).
Usefulness requires \(q_p(t)\geq27/400+p/4\), whereas for \(p\leq p_0/2\),
\(q_p(\tsep(p))=1/(17-16p)\leq128/2175<3/50\).
Thus every useful output at \(p\) exceeds \(\tsep(p)\).
For the alternative mass, both terms in the expression below decrease on
\((0,p_0]\), because their derivatives \(-2/(9/8-2p)\) and \(2(\log(16p)+1)\)
are negative there. Evaluating at \(p_0\) therefore bounds the cost for every
\(p\leq p_0/2\):
\begin{equation}
 \mathcal K_{2p}(\tsep(p))=\log(9/8-2p)+2p\log(16p)
 \geq\kappa_{\mathrm{sep}}:=\log(575/512)-\frac{\log64}{512}
 >0.1079225.
 \label{eq:fixed-budget-count-gap}
\end{equation}
The fixed margin above the KL budget will also allow finite-precision
thresholding in the computational reduction below.
Every feasible output at \(2p\) is therefore below
\(\tsep(p)\). Bernoulli base laws on the same event \(\{1\}\) have
the same supplied vector, identically one.
The laws \(\Bern(p)\) and \(\Bern(2p)\) have total variation distance \(p\), and
their useful-step sets are disjoint. \Cref{lem:padding} therefore bounds the
error sum below by \((1-p)^m\), while the two success guarantees bound it above
by \(2\delta\).
Hence \((1-p)^m\leq2\delta\), proving the lower bound.
At \(\delta=1/3\), \(-\log(1-p)\leq p/(1-p)\) gives
\(m\geq(1-p)\log(3/2)/p\geq1/(3p)\) on the stated range.

\noindent\textbf{Part (c): confidence calibration.}
We first show that the true mass belongs to the confidence interval with
the required probability and that the confidence boundary defines a finite
step for every sample outcome.
The exponential Markov inequality, minimized over its parameter, gives
\(\Prob_p(\widehat p\geq x)\leq e^{-m\binaryKL(x\Vert p)}\) for \(x>p\).
Negative parameters give the lower tail \citep{Hoeffding1963}.
Monotonicity on each side and endpoint limits yield
\begin{equation}
 \Prob\{m\binaryKL(\widehat p\Vert p)>\log(2/\delta)\}\leq\delta.
 \label{eq:event-confidence-coverage}
\end{equation}
As a sublevel set of the lower-semicontinuous convex function
\(r\mapsto\binaryKL(\widehat p\Vert r)\), \(\mathcal I_m\) is a compact
interval. Because \(\log(2/\delta)>0\), it contains a mass in \((0,1)\)
even at zero or full counts. The function
\(\mathcal K^{\max}_{\mathcal I_m}(t):=\max_{r\in\mathcal I_m}\mathcal K_r(t)\) is continuous, with
\(\mathcal K^{\max}_{\mathcal I_m}(0)=0\), and diverges as \(t\to\infty\).
For \(t>0\), a maximizing mass lies in \((0,1)\), since
\(\mathcal K_0(t)=\mathcal K_1(t)=0\) and \(\mathcal K_r(t)>0\) for \(0<r<1\).
Evaluating at such a maximizer shows that \(\mathcal K^{\max}_{\mathcal I_m}\) is strictly increasing.
It therefore has a unique finite \(1/10\) boundary, which defines
\(\widehat t_m\) and can be bracketed by doubling and approximated by
bisection. On the coverage event, this step is feasible for \(p\).
For implementation, $r\mapsto\mathcal K_r(t)$ is concave at each fixed
$t>0$. Its unconstrained maximum occurs at $r=1/t-1/(e^t-1)$. Over $\mathcal I_m$, use
this value if it lies in the interval, and otherwise the nearest endpoint. This evaluates the
worst-case KL needed at each step of the scalar boundary search.

To establish usefulness, we bound the largest plausible mass and
exhibit a step feasible for every mass up to that bound.
Put \(\Lambda_8=\log(8/\delta)\geq\log(2/\delta)\), and condition on coverage.
Applying \(-\log x\geq2(1-\sqrt x)\) to the Bernoulli outcomes gives
\begin{equation}
 \binaryKL(a\Vert b)\geq
 (\sqrt a-\sqrt b)^2+(\sqrt{1-a}-\sqrt{1-b})^2.
 \label{eq:event-square-root-kl-bound}
\end{equation}
Endpoint limits preserve this inequality, allowing infinite KL.
For \(r,p\in\mathcal I_m\), applying this bound to
\((\widehat p,r)\) and \((\widehat p,p)\), then using the triangle
inequality, gives
\(\sqrt r\leq\sqrt p+2\sqrt{\Lambda_8/m}\).
We relax \(2\sqrt{\Lambda_8/m}\) to \(\sqrt{8\Lambda_8/m}\).
The sample bound with coefficient \(2048\) ensures
\(\sqrt{8\Lambda_8/(mp)}\leq1/16\), yielding
\begin{equation}
 p\leq {p_{\mathrm U}}:=\max\mathcal I_m
 \leq(\sqrt p+\sqrt{8\Lambda_8/m})^2\leq(17/16)^2p=289p/256.
 \label{eq:fixed-budget-uniform-mass-bound}
\end{equation}
The candidate \(t_{\mathrm{B,conf}}=\log(1+1/(10{p_{\mathrm U}}))\) has
\(\mathcal K_r(t_{\mathrm{B,conf}})\leq\log(1+r/(10{p_{\mathrm U}}))\leq\log(11/10)<1/10\)
for every \(r\leq {p_{\mathrm U}}\). Hence \(\widehat t_m\geq t_{\mathrm{B,conf}}\), and
\begin{equation}
 \Delta_p(\widehat t_m)\geq\frac{p(1-p)}{10{p_{\mathrm U}}+p}
 \geq\frac{1-p_0}{10(17/16)^2+1}>0.08129.
 \label{eq:fixed-budget-reward-lower}
\end{equation}
For binary entropy \(H(p)\), feasible \(q\) satisfies
\((1-p)\log(1/(1-q))=\mathcal K_p+H(p)+p\log q\leq1/10+H(p)\).
Therefore
\begin{equation}
 \Delta_p^\star\leq1-\exp\left(-\frac{1/10+H(p)}{1-p}\right)
 \leq1-\exp\left(-\frac{1/10+H(p_0)}{1-p_0}\right)<0.103.
 \label{eq:fixed-budget-optimum-upper}
\end{equation}
The exponent magnitude increases with \(p\), with derivative
\((1/10-\log p)/(1-p)^2\).
For the final inequality, use
\(H(p_0)\leq p_0\log(1/p_0)+p_0\) and \(\log2<7/10\):
the exponent magnitude is below 0.108, and
\(1-e^{-0.108}\leq0.108-0.108^2/2+0.108^3/6<0.103\).
Since \(0.08129>(3/4)0.103=0.07725\), both conditions for a useful step hold on the same coverage event.
The guaranteed gain exceeds the required fractional gain by more than 0.004,
so \(\Delta_p'\leq1/4\) permits
downward rational rounding by at most \(10^{-3}\) while preserving both
conditions. Rounding can be chosen nonnegative, and monotonicity of
\(\mathcal K_p\) preserves feasibility.
\end{proof}

\noindent\textbf{Encoding and complexity conventions.}
We use the standard binary encoding of Boolean formulas. This gives a
succinct model description because it specifies probabilities without
listing the full state space. Input and output lengths are measured in bits.
NP consists of decision problems whose yes answers have polynomial-length
certificates verifiable in polynomial time. BPP consists of decision problems
solvable in randomized polynomial time with error at most one third on every
input \citep{AroraBarak2009}.

\begin{proof}[Proof of \cref{cor:fixed-budget-hardness}]
We reduce SAT to calibration by constructing an event from the input
formula and thresholding the step returned by a hypothetical calibration
procedure. The threshold depends only on the formula's number of variables.
We establish finite-precision separation and polynomial-length valid outputs.

We first specify the event used in the main-text corollary.
Draw $B_{\mathrm{tag}}\in\{0,1\}^{11}$ and $X\in\{0,1\}^n$
independently and uniformly, and define
\begin{equation}
 A_\varphi=\{B_{\mathrm{tag}}=0^{11},\varphi(X)=1\}
 \cup\{B_{\mathrm{tag}}=1^{11},X=0^n\}.
\end{equation}
The tag bits separate the satisfying assignments from one additional
accepted state. Their number ensures that even $2^n+1$ accepted states
have total mass below $p_0=2^{-10}$, as required by the calibration bounds.
The two blocks are disjoint, so this event has mass \(p=K/\DB\), where
\(K=1+\SATcount(\varphi)\), \(\DB=2^{n+11}\), and \(p<p_0\).
This identity describes the mass. We construct the event and its base
sampler without computing \(K\).
Set \(\tsepB=(n+7)\log2=\log(\DB/16)\).
When \(K=1\), useful outputs have event probability at least
\(27/400+p/4\), while \(q_p(\tsepB)<3/50\).
Since \(q_p'\leq1/4\), the probability gap
\(27/400-3/50=3/400\) gives \(t>\tsepB+4(3/400)=\tsepB+0.03\).

For \(K\geq2\), \(p\in[2/\DB,p_0]\).
Unlike the sampling pair, the satisfiable case allows many masses,
so separation must hold throughout \([2/\DB,p_0]\).
Concavity in the mass, from
\begin{equation}
 \partial_r \mathcal K_r(t)=\frac{e^t-1}{1+r(e^t-1)}-t,\qquad
 \partial_r^2\mathcal K_r(t)=-\frac{(e^t-1)^2}{(1+r(e^t-1))^2}<0,
 \label{eq:fixed-budget-count-concavity}
\end{equation}
reduces the minimum at \(t=\tsepB\) to the endpoints.
At \(2/\DB\), \eqref{eq:fixed-budget-count-gap} gives cost at least \(\kappa_{\mathrm{sep}}\).
At \(p_0\), since \(\DB\geq2^{14}\),
\begin{equation}
 \mathcal K_{p_0}(\tsepB)\geq\mathcal K_{p_0}(\log1024)
 =\log(2047/1024)-\log(1024)/1024>0.685>\kappa_{\mathrm{sep}}.
\end{equation}
Thus any feasible output satisfies
\(\tsepB-t\geq\mathcal K_p(\tsepB)-\mathcal K_p(t)\geq\kappa_{\mathrm{sep}}-1/10>0.0079\),
using \(0\leq\mathcal K_p'\leq1\).
A polynomial-time rational approximation to \(\tsepB\) within \(10^{-3}\)
separates these cases. For example, the convergent expansion
\(\log2=2\sum_{j\geq0}3^{-(2j+1)}/(2j+1)\) needs only
\(\bigO(\log(n+7))\) terms to achieve error below
\(10^{-3}/(n+7)\), using rational arithmetic of polynomial bit length.

Useful steps of polynomial bit length exist. The step
\(t=\log(1+1/(10p))=\bigO(n)\) has cost at most \(\log(11/10)<1/10\) and gain
\((1-p)/11\geq(1-p_0)/11>0.09082>0.07725\).
Downward rounding within \(10^{-3}\) preserves feasibility and usefulness.
The rounded step needs only \(\bigO(\log(n+1))\) integer bits and a
constant number of fractional bits. This existence argument uses \(p\), which is not supplied to the procedure.
The predicate is polynomial-time constructible and uses at most one
formula evaluation, and its base sampler uses \(n+11\) fair bits.
Fresh fair bits simulate every draw the procedure takes, and the supplied
natural gradient is one for every formula. Run a hypothetical calibration procedure once on this input.
Comparing its returned step with the rational threshold decides the original
formula's satisfiability, giving the stated randomized implication
\citep{AroraBarak2009}.
\end{proof}

For the confidence rule in \cref{thm:fixed-budget-calibration}(c),
at \(\delta=0.05\) the sufficient bound is \(m\geq10394/p\),
about \(1.04\times10^9\) draws at \(p=10^{-5}\).

\noindent\textbf{Dependence on divergence orientation.}
\phantomsection\label{app:divergence-orientation}
The positive limiting gain above depends on the orientation of KL.
The following bounds isolate that dependence by examining feasible gain
before asking how many observations calibration would require.
For an event tilt, let $q=P_t(A)\geq p=P_0(A)$.
Because the conditional laws within $A$ and its complement are unchanged,
update-to-base KL is $\binaryKL(q\Vert p)$. The binary entropy bound gives
\begin{equation}
 \binaryKL(q\Vert p)
 \geq q\log(1/p)-\log2.
\end{equation}
At any fixed budget $\eta$, feasibility therefore requires
$q\leq(\eta+\log2)/\log(1/p)$, so the optimal gain tends to zero as
$p\downarrow0$. With $\chi^2(Q\Vert P)=\E_P[(dQ/dP-1)^2]$,
direct summation gives
\begin{equation}
 \chi^2(P_t\Vert P_0)
 =\frac{(q-p)^2}{p(1-p)},
 \qquad q-p\leq\sqrt{\eta p(1-p)}
 \quad\text{when }\chi^2(P_t\Vert P_0)\leq\eta.
\end{equation}
Thus the nonvanishing-gain conclusion in
\cref{thm:fixed-budget-calibration}(a) does not extend to either of these
constraints. These comparisons do not determine their optimal calibration
sample rates.

\subsection{Dependence on the required improvement}
\label{app:accuracy-calibration}

We now quantify how the sampling cost grows as the allowed gain loss
shrinks. The explicit bounds below imply the order statement
in \cref{thm:accuracy-calibration}. Use the event-tilt model and sampling
access defined in \cref{app:fixed-budget-bounds-proof}. At event mass $r$, a step $t\geq0$ is useful if $\mathcal K_r(t)\leq1/10$ and
$\Delta_r(t)\geq(1-\varepsilon)\Delta_r^\star$.
Fix $p_0=2^{-10}$, $0<p\leq p_0/2$, $0<\varepsilon\leq1/4$ and
$0<\delta\leq1/3$. Any procedure that uses at most $m$ draws and returns a
useful step with probability at least $1-\delta$ under both Bernoulli masses
$p$ and $(1+4\varepsilon)p$ requires
\begin{equation}
 m\geq
 \frac{(1-p_0)(1-2\delta)\log((1-\delta)/\delta)}
 {16p\varepsilon^2}.
 \label{eq:accuracy-lower}
\end{equation}
Conversely, apply the confidence rule in \cref{thm:fixed-budget-calibration}(c).
Round its output downward by at most $\varepsilon/32$ to a nonnegative step.
This step is useful at mass $p$ with probability at least $1-\delta$ whenever
\begin{equation}
 m\geq\frac{1024\log(2/\delta)}{p\varepsilon^2}.
 \label{eq:accuracy-upper}
\end{equation}
At fixed confidence, the matching worst-case rate corresponds to an expected
event count $mp=\Theta(\varepsilon^{-2})$, even with the exact natural
gradient supplied.

\begin{proof}[Proof of \cref{thm:accuracy-calibration}]
For the lower bound, a relative change in mass moves the KL boundary
farther than the width of the useful-step interval. For the upper
bound, the same sensitivity estimate converts mass-estimation error into
gain loss, with a separate allowance for rounding.

\noindent\textbf{Width and motion of the useful-step interval.}
Retain \(p_0=2^{-10}\), \(0<p\leq p_0/2\),
\(0<\varepsilon\leq1/4\), and \(0<\delta\leq1/3\) from the theorem.
Let \(q_p^\star>p\) solve \(\binaryKL(p\Vert q_p^\star)=1/10\),
and put \(\alpha=1-\varepsilon\). Inverting the event-probability map
gives the largest feasible step and the first step attaining the required gain:
\begin{equation}
 T(p)=\logit(q_p^\star)-\logit(p),\qquad
 L_\varepsilon(p)=\logit(p+\alpha\Delta_p^\star)-\logit(p).
 \label{eq:accuracy-success-endpoints}
\end{equation}
A step is useful exactly when it lies in $[L_\varepsilon(p),T(p)]$.
The proof of \cref{thm:fixed-budget-calibration} gives
$0.09-p\leq\Delta_p^\star<0.103$ and $q_p^\star<0.104$ for $p\leq p_0$.
At $t=\log(1/p)$,
${\mathcal K}_p(t)=\log(2-p)-p\log(1/p)>\log(3/2)-7/1024>1/10$,
so $T(p)<\log(1/p)$. Since ${\mathcal K}_p'(T(p))=\Delta_p^\star>0$,
implicit differentiation of ${\mathcal K}_p(T(p))=1/10$ yields
\begin{equation}
 -\frac{\dd T}{\dd\log p}
 =\frac1{1-p}-\frac{pT(p)}{\Delta_p^\star},\qquad
 0.923<-\frac{\dd T}{\dd\log p}\leq\frac1{1-p_0}.
 \label{eq:accuracy-boundary-derivative}
\end{equation}
For the lower inequality, $p\log(1/p)$ increases up to $p_0$ and is
smaller than $7/1024$ there, while
$(7/1024)/(0.09-p_0)=700/9116<0.077$.
Integrating the derivative of $\logit$ between the two event-probability targets gives
\begin{equation}
 T(p)-L_\varepsilon(p)
 =\int_\alpha^1
 \frac{\Delta_p^\star\,\dd u}
 {(p+u\Delta_p^\star)(1-p-u\Delta_p^\star)}
 \leq\frac{\varepsilon}{\alpha(1-q_p^\star)}<1.489\varepsilon.
 \label{eq:accuracy-success-width}
\end{equation}
Choose ${\chi_{\varepsilon}}=1+4\varepsilon\leq2$ so that changing the mass by this factor
moves the boundary by more than the width of the useful-step interval. Since
$\log(1+4\varepsilon)\geq2\varepsilon$, \cref{eq:accuracy-boundary-derivative}
implies $T(p)-T({\chi_{\varepsilon}}p)>1.846\varepsilon$.
Consequently $L_\varepsilon(p)>T({\chi_{\varepsilon}}p)$: every useful output at $p$
exceeds every feasible output at ${\chi_{\varepsilon}}p$.

\noindent\textbf{Sample complexity lower bound.}
Thresholding the output between these endpoints selects $p$ with
probability at least $1-\delta$ under $p$ and at most $\delta$ under ${\chi_{\varepsilon}}p$.
By \cref{lem:padding}, the law of this test outcome has KL divergence at most
$m$ times the single-draw value. Monotonicity in the two test probabilities
then gives
\begin{equation}
 m\,\binaryKL(p\Vert {\chi_{\varepsilon}}p)
 \geq(1-2\delta)\log\frac{1-\delta}{\delta}.
 \label{eq:appendix-accuracy-detail-3}
\end{equation}
Using $\log x\leq x-1$ in both terms of binary KL gives
$\binaryKL(a\Vert b)\leq(a-b)^2/[b(1-b)]$ for $0<a,b<1$. Hence
$\binaryKL(p\Vert {\chi_{\varepsilon}}p)\leq({\chi_{\varepsilon}}p-p)^2/[{\chi_{\varepsilon}}p(1-{\chi_{\varepsilon}}p)]
\leq16p\varepsilon^2/(1-p_0)$ proves \cref{eq:accuracy-lower}.
The supplied natural gradient $1$ is identical under both laws.

\noindent\textbf{Confidence calibration and gain loss.}
For the upper bound, fix a sample outcome with $p\in\mathcal I_m$.
Set $\Lambda_2=\log(2/\delta)$ and let ${p_{\mathrm U}}=\max\mathcal I_m$ be the
upper endpoint of the nonempty compact confidence interval, so ${p_{\mathrm U}}\geq p>0$.
The coefficient $1024=(2\cdot16)^2$ makes the relative square-root
error at most $\varepsilon/16$:
\begin{equation}
 \sqrt {p_{\mathrm U}}\leq\sqrt p+2\sqrt{\Lambda_2/m},\qquad
 m\geq\frac{1024\Lambda_2}{p\varepsilon^2}
 \ \Longrightarrow\ {p_{\mathrm U}}\leq(1+\varepsilon/16)^2p
 \leq(1+\varepsilon/4)p<p_0.
 \label{eq:accuracy-confidence-mass}
\end{equation}
For fixed positive $t$, $\partial {\mathcal K}_r(t)/\partial r$ decreases in $r$.
At $r={p_{\mathrm U}}$ and $t=T({p_{\mathrm U}})$ it equals
$-\Delta_{p_{\mathrm U}}^\star T'({p_{\mathrm U}})>0$ by \cref{eq:accuracy-boundary-derivative}.
Hence ${\mathcal K}_r(T({p_{\mathrm U}}))\leq {\mathcal K}_{p_{\mathrm U}}(T({p_{\mathrm U}}))=1/10$ for every $0\leq r\leq {p_{\mathrm U}}$.
The confidence rule therefore chooses $\widehat t_m\geq T({p_{\mathrm U}})$ and, on
the coverage event, $\widehat t_m\leq T(p)$.
Applying \cref{eq:accuracy-boundary-derivative} once more gives
\begin{equation}
 T(p)-T({p_{\mathrm U}})\leq\frac{\log({p_{\mathrm U}}/p)}{1-p_0}
 \leq\frac{\varepsilon}{4(1-p_0)}.
 \label{eq:accuracy-boundary-error}
\end{equation}
Since $0\leq\Delta_p'(t)\leq1/4$ and
$\Delta_p^\star\geq0.09-p_0$, the selected step and downward rounding
by at most $\varepsilon/32$ together incur relative gain loss at most
\begin{equation}
 \varepsilon\left[
 \frac1{16(1-p_0)(0.09-p_0)}+
 \frac1{128(0.09-p_0)}\right]<0.8\varepsilon<\varepsilon.
 \label{eq:accuracy-rounded-reward-loss}
\end{equation}
The first bracketed term accounts for conservative calibration and the
second for rounding.
Round downward to a nonnegative step. This rounding preserves feasibility because
${\mathcal K}_p$ increases on $[0,\infty)$. Coverage has
probability at least $1-\delta$, proving \cref{eq:accuracy-upper}.
This rounding budget must scale with $\varepsilon$ to preserve the
stated accuracy. For $0<\delta\leq1/3$,
$1-2\delta\geq1/3$ and $(1-\delta)^2\geq\delta$, so
\begin{equation}
 \frac16\log\frac1\delta
 \leq(1-2\delta)\log\frac{1-\delta}{\delta}
 \leq\log\frac1\delta.
 \label{eq:accuracy-confidence-order}
\end{equation}
Thus the testing lower bound has the same logarithmic confidence
dependence as the constructive upper bound, proving the uniform order
statement.
\end{proof}

\section{Quadratic rescaling and finite KL}
\label{app:additional-calibration}
\label{app:quadratic-finite-kl}

We separate Fisher matrix estimation error from error in the quadratic approximation
to finite-step KL. The scalar example with exact Fisher information isolates approximation error,
which is distinct from sampling error in the learned-head KL estimates.
Let $F,\widehat F\succeq0$ be the population and estimated Fisher information matrices.

For $v^\top\widehat Fv>0$ and $\eta>0$, the proposed step and its
population quadratic cost satisfy
\begin{equation}
 \Delta\theta_\eta=\sqrt{\frac{2\eta}{v^\top\widehat Fv}}\,v,
 \qquad
 \frac{\Delta\theta_\eta^\top F\Delta\theta_\eta}{2\eta}
 =\frac{v^\top Fv}{v^\top\widehat Fv}.
\end{equation}
For a chosen budget fraction $0<{\chi_{\mathrm{bud}}}\leq1$,
targeting ${\chi_{\mathrm{bud}}}\eta$ under budget $\eta$ multiplies this ratio by ${\chi_{\mathrm{bud}}}$. Scaling both budgets by
${\chi_{\mathrm{bud}}}$ leaves it unchanged.
This identity describes error within the quadratic approximation.

Even with exact scalar Fisher information, a quadratic proposal can violate the
finite-step KL budget. For the binary tilt at $p=1/2048$ and budget $0.1$,
the quadratic step $t=\sqrt{0.2/[p(1-p)]}\approx20.24$ has true KL $12.609$.
These values follow by substituting the step into
\begin{equation}
 \mathcal K_p(t)=\log(1-p+pe^t)-pt.
\end{equation}
A $10\%$ quadratic-budget margin scales the step by $\sqrt{0.9}$ and still
gives KL $11.571$. Thus reserving budget in the quadratic approximation does not ensure finite-KL feasibility in this example. \Cref{sec:matched-local-calibration} gives a four-state example in which all laws have the same quadratic step.
That step is about 470 times longer than their KL-feasible boundary steps.

\section{Direction recovery in logistic models}
\label{app:logistic-proof}

We first prove \cref{thm:logistic-preconditioning} by showing that a coordinate of the
normalized direction separates unsatisfiable from satisfiable formulas.
We then prove \cref{cor:logistic-trust-region,cor:logistic-useful-samples}
by separating the first coordinates of useful finite updates.
The second argument establishes hardness directly from the returned update.
\Cref{app:sat-encoding} describes the anchor state, tag bits and branch bits
used below.

\noindent\textbf{Encoding conventions.}
For a formula $\varphi$ with code $c=\langle\varphi\rangle$, the proofs index
the construction by $c$: $Q_c=Q_\varphi$, $F_c^{\mathrm L}=F_\varphi^{\mathrm L}$,
$v_c^{\mathrm L}=v_\varphi^{\mathrm L}$, $\nu_c^{\mathrm L}=\nu_\varphi^{\mathrm L}$
and $M(c)=\SATcount(\varphi)$.
The count describes the resulting law. We construct its sampler by
evaluating assignments, without computing $M(c)$.
Malformed formula codes have $M(c)=0$, and decision languages reject malformed
outer encodings and $n<3$.

\begin{proof}[Proof of \cref{thm:logistic-preconditioning}]
\noindent\textbf{Sampler construction and local geometry.}
The sampler must implement the covariate law by checking assignments, without
counting them. The anchor state ensures a positive definite Fisher matrix even for
an unsatisfiable formula.
Write $\Eval(c,x)$ for evaluation of the formula encoded by $c$ on
$x\in\{0,1\}^n$, with value zero for malformed formula codes.
Three auxiliary tag bits distinguish the formula-evaluation block, the
anchor state and a separate fixed state. Two additional branch bits select
which covariate value to produce. Together with the assignment bits, these bits give $\DL=2^{n+5}$ equally likely
inputs.
Let \(W=(a,x)\) be uniform on
\(\Omega^{\mathrm L}_n=\{0,1\}^3\times\{0,1\}^n\).
Let \(a_c(a,x)\) be the indicator of
\((a=000,\Eval(c,x)=1)\) or \((a=111,x=0^n)\).
These disjoint cases contain \(K=M(c)+1\) states.
With \(w_0=(001,0^n)\) and an independent uniform \({B_{\mathrm{br}}}\in\{0,1,2,3\}\), set
\begin{equation}
 X_c({B_{\mathrm{br}}},W)=
 \begin{cases}
 e_1,&{B_{\mathrm{br}}}=0,\ a_c(W)=1,\\
 e_2,&{B_{\mathrm{br}}}=1,\ a_c(W)=1,\\
 (1,1)^\mathsf T,&{B_{\mathrm{br}}}=2,\ W=w_0,\\
 0,&\text{otherwise}.
 \end{cases}
 \label{eq:logistic-sampler}
\end{equation}
The branch counts are \(K,K,1,\DL-2K-1\). The last is positive because
\(K\leq2^n+1\) and \(2^{n+1}+3<2^{n+5}=\DL\).
The resulting law is \(Q_c\), sampled using \(n+5\) fair bits and at most one
formula evaluation. At \(\theta=0\), one extra independent fair bit
samples the label.

The score is \((Y-\sigm(\theta^\mathsf TX))X\).
Averaging over model labels gives the Fisher matrix
\(\E_Q[\sigm(\theta^\mathsf TX)(1-\sigm(\theta^\mathsf TX))XX^\mathsf T]\).
At zero this is \(\E_Q[XX^\mathsf T]/4\), the matrix in
\eqref{eq:logistic-fisher-direction}, with eigenvalues
$K/(4\DL)$ and $(K+2)/(4\DL)$.
The query gradient \(e_1/2\) gives
\(v_c^{\mathrm L}=\frac{2\DL}{K(K+2)}(K+1,-1)^\mathsf T\),
whose normalization proves (a). The query-only Fisher matrix
\(e_1e_1^\mathsf T/4\) is singular.

\begin{figure}
\centering
\begin{minipage}[c]{0.46\textwidth}
\centering
\begin{tikzpicture}[x=1cm,y=1cm,font=\small,>=Stealth]
  \path[use as bounding box] (0,-0.25) rectangle (5.7,1.70);

  \node[text=cbblue] at (1.311301,1.48) {$K=1$};
  \node[text=cborange] at (4.192989,1.48) {$K=2$};

  \draw[->,black!55,line width=0.65pt] (0.15,0.94) -- (5.65,0.94);
  \draw[black!75,dashed,line width=0.8pt] (2.90,0.48) -- (2.90,1.30);

  \draw[cbblue,line width=1.3pt] (0.623801,0.94) -- (1.998801,0.94);
  \draw[cbblue,line width=0.9pt] (0.623801,0.81) -- (0.623801,1.07);
  \draw[cbblue,line width=0.9pt] (1.998801,0.81) -- (1.998801,1.07);
  \fill[cbblue] (1.311301,0.94) circle[radius=2.5pt];
  \draw[cborange,line width=1.3pt] (3.505489,0.94) -- (4.880489,0.94);
  \draw[cborange,line width=0.9pt] (3.505489,0.81) -- (3.505489,1.07);
  \draw[cborange,line width=0.9pt] (4.880489,0.81) -- (4.880489,1.07);
  \fill[cborange] (4.192989,0.94) circle[radius=2.5pt];

  \node[anchor=north,text=cbblue] at (1.311301,0.53) {$-1/\sqrt5$};
  \node[anchor=north] at (2.90,0.53) {$-3/8$};
  \node[anchor=north,text=cborange] at (4.192989,0.53) {$-1/\sqrt{10}$};
  \node[anchor=north] at (2.90,0.02) {$(\nu_\varphi^{\mathrm L})_2$};
\end{tikzpicture}
\end{minipage}\hfill
\begin{minipage}[c]{0.50\textwidth}
\caption{Exact second coordinates of normalized directions with error intervals of radius $1/32$.
The threshold $-3/8$ separates unsatisfiable ($K=1$) from satisfiable
($K\geq2$) formulas. The coordinate increases with $K$.}
\label{fig:logistic-direction-threshold}
\end{minipage}
\end{figure}

\noindent\textbf{Computational reduction.}
Given a formula, construct the sampler above and run a hypothetical direction-recovery procedure on \((1^n,c)\). We now show how to decide
satisfiability from its returned direction.
The second coordinate of the unit direction is \(-1/\sqrt5\) for \(K=1\)
and at least \(-1/\sqrt{10}\) for \(K\geq2\). Thus
\((\nu_c^{\mathrm L})_2>-3/8\) exactly when \(c\) is satisfiable.
A verifier checks well-formedness and a satisfying assignment.
Renumbering variables and padding to \(n\geq3\) preserves satisfiability,
giving a polynomial many-one reduction and proving (d).
Since \(-1/\sqrt5+1/32<-3/8<-1/\sqrt{10}-1/32\), every
allowed approximate output also decides SAT by a rational comparison,
proving (b) \citep{AroraBarak2009}.
The description has polynomial size, \(v_c^{\mathrm L}\) has
\(\bigO(n)\)-bit rational coordinates, and constant-accuracy rational
unit-vector approximations exist without requiring exact square roots.
The sampler simulates adaptive independent draws with fresh bits, so
a polynomial-time recovery procedure would imply
\(\mathrm{NP}\subseteq\mathrm{BPP}\).

\noindent\textbf{Sample complexity lower bound.}
For (c), withhold codes of a contradiction and a conjunction fixing
all variables. Their \(K=1,2\) laws in \eqref{eq:logistic-model} have
total variation distance \(2/\DL\) per observation, unchanged by the independent
fair label. Thresholding the estimated second coordinate at \(-3/8\) gives a
test with error at most \(1/3\) under each law. By \cref{lem:padding}, its error
sum is at least \(1-2m/\DL\). Hence \(m\geq\DL/6\), even with unrestricted
computation and known \(n\).
\end{proof}

\begin{corollary}[Finite KL-constrained query improvement]
\label{cor:logistic-trust-region}
Fix an integer \(n\geq3\) and a formula code \(c\) on \(n\) variables.
Write \(M(c)\) for its number of satisfying assignments, set
\(K=M(c)+1\) and \(\DL=2^{n+5}\), and let \(e_1,e_2\) be the
standard basis of \(\mathbb R^2\). The covariate law satisfies
\begin{equation}
 Q_c(e_1)=Q_c(e_2)=K/\DL,\qquad
 Q_c(e_1+e_2)=1/\DL,\qquad Q_c(0)=1-(2K+1)/\DL.
\end{equation}
For \(\theta\in\mathbb R^2\), let
\(\pi_\theta(Y=1\mid X)=\sigm(\theta^{\mathsf T}X)\), where
\(\sigm(s)=(1+e^{-s})^{-1}\), and define
\begin{equation}
 \begin{aligned}
 \KLlog_c(\theta)&=\E_{Q_c}\KL(\pi_0(\cdot\mid X)\Vert\pi_\theta(\cdot\mid X)),
 &\eta_{\mathrm L}&=\frac1{100\DL},\\
 \ell_c^*&=\max_{\theta\in\mathbb R^2:\,\KLlog_c(\theta)\leq\eta_{\mathrm L}}\log\sigm(\theta_1).
 \end{aligned}
 \label{eq:logistic-finite-kl-problem}
\end{equation}
The input is \((1^n,c)\), with \(n\) encoded in unary.
Neither \(K\), the population Fisher matrix nor the natural gradient is supplied.
Returning a polynomial-length rational
\(\widehat\theta\in\mathbb Q^2\) with \(\KLlog_c(\widehat\theta)\leq\eta_{\mathrm L}\) and
\(\log\sigm(\widehat\theta_1)\geq\ell_c^*-1/200\) is NP-hard
under polynomial-time Turing reductions.
Unless \(\mathrm{NP}\subseteq\mathrm{BPP}\), no randomized polynomial-time
procedure succeeds on every input with probability at least \(2/3\), even with
adaptive independent exact draws \((X,Y)\) from \(Q_c\pi_0\).
\end{corollary}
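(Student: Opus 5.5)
The plan is to reduce SAT by thresholding the first coordinate of the returned update, as in the proof of \cref{thm:logistic-preconditioning}, but now using finite KL directly rather than the Fisher matrix. First I would compute the population KL in closed form. At $\theta=0$ every conditional label law is $\Bern(1/2)$, and for logit $s$ one has $\KL(\Bern(1/2)\Vert\Bern(\sigm(s)))=\log\cosh(s/2)=:g(s)$. This function is even, convex and nonnegative, with $g(s)\leq s^2/8$ and $g(s)\geq s^2/8-s^4/192$. Averaging over $Q_c$ gives
\[
 \DL\,\KLlog_c(\theta)=K\bigl(g(\theta_1)+g(\theta_2)\bigr)+g(\theta_1+\theta_2),
\]
so the feasibility constraint reads $K(g(\theta_1)+g(\theta_2))+g(\theta_1+\theta_2)\leq1/100$. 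In particular, the factor $\DL$ cancels, and the problem depends only on $K$.

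\textbf{Satisfiable case ($K\geq2$).} I would drop the nonnegative terms $g(\theta_2)$ and $g(\theta_1+\theta_2)$ to obtain $2g(\theta_1)\leq1/100$, that is, $\cosh(\theta_1/2)\leq e^{1/200}$. This yields an explicit bound $\theta_1\leq\tau_2$ with $\tau_2=2\operatorname{arccosh}(e^{1/200})\approx0.19999$, so every feasible update has $\theta_1\leq1/5$. Since $\log\sigm$ is increasing, every returned output also satisfies $\log\sigm(\widehat\theta_1)\leq\log\sigm(1/5)$.

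\textbf{Unsatisfiable case ($K=1$).} Here I would exhibit the feasible point $\theta=(a,-a/2)$. Its cost is $2g(a/2)+g(a)\leq3a^2/16$, so taking $a=\sqrt{16/300}\approx0.2309$ makes it feasible. Hence $\ell_c^\star\geq\log\sigm(a)$, and any valid output satisfies $\log\sigm(\widehat\theta_1)\geq\log\sigm(a)-1/200$. Because $(\log\sigm)'(s)=1-\sigm(s)\geq0.44$ on $[0,0.25]$, the gap $\log\sigm(a)-\log\sigm(0.2)$ is at least about $0.44\times0.03\approx0.0135>1/200+0.008$. Inverting, every valid output has $\widehat\theta_1\geq0.2+0.015$, bounded away from $\tau_2$. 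Since the threshold $0.21$ lies strictly between the two regimes and is independent of $n$, a rational comparison $\widehat\theta_1>21/100$ decides satisfiability.

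\textbf{Reduction and expected obstacle.} The reduction then proceeds as in \cref{thm:logistic-preconditioning}. Construct the sampler \eqref{eq:logistic-sampler} from $c$ in polynomial time. Simulate each draw, including the fair label at $\theta=0$, with fresh bits. Run the hypothetical procedure once and compare $\widehat\theta_1$ with $21/100$. Valid outputs of polynomial length exist, since rounding $(a,-a/2)$ slightly toward zero preserves feasibility and keeps the gain within the tolerance. A randomized polynomial-time solver succeeding with probability $2/3$ would therefore place SAT in BPP. I expect the main obstacle to be pinning down the constants rigorously: the bound $g(s)\geq s^2/8-s^4/192$ must make $\tau_2\leq1/5$ airtight, and the $K=1$ gain gap must exceed $1/200$ with explicit room. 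The natural fix is to evaluate $\operatorname{arccosh}(e^{1/200})$ via $\cosh x\geq1+x^2/2$ and to use the elementary bounds on $\log\sigm$ near zero, adjusting the threshold if the margins turn out tighter than estimated.
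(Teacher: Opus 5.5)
Your proof is correct and follows the paper's route: the closed form $\DL\KLlog_c(\theta)=K\phi(\theta_1)+K\phi(\theta_2)+\phi(\theta_1+\theta_2)$, the feasible point $(a,-a/2)$ at $K=1$, a constant threshold on $\widehat\theta_1$, and simulation of draws with fresh bits. Where you differ, in the satisfiable case, your numerics are wrong, though not fatally.

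\textbf{The satisfiable-case bound.} The value is $\tau_2=2\operatorname{arccosh}(e^{1/200})\approx0.20017$, not $0.19999$, because $\log\cosh(1/10)\approx0.0049917<1/200$. Dropping the $\theta_2$ and $\theta_1+\theta_2$ terms therefore cannot give $\theta_1\le1/5$. No sharpening of the one-variable bound, including $g(s)\ge s^2/8-s^4/192$, will give it either.

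The paper gets exactly $1/5$ by keeping the coupling terms:
\begin{itemize}
\item It uses $2\phi(t)\le1/100$ only to get $t<1$.
\item It replaces $\theta_2$ by the minimizer $r\in(-t,0)$ of $K\phi(r)+\phi(t+r)$.
\item It applies $\phi(s)\ge3s^2/32$ on $[-1,1]$, giving $\frac9{32}(r+t/3)^2+t^2/4\le1/100$.
\end{itemize}

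Your cruder bound still suffices for your threshold. The inequality $\cosh x\ge1+x^2/2$ gives $\tau_2\le2\sqrt{2(e^{1/200}-1)}<0.2003<21/100$. Your $K=1$ analysis then gives $\widehat\theta_1\ge0.216$, using only $(\log\sigm)'\le1/2$ above $1/5$. Note that $(\log\sigm)'\ge0.44$ holds on $[0,a]$ but fails at $0.25$, where it is about $0.438$. So delete the claim ``$\le1/5$'' and the separation is sound. The cost is a less clean gap than the paper's $1/5$ versus $43/200$.

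\textbf{Existence of valid outputs.} Your argument that polynomial-length valid outputs exist is incomplete. It covers only $K=1$, and even there ``rounding $(a,-a/2)$ keeps the gain within tolerance'' silently requires $\ell_c^*-\log\sigm(a)<1/200$. That is an upper bound on the $K=1$ optimum, which you never prove. It does hold, via the symmetric minimizer $\theta_2=-\theta_1/2$, but it needs to be shown.

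The paper handles every $K$ at once, which matters for the problem to be non-vacuous on all inputs:
\begin{itemize}
\item Coercivity of the cost gives an optimizer.
\item Scaling it by $1-10^{-3}$ creates KL slack, by convexity and $\KLlog_c(0)=0$.
\item Rounding each coordinate to within $1/(10^6\DL)$, which needs $\bigO(n)$ fractional bits, stays inside that slack, since $|\phi'|\le1/2$ and $K+1\le2^n+2$.
\end{itemize}
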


\begin{proof}[Proof of \cref{cor:logistic-trust-region}]
We compare the first coordinates attainable in the two cases, then translate
their separation into an objective gap. Multiplying the cost by the common
denominator removes the rare-covariate scale.

\noindent\textbf{Exact KL cost.}
Let \(\phi(t)=\log\cosh(t/2)\). The identity
\(\KL(\Bern(1/2)\Vert\Bern(\sigm(t)))=\phi(t)\) gives
the exact cost
\begin{equation}
 \DL\KLlog_c(\theta)
 =K\phi(\theta_1)+K\phi(\theta_2)+\phi(\theta_1+\theta_2).
 \label{eq:logistic-main-kl}
\end{equation}
Since \(K\geq1\) and \(\phi(t)\geq |t|/2-\log2\), the cost tends to
infinity as \(\lVert\theta\rVert_2\to\infty\). Its nonempty sublevel
set at \(\eta_{\mathrm L}\) is therefore compact, so the continuous objective
attains its maximum. Since \(\phi(0)=\phi'(0)=0\) and
\(\phi''=(1-\tanh^2(t/2))/4\), integration gives
\begin{equation}
 \phi(t)\leq t^2/8\quad(t\in\mathbb R),\qquad
 \phi(t)\geq3t^2/32\quad(|t|\leq1).
 \label{eq:logistic-kl-quadratic-bounds}
\end{equation}
The lower bound uses \(|\tanh(t/2)|\leq|t|/2\).

\noindent\textbf{Separation of useful outputs.}
We exhibit one feasible point for the unsatisfiable case and bound every
feasible first coordinate in the satisfiable case.
At \(K=1\), \((t,-t/2)\) with \(t=23/100\) has scaled cost
at most \(3t^2/16<1/100\).

For \(K\geq2\), a feasible positive \(t=\theta_1\) obeys
\(2\phi(t)\leq1/100\), hence \(t<1\).
For fixed \(t>0\), the function
\(h_t(r)=K\phi(r)+\phi(t+r)\) is strictly convex and coercive, with
\(h_t'(-t)=K\phi'(-t)<0<h_t'(0)=\phi'(t)\).
Its unique minimizer \(r\) therefore lies in \((-t,0)\).
Replacing \(\theta_2\) by \(r\) can only decrease the cost, and all
three arguments \(t,r,t+r\) then lie in \([-1,1]\). Thus
\begin{equation}
 \DL\KLlog_c(\theta)\geq\frac3{32}\{2t^2+2r^2+(t+r)^2\}
 =\frac9{32}(r+t/3)^2+\frac{t^2}4.
 \label{eq:logistic-finite-feasible-upper}
\end{equation}
Thus every feasible positive first coordinate is at most \(1/5\).
Nonpositive coordinates satisfy the same upper bound.
On \([0,23/100]\), \(e^t\leq(1-t)^{-1}\leq100/77<3/2\),
so \((\log\sigm)'(t)>2/5\) and
\begin{equation}
 \log\sigm(23/100)-\log\sigm(43/200)>3/500>1/200.
\end{equation}
A valid output has \(\widehat\theta_1>43/200\) for \(K=1\)
and \(\widehat\theta_1\leq1/5\) for \(K\geq2\).
Thus a hypothetical finite-update procedure would decide satisfiability
by thresholding its returned first coordinate. The reduction need not
evaluate the output's KL cost or objective value.

\noindent\textbf{Polynomial-length rational outputs.}
To obtain polynomial-length rational outputs, first observe that every
feasible coordinate lies in \((-1,1)\).
Each contributes at least \(\phi(\theta_i)\) to the scaled cost, while
\(\phi(1)\geq3/32>1/100\) and \(\phi\) increases with absolute value.
Choose a shrinkage \(\chi_{\mathrm{shrink}}=10^{-3}\) and scale an optimizer by \(1-{\chi_{\mathrm{shrink}}}\).
Convexity and \(\KLlog_c(0)=0\) leave scaled-KL slack at least
\({\chi_{\mathrm{shrink}}}/100=10^{-5}\). Since \(|(\log\sigm)'|\leq1\), the objective loss
is below \({\chi_{\mathrm{shrink}}}\).
Round each coordinate to a rational with a power-of-two denominator within \(1/(10^6\DL)\), requiring
\(\bigO(n)\) fractional bits. Since \(|\phi'|\leq1/2\),
scaled-cost error is at most \((K+1)/(10^6\DL)\leq10^{-6}\).
The rounding cost is below the available slack, and total objective loss is below
\(10^{-3}+10^{-6}<1/200\).
The separating threshold and exact-draw simulation therefore give both
hardness claims with polynomial-length outputs.
\end{proof}

\begin{corollary}[Sample-only finite KL-constrained improvement]
\label{cor:logistic-useful-samples}
Fix an integer \(n\geq3\), set \(\DL=2^{n+5}\), and use the
two covariate laws with \(K=1,2\) in \cref{cor:logistic-trust-region}.
Use the same logistic model, KL cost \(\KLlog_c\), and optimum \(\ell_c^*\)
defined there.
The learner knows \(n\) and the common budget \(\eta_{\mathrm L}=1/(100\DL)\),
but not the codes, the population Fisher matrix or the natural-gradient
direction.
Consider any procedure using at most \(m\) independent draws \((X,Y)\) at
\(\theta=0\) and returning \(\widehat\theta\in\mathbb R^2\).
Suppose its output satisfies
\begin{equation}
 \KLlog_c(\widehat\theta)\leq\eta_{\mathrm L},
 \qquad\log\sigm(\widehat\theta_1)\geq\ell_c^*-1/200
\end{equation}
with probability at least \(2/3\) under each law. Then
\(m\geq\DL/6\), regardless of computational power.
The same bound holds if the objective requirement is replaced by
\begin{equation}
 \log\sigm(\widehat\theta_1)+\log2
 \geq\frac{99}{100}(\ell_c^*+\log2).
\end{equation}
\end{corollary}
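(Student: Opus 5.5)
The plan reuses the separation established in the proof of \cref{cor:logistic-trust-region} and turns it into a two-point test, as in part~(c) of \cref{thm:logistic-preconditioning}. That proof shows that, at budget \(\eta_{\mathrm L}\), every output satisfying the \(1/200\) requirement under \(K=1\) has \(\widehat\theta_1>43/200\). It also shows that every feasible output under \(K=2\) (indeed any \(K\geq2\)) has \(\widehat\theta_1\leq1/5\). These two sets of valid outputs are therefore disjoint, and they are separated by the fixed threshold \(\widehat\theta_1\) versus \(1/5\). That threshold depends only on \(n\), which the learner knows.

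The test accepts \(K=1\) exactly when \(\widehat\theta_1>1/5\). Under each law it errs with probability at most \(1/3\). The inputs supplied to the learner (\(n\), \(\eta_{\mathrm L}\), the model family) are identical under both laws. The draws \((X,Y)\) at \(\theta=0\) have law \(Q_c\otimes\Bern(1/2)\), and the covariate laws differ only in the masses of \(e_1\) and \(e_2\), each by \(1/\DL\). Hence the per-draw total variation is \(2/\DL\), and the independent fair label does not change it. \Cref{lem:padding} with disjoint sets \(S,S'\) gives \(2/3\leq\) error sum complement, that is, \(1-2m/\DL\leq 2/3\). This yields \(m\geq\DL/6\), with no restriction on computation.

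For the relative requirement \(\log\sigm(\widehat\theta_1)+\log2\geq\frac{99}{100}(\ell_c^*+\log2)\), I would show that it still forces \(\widehat\theta_1>1/5\) when \(K=1\). Writing \(g(t)=\log\sigm(t)+\log2\), the proof of \cref{cor:logistic-trust-region} gives a feasible point with first coordinate \(23/100\) when \(K=1\), so \(g(\ell\text{-optimal})\geq g(23/100)\). It then suffices to check that \(g(1/5)<\frac{99}{100}g(23/100)\). Near zero \(g(t)\approx t/2\), so this compares about \(0.1\) against \(0.99\times0.115\approx0.114\). I would make the comparison rigorous with the derivative bounds \(2/5<(\log\sigm)'\leq1/2\) on \([0,23/100]\) already used there: these give \(g(23/100)-g(1/5)>(3/100)(2/5)=0.012\), and \(\frac{1}{100}g(23/100)\leq0.00115\) since \(g(23/100)\leq 0.115\). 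So \(g(1/5)<\frac{99}{100}g(23/100)\). The \(K=2\) side is unchanged, because feasibility alone forces \(\widehat\theta_1\leq1/5\). The same test and padding bound then apply.

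The main obstacle is this last constant check, and only mildly. Everything else is inherited directly from the previous corollary.
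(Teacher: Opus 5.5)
Your proposal is correct. For the first claim it follows the paper's argument. Success at $K=1$ forces $\widehat\theta_1>43/200$, feasibility at $K=2$ forces $\widehat\theta_1\leq1/5$, and \cref{lem:padding} with per-draw total variation $2/\DL$ gives $m\geq\DL/6$. Thresholding at $1/5$ rather than $43/200$ is immaterial. There are two small slips. First, the covariate laws also differ at $0$, whose mass drops by $2/\DL$. The total variation is still $2/\DL$, because this drop balances the increases at $e_1$ and $e_2$. Second, the threshold $1/5$ is an absolute constant, not a function of $n$.

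For the relative-gain variant your route differs from the paper's. The paper does not return to the threshold. It notes that feasibility forces $|\theta_1|<1$, since $\phi(1)\geq3/32>1/100$, and that $(\log\sigm)'\leq1/2$ on $[0,\infty)$, so $0<\ell_c^*+\log2<1/2$. The $99/100$ requirement then leaves absolute error below $1/200$, and the first claim applies unchanged.

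You instead lower-bound $\ell_c^*$ at $K=1$ by the explicit feasible point $(23/100,-23/200)$ and verify $g(1/5)<\tfrac{99}{100}g(23/100)$ directly. This is correct with your margins of $0.012$ versus $0.00115$. The paper's reduction is shorter and shows that the relative criterion is simply stronger than the absolute one for every $K$, so no separation needs rechecking. Yours relies on a specific feasible point and a numerical comparison, but it is quantitatively sharper. Since $\tfrac1{10}\cdot0.115<0.012$, your estimates already force $\widehat\theta_1>1/5$ under a $9/10$ relative target. The reduction through $\ell_c^*+\log2<1/2$ needs the factor $99/100$.
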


\begin{proof}[Proof of \cref{cor:logistic-useful-samples}]
Reuse the preceding output separation to test the two observation laws.
The proof of \cref{cor:logistic-trust-region} gives
\(\widehat\theta_1>43/200\) on success at \(K=1\), whereas feasibility
at \(K=2\) requires \(\widehat\theta_1\leq1/5\).
Thresholding at \(43/200\) therefore tests the two laws with error at most
\(1/3\) under each. The total variation distance for one observation is \(2/\DL\).
By \cref{lem:padding}, the testing error sum is at least \(1-2m/\DL\),
yielding \(m\geq\DL/6\).

For the relative-gain requirement, feasibility implies \(|\theta_1|<1\),
because \(\phi(1)\geq3/32>1/100\).
A sufficiently small positive first-coordinate step is feasible, and
\((\log\sigm)'(t)\leq1/2\) for \(t\geq0\), so
\(0<\ell_c^*+\log2<1/2\).
Attaining \(99/100\) of this gain leaves objective error below
\(1/200\), to which the first claim applies.
\end{proof}

\section{Damped natural-gradient estimation and lower bounds}
\label{app:regularization}

\Cref{thm:damped-fisher-sampling,cor:damped-dimension-lower} establish the
upper and lower bounds summarized in \cref{prop:damped-sampling-main}.
We then quantify the bias introduced by damping and identify when it makes the
rare-event contribution negligible at the requested accuracy.

\subsection{Fisher matrix estimation}
\label{app:damped-fisher-proof}

Differences of independent observations let us estimate covariance without
knowing the population mean.
We measure matrix error relative to the damped population matrix
and apply a concentration inequality. The resulting spectral bounds
control the error in the estimated inverse.

\begin{proposition}[Estimating a damped natural gradient from samples]
\label{thm:damped-fisher-sampling}
Let \(P\) be a probability law on \(\mathcal X\) and let
\(h:\mathcal X\to[0,1]^d\) be a known measurable statistic.
Set \(F=\Cov_P(h)\). Fix a known \(\lambda>0\), parameters
\(0<\rho,\delta<1\), and integers \(d,s\geq1\).
From \(2s\) independent draws from \(P\), form
\(\xi_i=(h(S_{2i})-h(S_{2i-1}))/\sqrt2\),
\(\widehat F=s^{-1}\sum_i\xi_i\xi_i^\mathsf T\),
\({\mathsf A}=F+\lambda I_d\), and \(\widehat {\mathsf A}=\widehat F+\lambda I_d\).
If
\begin{equation}
 s\geq\frac3{\rho^2}\left(1+\frac d{2\lambda}\right)\log\frac{2d}{\delta},
 \label{eq:damped-fisher-sample-bound}
\end{equation}
then with probability at least \(1-\delta\),
\begin{equation}
 \begin{aligned}
 (1-\rho){\mathsf A}&\preceq\widehat {\mathsf A}\preceq(1+\rho){\mathsf A},\\
 \|\widehat {\mathsf A}^{-1}b-{\mathsf A}^{-1}b\|_{\mathsf A}
 &\leq\frac{\rho}{1-\rho}\|{\mathsf A}^{-1}b\|_{\mathsf A}
 \quad\text{for all }b\in\mathbb R^d,
 \end{aligned}
 \label{eq:damped-fisher-solve-bound}
\end{equation}
where \(\|z\|_{\mathsf A}^2=z^\mathsf T{\mathsf A}z\).
For the uncentered moment \(F=\E_P hh^\mathsf T\), the same guarantees
hold for \(\widehat F=m^{-1}\sum_{i=1}^m h(S_i)h(S_i)^\mathsf T\)
from \(m\) independent draws whenever
\(m\geq3\rho^{-2}(1+d/\lambda)\log(2d/\delta)\).
\end{proposition}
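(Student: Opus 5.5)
The plan is to whiten by the damped population matrix and apply a matrix Chernoff bound. Set $Y_i=\mathsf A^{-1/2}\xi_i\xi_i^{\mathsf T}\mathsf A^{-1/2}/s$. Each $\xi_i$ is a difference of two independent copies of $h(S)$ divided by $\sqrt2$, so $\E\xi_i\xi_i^{\mathsf T}=F$ exactly; this is why the paired construction avoids centering at an unknown mean. The summands are independent and positive semidefinite, and
\[
\E\sum_iY_i=\mathsf A^{-1/2}F\mathsf A^{-1/2}=I_d-\lambda\mathsf A^{-1}.
\]
Two-sided spectral control
\[
-\rho I\preceq\mathsf A^{-1/2}(\widehat F-F)\mathsf A^{-1/2}\preceq\rho I
\]
is equivalent to $(1-\rho)\mathsf A\preceq\widehat{\mathsf A}\preceq(1+\rho)\mathsf A$, because the $\lambda I$ terms cancel in $\widehat{\mathsf A}-\mathsf A=\widehat F-F$.

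\textbf{Per-summand bound.} Since $h\in[0,1]^d$, each coordinate of $\xi_i$ lies in $[-1/\sqrt2,1/\sqrt2]$, so $\|\xi_i\|_2^2\leq d/2$. Also $\mathsf A\succeq\lambda I$, hence $\|Y_i\|\leq\xi_i^{\mathsf T}\mathsf A^{-1}\xi_i/s\leq d/(2\lambda s)$. I intend to use Tropp's matrix Chernoff inequality in its multiplicative form, which is needed because the mean $I-\lambda\mathsf A^{-1}$ is not bounded below. The cleanest device is to add a deterministic $\lambda\mathsf A^{-1}/s$ to each summand. The modified summands are $Z_i=Y_i+\lambda\mathsf A^{-1}/s$, which are PSD with
\[
\sum_i\E Z_i=I_d,\qquad\|Z_i\|\leq(1+d/(2\lambda))/s=:R.
\]
Matrix Chernoff then gives relative deviation $\rho$ with failure probability at most $2d\exp(-\rho^2/(3R))$. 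Requiring this to be at most $\delta$ yields exactly \cref{eq:damped-fisher-sample-bound}. Since $\sum_iZ_i=\mathsf A^{-1/2}\widehat{\mathsf A}\mathsf A^{-1/2}$, this is precisely the desired sandwich.

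\textbf{From spectral to solve error.} On the good event, write
\[
\widehat{\mathsf A}^{-1}b-\mathsf A^{-1}b=\widehat{\mathsf A}^{-1}(\mathsf A-\widehat{\mathsf A})\mathsf A^{-1}b.
\]
In whitened coordinates $E=\mathsf A^{-1/2}\widehat{\mathsf A}\mathsf A^{-1/2}$ satisfies $\|E-I\|\leq\rho$ and $\|E^{-1}\|\leq1/(1-\rho)$. Hence
\[
\|\widehat{\mathsf A}^{-1}b-\mathsf A^{-1}b\|_{\mathsf A}=\|E^{-1}(I-E)\mathsf A^{1/2}\mathsf A^{-1}b\|_2\leq\frac{\rho}{1-\rho}\|\mathsf A^{-1}b\|_{\mathsf A}.
\]
This holds uniformly in $b$ because only the single matrix event is used.

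\textbf{Uncentered variant.} Here the summands $h(S_i)h(S_i)^{\mathsf T}/m$ satisfy $\|h\|_2^2\leq d$, so $R=(1+d/\lambda)/m$ and the same argument gives the stated $m$.

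\textbf{Main obstacle.} The delicate step is matching the constant $3$ and the $1+d/(2\lambda)$ factor. Tropp's bound for both tails gives $\exp(-\rho^2\mu_{\min}/(3R))$ for the upper tail when $\rho\leq1$ and $\exp(-\rho^2\mu_{\min}/(2R))$ for the lower tail. The damping shift is what makes $\mu_{\min}=\mu_{\max}=1$, so the $d$ prefactor appears for each tail and the union gives $2d$.
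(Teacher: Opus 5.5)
Your proposal is correct and follows the paper's proof step for step. Your shifted summands $Z_i$ are exactly the paper's $s^{-1}\mathsf A^{-1/2}(\xi_i\xi_i^{\mathsf T}+\lambda I_d)\mathsf A^{-1/2}$, with mean $I_d$ and range $1+d/(2\lambda)$, so matrix Chernoff gives the same $2de^{-s\rho^2/(3R_\lambda)}$ bound, and your whitened identity $E^{-1}(I-E)=E^{-1}-I$ is the paper's $\mathsf B^{-1}-I_d$ argument (one harmless slip: Tropp's upper tail is governed by $\mu_{\max}$, not $\mu_{\min}$, which changes nothing here since both equal $1$).
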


This specializes bounds for regularized covariance estimation and inversion
\citep{HsuEtAl2012}.
Taking $\rho=\varepsilon/(1+\varepsilon)$ and counting $m=2s$ individual
draws gives the upper bound in \cref{prop:damped-sampling-main}.

\begin{proof}[Proof of \cref{thm:damped-fisher-sampling}]
\noindent\textbf{Normalized matrix concentration.}
Independence gives \(\E\xi_i\xi_i^\mathsf T=F\), and bounded coordinates
give \(\|\xi_i\|_2^2\leq d/2\).
The independent matrices
\begin{equation}
 \mathsf W_i={\mathsf A}^{-1/2}(\xi_i\xi_i^\mathsf T+\lambda I_d){\mathsf A}^{-1/2}
\end{equation}
have mean \(I_d\) and largest eigenvalue at most
\(R_\lambda=1+d/(2\lambda)\).
Matrix Chernoff \citep{Tropp2012} gives
\begin{equation}
 \Prob\{\|s^{-1}\sum_i\mathsf W_i-I_d\|_2>\rho\}
 \leq2d e^{-s\rho^2/(3R_\lambda)}\leq\delta.
 \label{eq:damped-matrix-chernoff}
\end{equation}

\noindent\textbf{Uniform inverse error.}
On the complement of the event in \cref{eq:damped-matrix-chernoff},
\({\mathsf B}={\mathsf A}^{-1/2}\widehat {\mathsf A} {\mathsf A}^{-1/2}\)
has spectrum in \([1-\rho,1+\rho]\) and
\(\|{\mathsf B}^{-1}-I_d\|_2\leq\rho/(1-\rho)\).
Using
\begin{equation}
 {\mathsf A}^{1/2}(\widehat {\mathsf A}^{-1}b-{\mathsf A}^{-1}b)
 =({\mathsf B}^{-1}-I_d){\mathsf A}^{-1/2}b
\end{equation}
proves both bounds simultaneously, even when \(b\) is chosen from the sample.
The matrix normalization is used only in the proof. The estimator does not
require the population matrix. Both linear systems
use the same right-hand side, so the bound isolates error from estimating the Fisher matrix.
For uncentered moments, \(\|h\|^2\leq d\) gives the replacement constant.
The base-point logistic Fisher information matrix is obtained with \(h=X/2\).
\end{proof}

\begin{proposition}[Scalar lower bound under damping]
\label{prop:damped-scalar-lower}
Fix a known \(0<\lambda\leq1/8\) and \(0<\rho\leq1/16\).
Let \(G_\lambda(p)=[p(1-p)+\lambda]^{-1}\). Consider any sample-only procedure
using at most \(m\) independent Bernoulli draws and returning \(\widehat G\).
Suppose its output satisfies
\(\lvert\widehat G-G_\lambda(p)\rvert\leq\rho G_\lambda(p)\)
with probability at least \(2/3\) for every Bernoulli law with
\(p\in(0,1)\). Then \(m\geq1/(360\lambda\rho^2)\) observations are necessary
in the worst case.
\end{proposition}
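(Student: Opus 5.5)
The plan is a two-point testing argument in the style of \cref{lem:padding}, as used for \cref{thm:accuracy-calibration}(a). I would choose two Bernoulli masses, $p=\lambda$ and $p'=(1+8\rho)\lambda$, and show two things: the sets of valid outputs $[(1-\rho)G_\lambda,(1+\rho)G_\lambda]$ at the two masses are disjoint, and the per-draw KL between the laws is $O(\lambda\rho^2)$. Both masses lie in $(0,1)$, since $\lambda\leq1/8$ and $\rho\leq1/16$ give $p'\leq3/16$.

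\textbf{Disjointness.} The valid intervals are disjoint when $G_\lambda(p)/G_\lambda(p')>(1+\rho)/(1-\rho)$. For $\rho\leq1/16$ the right side is at most $1+(32/15)\rho$. The left side equals $(p'(1-p')+\lambda)/(p(1-p)+\lambda)$. Its numerator exceeds its denominator by
\[
(p'-p)(1-p-p')=8\rho\lambda(1-p-p')\geq8\rho\lambda\bigl(1-2\lambda(1+8\rho)\bigr).
\]
With $\lambda\leq1/8$ and $\rho\leq1/16$, the factor $1-2\lambda(1+8\rho)$ is at least $1-\tfrac14\cdot\tfrac32=5/8$, so the excess is at least $5\rho\lambda$. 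The denominator is at most $2\lambda$. Hence the ratio exceeds $1+5\rho/2>1+(32/15)\rho$, and every valid output under $p$ lies strictly above every valid output under $p'$. Thresholding $\widehat G$ between the two intervals therefore gives a test with error at most $1/3$ under each law.

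\textbf{KL bound and conclusion.} The appendix bound $\binaryKL(a\Vert b)\leq(a-b)^2/[b(1-b)]$ gives
\[
\binaryKL(p\Vert p')\leq\frac{64\rho^2\lambda^2}{\lambda(1-p')}\leq\frac{64\rho^2\lambda}{13/16}<79\lambda\rho^2,
\]
using $p'\geq\lambda$ and $1-p'\geq13/16$. By \cref{lem:padding} and monotonicity of binary KL in the two test probabilities, exactly as in \cref{eq:appendix-accuracy-detail-3}, $m\binaryKL(p\Vert p')\geq(1-2\delta)\log((1-\delta)/\delta)$ at $\delta=1/3$, which equals $\tfrac13\log2>0.231$. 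Therefore
\[
m\geq\frac{0.231}{79\lambda\rho^2}>\frac{1}{360\lambda\rho^2}.
\]

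\textbf{Main obstacle.} The only delicate step is choosing the perturbation factor. It must be large enough, relative to $\rho$, that the relative change in $G_\lambda$ beats $(1+\rho)/(1-\rho)$ despite the shrinking factor $1-p-p'$ and the damping term in the denominator. It must also be small enough that the KL constant stays under $360$. The factor $8$ does both under the stated ranges of $\lambda$ and $\rho$. If the constants were tight, I would tune this factor before changing the base point $p=\lambda$, which places the pair on the damping scale where $p(1-p)$ and $\lambda$ are comparable.
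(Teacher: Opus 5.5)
Your proof is correct and follows the paper's argument. It uses the same pair $p=\lambda$, $p'=(1+8\rho)\lambda$, the same disjointness test comparing $G_\lambda(p)/G_\lambda(p')$ with $(1+\rho)/(1-\rho)$, and the same $(a-b)^2/[b(1-b)]$ bound on the per-draw KL. The only difference is the final step: the paper goes through Pinsker's inequality and a total-variation threshold of $1/3$, while you apply the binary-KL testing bound $\tfrac13\log 2$ directly, which gives a slightly better constant (roughly $342$ in place of $360$).
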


\begin{proof}[Proof of \cref{prop:damped-scalar-lower}]
Choose event probabilities on the damping scale to separate accurate
inverse estimates while keeping the observation laws close.
To separate the relative-error intervals, take
\(p_-=\lambda,p_+=(1+8\rho)\lambda\) and \(I_\pm=p_\pm(1-p_\pm)\).
Then \(p_+\leq3/16\), \(I_-+\lambda\leq2\lambda\), and
\(I_+-I_-=8\rho\lambda(1-p_--p_+)\geq(11/2)\rho\lambda\).
Consequently
\begin{equation}
 \frac{G_\lambda(p_-)}{G_\lambda(p_+)}
 \geq1+\frac{11}4\rho>\frac{1+\rho}{1-\rho}.
 \label{eq:damped-scalar-inverse-gap}
\end{equation}
The accuracy intervals are disjoint, so a sufficiently accurate estimate
identifies which law generated the observations.
The bound \(\log x\leq x-1\) gives
\begin{equation}
 \binaryKL(p_-\Vert p_+)
 \leq\frac{(p_+-p_-)^2}{p_+(1-p_+)}
 \leq80\lambda\rho^2.
\end{equation}
By \cref{lem:padding} and Pinsker's inequality
\citep{vanErvenHarremoes2014}, the output laws under the two
masses are within total variation \(\sqrt{40m\lambda\rho^2}\). Testing with both
errors at most \(1/3\) requires total variation at least \(1/3\). Thus \(40m\lambda\rho^2\geq1/9\),
giving \(360=9\cdot40\) in the denominator.
\end{proof}

\begin{corollary}[Dimension dependence under coordinate-wise boundedness]
\label{cor:damped-dimension-lower}
For an integer \(d\geq1\), a fixed known \(0<\lambda\leq d/8\),
and \(0<\varepsilon\leq1/16\),
there exist a known statistic \(h:\{0,1\}\to[0,1]^d\) and a fixed known unit vector \(b\)
with the following property.
Write \(F=\Cov_P(h)\) and \({\mathsf A}=F+\lambda I_d\).
Consider a sample-only procedure returning \(\widehat v\in\mathbb R^d\) from
at most \(m\) independent draws. If its output satisfies
\begin{equation}
 \Prob_P\{\|\widehat v-{\mathsf A}^{-1}b\|_{\mathsf A}
 \leq\varepsilon\|{\mathsf A}^{-1}b\|_{\mathsf A}\}\geq2/3,
 \label{eq:damped-dimension-accuracy}
\end{equation}
for every Bernoulli law \(P\), then
\(m\geq d/(360\lambda\varepsilon^2)\).
\end{corollary}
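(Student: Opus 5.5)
The plan is to reduce to the scalar damped lower bound, \cref{prop:damped-scalar-lower}, by choosing a statistic whose damped Fisher matrix has the supplied gradient $b$ as an eigenvector. The eigenvalue of that eigenvector is a rescaled copy of the scalar quantity $G_{\lambda'}(p)$, where $\lambda'=\lambda/d$. Concretely, take $h(z)=z\mathbf1_d$ for $z\in\{0,1\}$ and $b=\mathbf1_d/\sqrt d$, as in \cref{prop:damped-sampling-main}. Under $P=\Bern(p)$ write $I=p(1-p)$. Then
\[
F=I\,\mathbf1_d\mathbf1_d^{\mathsf T}=dI\,bb^{\mathsf T},\qquad
{\mathsf A}=dI\,bb^{\mathsf T}+\lambda I_d .
\]
So $b$ is an eigenvector of ${\mathsf A}$ with eigenvalue $dI+\lambda$, and every vector orthogonal to $b$ has eigenvalue $\lambda$. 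Put $g=(dI+\lambda)^{-1}$. Then ${\mathsf A}^{-1}b=g\,b$ and $\|{\mathsf A}^{-1}b\|_{\mathsf A}^2=g$.

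Next I convert the vector accuracy guarantee into a scalar one. Decompose any output as $\widehat v=c\,b+w$ with $c=b^{\mathsf T}\widehat v$ and $w\perp b$. Because $b$ and $w$ lie in orthogonal eigenspaces of ${\mathsf A}$,
\[
\|\widehat v-{\mathsf A}^{-1}b\|_{\mathsf A}^2=(dI+\lambda)(c-g)^2+\lambda\|w\|_2^2\geq g^{-1}(c-g)^2 .
\]
Hence the event in \cref{eq:damped-dimension-accuracy} implies $(c-g)^2\leq\varepsilon^2g^2$, that is, $|c-g|\leq\varepsilon g$. Now observe that $d\,g=[I+\lambda/d]^{-1}=G_{\lambda'}(p)$, with $G_{\lambda'}$ as in \cref{prop:damped-scalar-lower}. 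The procedure therefore induces a scalar estimator $\widehat G=d\,b^{\mathsf T}\widehat v$ satisfying
\[
|\widehat G-G_{\lambda'}(p)|\leq\varepsilon\,G_{\lambda'}(p)
\]
with probability at least $2/3$, for every Bernoulli law. This estimator uses the same draws. It is a measurable post-processing of the output, so by the procedure model of \cref{app:procedures} it is itself a valid sample-only procedure with the same draw budget $m$.

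Finally, I check the hypotheses of \cref{prop:damped-scalar-lower}. We have $\lambda'=\lambda/d\leq1/8$ because $\lambda\leq d/8$, and we take $\rho=\varepsilon\leq1/16$. That proposition then gives
\[
m\geq\frac{1}{360\lambda'\varepsilon^2}=\frac{d}{360\lambda\varepsilon^2}.
\]
I expect no serious obstacle. The only step needing care is the eigen-decomposition inequality, which must discard the orthogonal component $w$ with the correct sign so that the relative-error tolerance transfers exactly, without losing a constant. Exactness matters here because the scalar proposition has no slack in $\rho$.
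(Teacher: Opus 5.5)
Your proposal is correct and matches the paper's proof essentially step for step. It uses the same statistic $h(z)=z\mathbf1_d$ and unit vector $b=\mathbf1_d/\sqrt d$, the same orthogonal decomposition that discards the component orthogonal to $b$, and the same reduction of $d\,b^{\mathsf T}\widehat v$ to an $\varepsilon$-relative estimate of $G_{\lambda/d}(p)$, followed by \cref{prop:damped-scalar-lower} at damping $\lambda/d\leq1/8$.
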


\begin{proof}[Proof of \cref{cor:damped-dimension-lower}]
Embed the scalar problem along one known direction and project any
accurate vector estimate onto that direction.
For \(Z_{\mathrm{Bern}}\sim\Bern(p)\), set
\(h(Z_{\mathrm{Bern}})=Z_{\mathrm{Bern}}\mathbf1_d\), \(b=\mathbf1_d/\sqrt d\), and
\(a=\lambda+dp(1-p)\). Then
\(F=dp(1-p)bb^\mathsf T\), \({\mathsf A}b=ab\), \({\mathsf A}^{-1}b=b/a\).
Orthogonal decomposition gives
\(\|\widehat v-b/a\|_{\mathsf A}^2\geq a|b^\mathsf T\widehat v-1/a|^2\),
whereas \(\|b/a\|_{\mathsf A}^2=1/a\).
The assumed relative error bound therefore implies
\(|ab^\mathsf T\widehat v-1|\leq\varepsilon\), and
\(db^\mathsf T\widehat v\) estimates
\(d/a=G_{\lambda/d}(p)\) to relative error \(\varepsilon\).
Apply \cref{prop:damped-scalar-lower} at \(\lambda/d\), which proves the
lower bound in \cref{prop:damped-sampling-main}.
Each vector observation contains one Bernoulli draw. The dimension factor comes from the allowed norm \(\sqrt d\) and is already
present in this rank-one construction.
\end{proof}

\noindent\textbf{Bias and perturbation.}
The sampling guarantee above controls error relative to the damped natural gradient.
Comparing with the undamped natural gradient also requires accounting for the bias introduced by damping.
For positive definite \(F\) and \(\lambda>0\), choose an orthonormal eigenbasis and write \(Fu_j=\mu_j^{\mathrm{eig}}u_j\),
\(b_j=u_j^\mathsf Tb\), \(v=F^{-1}b\) and
\(v_\lambda=(F+\lambda I)^{-1}b\), with \(b\ne0\).
Here \(\|z\|_F^2=z^{\mathsf T}Fz\).
Eigenbasis expansion gives the damping bias
\begin{equation}
 \frac{\|v_\lambda-v\|_F^2}{\|v\|_F^2}
 =\frac{\sum_j(b_j^2/\mu_j^{\mathrm{eig}})[\lambda/(\mu_j^{\mathrm{eig}}+\lambda)]^2}
 {\sum_jb_j^2/\mu_j^{\mathrm{eig}}}.
 \label{eq:damping-bias}
\end{equation}
The bias depends on the gradient components in the eigenbasis and on the eigenvalues of \(F\).
For \({\mathsf A}=F+\lambda I\), let \(\widehat v_\lambda=({\mathsf A}+\Delta F)^{-1}b\)
be the direction computed from the perturbed Fisher matrix. If
\({r_{\mathrm{pert}}}=\|{\mathsf A}^{-1}\|_2\|\Delta F\|_2<1\), the identity
\(\widehat v_\lambda-v_\lambda
=-(I+{\mathsf A}^{-1}\Delta F)^{-1}{\mathsf A}^{-1}\Delta Fv_\lambda\)
and the Neumann series yield
\begin{equation}
 \frac{\|\widehat v_\lambda-v_\lambda\|_2}{\|v_\lambda\|_2}
 \leq\frac{\kappa_2({\mathsf A})}{1-{r_{\mathrm{pert}}}}\frac{\|\Delta F\|_2}{\|{\mathsf A}\|_2}.
\end{equation}
Damping reduces sensitivity to matrix error \citep{Martens2020}, although
attaining the required matrix accuracy still depends on sampling variance
and spectral scale. These error bounds for estimated directions do not control
finite-step KL.

\subsection{Resolution under fixed damping}
\label{app:damping-resolution-proof}
When damping dominates the scalar Fisher information of an event indicator,
the reciprocal of the damped information is close
to the known value \(1/\lambda\). The next result identifies when this
approximation attains the required relative accuracy.

\begin{proposition}[Resolution under fixed damping]
\label{prop:damping-resolution}
Let \(P\) be a probability law and \(A\) a known measurable event with
unknown mass \(p=P(A)\in(0,1/2]\). Set \(I=p(1-p)\), fix a known
\(\lambda>0\), and let \(0<\rho\leq1\) and \(0<\delta<1\). Define
\(\mathcal G_{A,\lambda}:=[I+\lambda]^{-1}\). If
\(I\leq\rho\lambda\), then
\(1/[(1+\rho)\lambda]\leq\mathcal G_{A,\lambda}\leq1/\lambda\), and
\(1/\lambda\) has relative error at most \(\rho\). Also,
\(p\leq\rho\lambda\) implies \(I\leq\rho\lambda\), and
\(I\leq\rho\lambda\) implies \(p\leq2\rho\lambda\).
Given \(m\geq1\) independent draws \(S_1,\ldots,S_m\sim P\), set
\(\widehat p=m^{-1}\sum_i\one_A(S_i)\) and
\(\widehat{\mathcal G}_{A,\lambda}
=[\widehat p(1-\widehat p)+\lambda]^{-1}\). This estimate has relative error
at most \(\rho\) with probability at least \(1-\delta\) whenever
\(m\geq10\log(2/\delta)/(\rho^2\lambda)\).
\end{proposition}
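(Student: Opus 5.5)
Three pieces need proof: the deterministic sandwich for $\mathcal G_{A,\lambda}$, the two implications between $p$ and $I$, and a sample bound for the plug-in estimator. I will handle them in that order.

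\textbf{Deterministic part.} Since $I\geq0$, we have $\mathcal G_{A,\lambda}=1/(I+\lambda)\leq1/\lambda$. If $I\leq\rho\lambda$, then $I+\lambda\leq(1+\rho)\lambda$, which gives the lower bound. The relative error of $1/\lambda$ is then
\[
 \frac{1/\lambda-\mathcal G_{A,\lambda}}{\mathcal G_{A,\lambda}}=\frac{I}{\lambda}\leq\rho .
\]
For the mass relations, $I=p(1-p)\leq p$, so $p\leq\rho\lambda$ implies $I\leq\rho\lambda$. Conversely, $p\leq1/2$ gives $1-p\geq1/2$, hence $p\leq2I\leq2\rho\lambda$.

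\textbf{Estimation part.} The map $x\mapsto x(1-x)$ is $1$-Lipschitz on $[0,1]$. Hence
\[
 \bigl|\widehat p(1-\widehat p)-I\bigr|\leq|\widehat p-p|,
\]
and it suffices to bound the relative error of the reciprocal in terms of $|\widehat p-p|$. Write $a=I+\lambda$ and $\widehat a=\widehat p(1-\widehat p)+\lambda$. The relative error of $1/\widehat a$ against $1/a$ is $|a-\widehat a|/\widehat a$. Since $\widehat a\geq\lambda$, this is at most $|\widehat p-p|/\lambda$. It therefore suffices to show that
\[
 |\widehat p-p|\leq\rho\lambda
\]
with probability at least $1-\delta$.

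Hoeffding alone would need $m\gtrsim\log(1/\delta)/(\rho^2\lambda^2)$, which is too many. Instead I will use Bernstein's inequality with variance $I\leq p$:
\[
 \Prob\{|\widehat p-p|>\tau\}\leq2\exp\!\left(-\frac{m\tau^2}{2I+2\tau/3}\right).
\]
This exponent only gives $m\sim1/(\rho^2\lambda)$ when $I\lesssim\lambda$. So I split into two regimes.

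\emph{Case $I\leq\lambda$.} Take $\tau=\rho\lambda$. Since $\rho\leq1$, the denominator satisfies $2I+2\tau/3\leq(8/3)\lambda$. The exponent is then at least $3m\rho^2\lambda/8$, and $m\geq10\log(2/\delta)/(\rho^2\lambda)$ makes the bound at most $\delta$.

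\emph{Case $I>\lambda$.} Here I will use the sharper relative bound
\[
 \frac{|a-\widehat a|}{\widehat a}\leq\frac{|\widehat p-p|}{\widehat a}
\]
and aim for $|\widehat p-p|\leq\rho a/2$ together with $\widehat a\geq a/2$. Taking $\tau=\rho a/2$ gives the second condition automatically, because $\widehat a\geq a-\tau\geq a/2$. The relative error is then at most $(\rho a/2)/(a/2)=\rho$. Bernstein with $\tau=\rho a/2$ and $2I+\tau\leq3a$ gives an exponent at least of order $m\rho^2a/12\geq m\rho^2\lambda/12$. This is comparable to the target, but it needs care to fit under the constant $10$.

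\textbf{Main obstacle.} The difficulty is tracking constants so that a single threshold $10\log(2/\delta)/(\rho^2\lambda)$ covers both regimes. Rather than split into cases, I expect to unify them: with $\tau=\rho a/3$, Bernstein's exponent is
\[
 \frac{m\rho^2a^2/9}{2a+2\rho a/9}\geq\frac{m\rho^2a}{20}.
\]
The conclusion then follows from $a\geq\lambda$ if the paper's constant allows a factor of $20$; otherwise I will refine using $I\leq a$ more tightly. With $\tau=\rho a/3$ we get $\widehat a\geq a(1-\rho/3)\geq(2/3)a$, so the relative error is at most $(\rho/3)/(2/3)=\rho/2$. That slack lets me instead take $\tau=2\rho a/3$ when optimizing the constant. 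I will settle the exact constant by choosing $\tau$ to balance these two effects.
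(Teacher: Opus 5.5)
\textbf{Same route as the paper, but the final constant is not closed.} Your deterministic part is correct. Your reduction of the sampling claim also matches the paper's proof: the $1$-Lipschitz map $x\mapsto x(1-x)$, the relative error $|a-\widehat a|/\widehat a$ with $a=I+\lambda$, and Bernstein with variance $I$ at a tolerance proportional to $a$.

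The gap is the last step, which is the entire quantitative content of the sampling claim. You never fix a tolerance that works with the constant $10$, and both concrete candidates you offer fail.
\begin{itemize}
\item With $\tau=\rho a/3$, the exponent is $m\rho^2a/20\geq m\rho^2\lambda/20$. At $m=10\log(2/\delta)/(\rho^2\lambda)$ this only guarantees failure probability $2(\delta/2)^{1/2}=\sqrt{2\delta}$, which is larger than $\delta$.
\item The proposed refinement $\tau=2\rho a/3$ breaks the other requirement. On $|\widehat p-p|\leq\tau$, the relative-error bound is $\tau/(a-\tau)=(2\rho/3)/(1-2\rho/3)$. This exceeds $\rho$ whenever $\rho>1/2$, and equals $2$ at $\rho=1$. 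Enlarging $\tau$ also lowers the guaranteed lower bound $a-\tau$ on $\widehat a$, so the error bound does not scale linearly in $\tau$.
\end{itemize}

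There are two easy ways to finish.
\begin{itemize}
\item \emph{Your own case split already suffices.} When $I>\lambda$ you have $a=I+\lambda>2\lambda$, not merely $a\geq\lambda$. So your exponent $m\rho^2a/12$ exceeds $m\rho^2\lambda/6$, and constant $6$ suffices in that case. The case $I\leq\lambda$ needs only $8/3$.
\item \emph{Unify as the paper does.} Take $\tau=\rho a/(1+\rho)$, so that $\tau/(a-\tau)=\rho$ exactly. Then use $\rho a/2\leq\tau\leq a/2$ together with the sharper denominator bound $2I+2\tau/3\leq 2a+a/3=7a/3$. This gives exponent at least $3m\rho^2a/28\geq 3m\rho^2\lambda/28$, and $28/3<10$. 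The choice $\tau=\rho a/2$ works equally well, since $(\rho/2)/(1-\rho/2)\leq\rho$ for $\rho\leq1$.
\end{itemize}
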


\begin{proof}[Proof of \cref{prop:damping-resolution}]
First check the deterministic approximation. For the sample estimate, choose
an error tolerance for the estimated event probability that guarantees the required relative error
after inversion, then apply Bernstein's inequality.
If \(I\leq\rho\lambda\), then
\(\lambda\leq I+\lambda\leq(1+\rho)\lambda\), and the relative error
of \(1/\lambda\) is exactly \(I/\lambda\).
The implications for event mass follow from \(p/2\leq I\leq p\).
Set \({\tau_{\mathrm{dev}}}=\rho(I+\lambda)/(1+\rho)\).
Since \(p(1-p)\) is 1-Lipschitz, on \(|\widehat p-p|\leq {\tau_{\mathrm{dev}}}\)
the reciprocal estimate has relative error at most
\({\tau_{\mathrm{dev}}}/(I+\lambda-{\tau_{\mathrm{dev}}})=\rho\).
Bernstein for bounded centered indicators of variance \(I\)
\citep{Tropp2012} gives
\begin{equation}
 \Prob\{|\widehat p-p|>{\tau_{\mathrm{dev}}}\}
 \leq2e^{-m{\tau_{\mathrm{dev}}}^2/(2I+2{\tau_{\mathrm{dev}}}/3)}
 \leq2e^{-3m\rho^2\lambda/28}.
\end{equation}
The last bound uses
\(\rho(I+\lambda)/2\leq {\tau_{\mathrm{dev}}}\leq(I+\lambda)/2\) and
\(2I+2{\tau_{\mathrm{dev}}}/3\leq7(I+\lambda)/3\).
Since \(10>28/3\), the stated sample count makes the probability at most \(\delta\).
\end{proof}

\section{Population-KL certification for finite candidate sets}
\label{app:practical-calibration}
\label{app:finite-candidate-calibration}

\Cref{sec:sampling} summarizes the following guarantee for affine
classifiers. We prove it below and then extend it to gain comparisons and to
sequential validation of proposed updates.

\begin{proposition}[Population-KL certification]
\label{prop:affine-calibration-main}
Fix a softmax classifier with affine logits $w_a^{\mathsf T}z(x)$ on a finite
class set and an update direction with rows $u_a$. Include any bias coordinate in $z(x)$, and let
$\pi_t(\cdot\mid x)$ have logits
$(w_a+t u_a)^{\mathsf T}z(x)$. Set the per-input cost
$k_t(x)=\KL(\pi_0(\cdot\mid x)\Vert\pi_t(\cdot\mid x))$.
For integers $\Ncand\geq1$ and $m\geq2$, fix candidate steps
$t_0=0,t_1,\ldots,t_{\Ncand}\geq0$.
Let $X_1,\ldots,X_m$ be draws from a finite reference law $Q$,
independent of the classifier, direction and candidates.
Let $\eta>0$, $0<\delta<1$, and suppose known constants
${\mathcal M}_j\geq0$ satisfy $0\leq k_{t_j}(x)\leq{\mathcal M}_j$ on the
support of $Q$. If $\|z(x)\|_2\leq R_{\mathrm{feat}}$ on this support for
known $R_{\mathrm{feat}}\geq0$, one valid choice is ${\mathcal M}_j=\min\{R_j,R_j^2/8\}$, where
$R_j=t_jR_{\mathrm{feat}}\max_{a,b}\|u_a-u_b\|_2$.
For $1\leq j\leq\Ncand$, let $\widehat{\mathcal K}_j$ and $\widehat V_j$ be the
sample mean and unbiased sample variance of $k_{t_j}(X_i)$. Set
\begin{equation}
 {\mathcal U}_j=\min\left\{{\mathcal M}_j,\ \widehat{\mathcal K}_j
 +\sqrt{\frac{2\widehat V_j\log(2\Ncand/\delta)}m}
 +\frac{7{\mathcal M}_j\log(2\Ncand/\delta)}{3(m-1)}\right\},
 \qquad {\mathcal U}_0=0.
 \label{eq:affine-bernstein-main}
\end{equation}
Any measurable choice of $\widehat j$ among candidates with
${\mathcal U}_{\widehat j}\leq\eta$, including one that uses the calibration data, satisfies $\E_Q k_{t_{\widehat j}}\leq\eta$ with probability at least
$1-\delta$. The zero step is always accepted.
\end{proposition}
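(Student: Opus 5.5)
The proof has two parts: a deterministic range bound on the per-input cost, and a union bound over empirical Bernstein inequalities, one per fixed candidate.

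\textbf{Range bound.} Fix $x$ and write $\ell_a=w_a^{\mathsf T}z(x)$ and $\Delta_a=t_ju_a^{\mathsf T}z(x)$. Then
\[
k_{t_j}(x)=\log\E_{\pi_0}e^{\Delta}-\E_{\pi_0}\Delta ,
\]
the cumulant generating function of $\Delta$ under $\pi_0(\cdot\mid x)$, minus its mean, evaluated at parameter one. The values of $\Delta$ lie in an interval of length at most
\[
\max_{a,b}|\Delta_a-\Delta_b|\leq t_jR_{\mathrm{feat}}\max_{a,b}\|u_a-u_b\|_2=R_j
\]
by Cauchy--Schwarz. Hoeffding's lemma then gives $k_{t_j}(x)\leq R_j^2/8$. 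For the linear bound, $k_{t_j}(x)=\log\E e^{\Delta-\E\Delta}\leq\max_a(\Delta_a-\E\Delta)\leq R_j$. Nonnegativity holds because $k_{t_j}$ is a KL divergence. Hence ${\mathcal M}_j=\min\{R_j,R_j^2/8\}$ is a valid bound.

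\textbf{Per-candidate confidence bound.} The candidates are fixed before sampling and the $X_i$ are independent of the classifier and direction. So for each fixed $j\geq1$, the values $Y_i=k_{t_j}(X_i)/{\mathcal M}_j$ are i.i.d.\ in $[0,1]$; the case ${\mathcal M}_j=0$ is trivial since then the cost is identically zero. I would apply the empirical Bernstein bound of \citet{MaurerPontil2009} (their Theorem 4) at confidence $\delta/\Ncand$. It states that, with probability at least $1-\delta/\Ncand$,
\[
\E Y\leq\bar Y+\sqrt{2\widehat V_Y\log(2\Ncand/\delta)/m}+7\log(2\Ncand/\delta)/(3(m-1)).
\]
Rescaling by ${\mathcal M}_j$ recovers the middle expression in \cref{eq:affine-bernstein-main}, because the variance term scales by ${\mathcal M}_j^2$ under the square root. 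Taking the minimum with ${\mathcal M}_j$ preserves validity, since $\E_Qk_{t_j}\leq{\mathcal M}_j$ deterministically. Thus $\E_Qk_{t_j}\leq{\mathcal U}_j$ with probability at least $1-\delta/\Ncand$. Matching the paper's exact logarithm constant is the step I expect to need care: Maurer--Pontil state the bound with $\log(2/\delta)$, and the extra factor $2$ absorbs the one-sided use. I would check the constants against their statement rather than rederive them.

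\textbf{Union bound and data-dependent selection.} Let $G$ be the event that $\E_Qk_{t_j}\leq{\mathcal U}_j$ holds for every $j=1,\ldots,\Ncand$. By a union bound, $\Prob(G)\geq1-\delta$. On $G$, any index $\widehat j$ with ${\mathcal U}_{\widehat j}\leq\eta$ satisfies $\E_Qk_{t_{\widehat j}}\leq{\mathcal U}_{\widehat j}\leq\eta$, however $\widehat j$ was chosen, including from the calibration draws. This is because the inequality holds simultaneously for all candidates. The index $j=0$ is always accepted because $k_0\equiv0$ and ${\mathcal U}_0=0\leq\eta$, so the selection set is never empty. Measurability of $\widehat j$ is only needed to make the event a well-defined probability statement.
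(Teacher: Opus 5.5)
Your proposal is correct and follows the paper's proof essentially step for step. It uses the same per-input envelope (your appeal to Hoeffding's lemma is exactly the paper's tilted-variance integration), then Maurer--Pontil's empirical Bernstein bound applied to $k_{t_j}/\mathcal M_j$ at failure probability $\delta/\Ncand$, the deterministic bound $\E_Q k_{t_j}\le\mathcal M_j$ to justify the minimum, and a union bound so that any data-dependent selection among accepted candidates is feasible on the simultaneous event; the paper only additionally phrases independence as conditioning on the $\sigma$-algebra generated by the classifier, direction and candidates. On the constant you flagged: Maurer--Pontil's Theorem 4 is already the one-sided bound with $\log(2/\delta)$, where the factor $2$ comes from splitting the failure probability between Bennett's inequality and the sample-variance concentration, so substituting $\delta/\Ncand$ gives exactly $\log(2\Ncand/\delta)$.
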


To certify updates from sampled KL values, we bound the cost of each
candidate separately and require all bounds to hold together. A bounded per-input KL cost permits empirical Bernstein certification of a finite set of candidate updates
\citep{MaurerPontil2009}.
Conditioning on an independent estimation batch fixes the direction and
candidate set before calibration.

Consider a softmax model with affine logits and a reference law \(Q\).
Let \(u_a\) be the direction's row for class \(a\), including any fixed zero reference row.
Suppose the input representation satisfies \(\|z(x)\|_2\leq R_{\mathrm{feat}}\)
on the reference support, with known \(R_{\mathrm{feat}}\geq0\).
At step \(t\geq0\), write
\(d_a=t u_a^{\mathsf T}z(x)\), \(\pi_a=\pi_\theta(a\mid x)\), and
\begin{equation}
 R_{\mathrm{logit}}(t)=tR_{\mathrm{feat}}\max_{a,b}\|u_a-u_b\|_2,\qquad
 {\mathcal M}(t)=\min\{R_{\mathrm{logit}}(t),R_{\mathrm{logit}}(t)^2/8\}.
\end{equation}
The categorical KL from the current model to the updated model obeys
\begin{equation}
 k_t(x)=\log\sum_a\pi_a e^{d_a}-\sum_a\pi_a d_a,
 \qquad 0\leq k_t(x)\leq {\mathcal M}(t).
 \label{eq:categorical-kl-envelope}
\end{equation}
We call \(\mathcal M(t)\) the per-input KL envelope.
This bound on the KL cost at every input is known before sampling. To verify this bound, note that the logit increments have range at most \(R_{\mathrm{logit}}(t)\).
Jensen's inequality proves nonnegativity, while bounding the log-sum by
the largest increment gives \(k_t\leq R_{\mathrm{logit}}(t)\).
For the quadratic bound, the second derivative of
\(s\mapsto\log\sum_a\pi_a e^{s d_a}\) is the tilted variance of \(d_a\).
This variance is at most \(R_{\mathrm{logit}}(t)^2/4\), since each increment's squared distance from
the midpoint of the range is at most that value. Integrating the second
derivative against \(1-s\) on \([0,1]\) gives \(k_t\leq R_{\mathrm{logit}}(t)^2/8\).

\begin{proposition}[Finite-candidate certification]
\label{prop:finite-candidate-calibration}
Fix \(0<\delta<1\), \(\eta>0\) and an integer \(\Ncand\geq1\). Let
\(\mathcal G\) be a \(\sigma\)-algebra with respect to which a current conditional
model \(\pi_\theta\), a direction \(u\), nonnegative steps
\(t_0=0,t_1,\ldots,t_{\Ncand}\) and constants \({\mathcal M}_j\geq0\) are
measurable. Let \(m\geq2\) independent draws from a finite reference law \(Q\) be independent
of \(\mathcal G\), and suppose that
\begin{equation}
 0\leq k_{t_j}(x):=\KL(\pi_\theta(\cdot\mid x)\Vert
 \pi_{\theta+t_j u}(\cdot\mid x))\leq {\mathcal M}_j
\end{equation}
on the support of \(Q\).
For each candidate with a positive bound on its per-input KL cost, let \(\widehat {\mathcal K}_j\) and
\(\widehat V_j\) be the sample mean and unbiased sample variance of
its measurable per-input KL costs, and set
\begin{equation}
 \begin{split}
 r_j&=\sqrt{\frac{2\widehat V_j\log(2\Ncand/\delta)}{m}}
       +\frac{7{\mathcal M}_j\log(2\Ncand/\delta)}{3(m-1)},\\
 {\mathcal U}_j&=\min\{{\mathcal M}_j,\widehat {\mathcal K}_j+r_j\}.
 \end{split}
 \label{eq:finite-candidate-bernstein}
\end{equation}
Set \(r_0={\mathcal U}_0=0\), and \(r_j={\mathcal U}_j=0\) when \({\mathcal M}_j=0\).
Choose measurably from the candidates satisfying \({\mathcal U}_j\leq\eta\),
using any or all calibration observations. The selected candidate has
\(\E_Q k_{t_j}\leq\eta\) with conditional probability at least \(1-\delta\)
given \(\mathcal G\), almost surely, and hence with probability at least
\(1-\delta\). The zero candidate makes this set nonempty.
\end{proposition}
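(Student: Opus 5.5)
The plan is to condition on $\mathcal G$ throughout, so that the model, direction, steps $t_j$ and envelopes ${\mathcal M}_j$ are fixed. The calibration draws remain i.i.d.\ from $Q$, because they are independent of $\mathcal G$. For each fixed $j\geq1$ with ${\mathcal M}_j>0$, the per-input costs $k_{t_j}(X_i)$ are then i.i.d.\ and lie in $[0,{\mathcal M}_j]$. Dividing by ${\mathcal M}_j$ rescales them to $[0,1]$. We then apply the one-sided empirical Bernstein bound of \citet{MaurerPontil2009} at confidence level $\delta/\Ncand$, with variance $\widehat V_j/{\mathcal M}_j^2$. Their Theorem~4 gives, with conditional probability at least $1-\delta/\Ncand$,
\[
 \E_Q k_{t_j}\leq\widehat{\mathcal K}_j+\sqrt{\frac{2\widehat V_j\log(2\Ncand/\delta)}{m}}+\frac{7{\mathcal M}_j\log(2\Ncand/\delta)}{3(m-1)}.
\]
Their statement uses $\log(2/\delta')$ with $\delta'=\delta/\Ncand$, which matches $r_j$ exactly after scaling back by ${\mathcal M}_j$. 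Separately, $\E_Q k_{t_j}\leq{\mathcal M}_j$ holds deterministically, so $\E_Q k_{t_j}\leq{\mathcal U}_j$ on the same event. The degenerate candidates, namely $j=0$ and those with ${\mathcal M}_j=0$, have $k_{t_j}\equiv0$ on the support, so ${\mathcal U}_j=0$ is exact for them and they need no probability budget.

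A union bound over at most $\Ncand$ nondegenerate candidates then gives a good event $E$. On $E$, we have $\E_Q k_{t_j}\leq{\mathcal U}_j$ for all $j$ simultaneously, and $\Prob(E^c\mid\mathcal G)\leq\delta$ almost surely. Selection that depends on the data is handled pointwise. On $E$, any $\widehat j$ with ${\mathcal U}_{\widehat j}\leq\eta$ satisfies $\E_Qk_{t_{\widehat j}}\leq{\mathcal U}_{\widehat j}\leq\eta$, whatever rule produced $\widehat j$. Measurability of $\widehat j$ only ensures that the failure event is measurable. The acceptance set always contains $j=0$, so it is nonempty. Taking expectations over $\mathcal G$ removes the conditioning.

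The main obstacle is making the conditioning rigorous, since the candidate-dependent functions are $\mathcal G$-measurable. I would use a regular conditional distribution or a substitution lemma. Because $X_{1:m}\perp\mathcal G$, the conditional law of the draws given $\mathcal G$ is $Q^{\otimes m}$. The empirical Bernstein bound then applies for each fixed realization of the $\mathcal G$-measurable quantities. Joint measurability of $(x,\omega)\mapsto k_{t_j}(x)$ comes from continuity of the categorical KL in the logits. Because $Q$ is finite, no integrability issues arise.

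Proposition~\ref{prop:affine-calibration-main} follows as a special case. Take $\mathcal G$ trivial, or generated by the fixed classifier, direction and candidates. Use the envelope ${\mathcal M}_j=\min\{R_j,R_j^2/8\}$ verified in \cref{eq:categorical-kl-envelope}.
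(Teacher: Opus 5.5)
Your proposal is correct and follows essentially the same route as the paper's proof. You condition on $\mathcal G$ so the draws are i.i.d.\ from $Q$ with the candidates fixed, apply the Maurer--Pontil empirical Bernstein bound to $k_{t_j}/\mathcal M_j$ at level $\delta/\Ncand$ and rescale, take the minimum with the deterministic envelope, and union-bound so that every accepted candidate is feasible on the simultaneous event however it was selected, with degenerate candidates exact and the conditioning justified by measurability and independence just as the paper argues.
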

\begin{proof}[Proof of \cref{prop:finite-candidate-calibration}]
A simultaneous upper bound permits selection after observing the calibration
data, provided the candidates are fixed before sampling. The per-input cost
\(k_t(x)\) is continuous in the model parameters, the direction and \(t\), so
every \(k_{t_j}(X_i)\), \({\mathcal U}_j\) and \(\E_Q k_{t_j}\) is a random
variable. Because the draws are independent of \(\mathcal G\), their conditional
law given \(\mathcal G\) is that of \(m\) independent draws from \(Q\), while the
candidates and constants are fixed. We prove the bound under this conditional
law. The failure-probability bound is uniform in the conditioning, so taking
expectations gives the unconditional statement.

For \({\mathcal M}_j>0\), apply the empirical Bernstein bound of \citet{MaurerPontil2009} to
\(k_{t_j}/{\mathcal M}_j\) with failure probability \(\delta/\Ncand\).
Rescaling its mean and unbiased variance gives
\(\E_Q k_{t_j}\leq\widehat {\mathcal K}_j+r_j\).
The deterministic inequality \(\E_Q k_{t_j}\leq {\mathcal M}_j\) justifies the
minimum in \({\mathcal U}_j\). A union bound gives these inequalities for every
candidate on one event of probability at least \(1-\delta\).
Candidates whose bounds are zero have zero KL. On the simultaneous event,
every accepted candidate is feasible, regardless of how it is selected.
\end{proof}

\begin{proof}[Proof of \cref{prop:affine-calibration-main}]
The draws are independent of the classifier, direction and candidates, so
let $\mathcal G$ be the $\sigma$-algebra these generate. Since
$R_j=R_{\mathrm{logit}}(t_j)$, \cref{eq:categorical-kl-envelope} shows that
${\mathcal M}_j=\min\{R_j,R_j^2/8\}$ is a valid bound. With these inputs,
\cref{eq:finite-candidate-bernstein} coincides with
\cref{eq:affine-bernstein-main}, and
\cref{prop:finite-candidate-calibration} gives the claim.
\end{proof}

Under the assumptions of \cref{prop:finite-candidate-calibration}, suppose
the query gain \(G_j\) is known exactly. The rule may maximize it over
accepted candidates. A comparison with the finite-grid optimum follows by
replacing \(\log(2\Ncand/\delta)\) by \(\log(4\Ncand/\delta)\) in \(r_j\).
Applying the same bound to \(k_{t_j}\) and \({\mathcal M}_j-k_{t_j}\) gives
\(|\widehat {\mathcal K}_j-{\mathcal K}_j|\leq r_j\) simultaneously with
probability at least \(1-\delta\), where
\({\mathcal K}_j=\E_Q k_{t_j}\). Thus every candidate with
\({\mathcal K}_j+2r_j\leq\eta\) is accepted, and the selected gain is at least
the best gain among those candidates.

\noindent\textbf{Independent validation of proposed updates.}
Fix an overall failure probability $0<\delta<1$. The same bound can
validate a proposal chosen by an empirical margin rule.
At validation attempt $r\geq1$, construct the direction and one proposed step
from the training history and any additional data reserved for constructing the proposal. Before validation, fix a known upper bound on the proposal's per-input KL cost
and a sample size of at least two.

Then draw a fresh independent batch from the finite reference law. Apply
\cref{eq:finite-candidate-bernstein} with $\Ncand=1$ and
$\delta_r=\delta/[r(r+1)]$. Accept the proposal if its upper bound is at most the current KL budget, and
otherwise take the zero step. Every new or
revised proposal uses the next index $r$ and a fresh independent batch,
including after a rejection.

Let $\mathcal G_r$ be the $\sigma$-algebra generated by everything observed before
batch $r$, including whether attempt $r$ is made. The proposal, its KL bound, the sample size and
the decision to attempt are $\mathcal G_r$-measurable. Batch $r$, conditional
on $\mathcal G_r$, consists of independent draws from the reference law. Let $B_r$ be the event that attempt $r$ is made and
accepts an update that violates its population-KL budget.
\Cref{prop:finite-candidate-calibration} with $\mathcal G=\mathcal G_r$ gives
$\Prob(B_r\mid\mathcal G_r)\leq\delta_r$ almost surely, so
$\Prob(B_r)\leq\delta_r$ by the tower property. A union bound over $r$ gives
\begin{equation}
 \Prob\{\text{some accepted update violates its population-KL budget}\}
 \leq\sum_{r=1}^\infty\frac{\delta}{r(r+1)}=\delta.
\end{equation}
Because the decision to attempt is $\mathcal G_r$-measurable, this also permits
stopping after any completed attempt. Each validation
batch counts toward the total sampling budget. Validating one proposal at a time removes the multiple-candidate correction, but
it keeps the range term and uses a smaller failure probability at later updates.
The sequential guarantee controls KL feasibility. Objective gain is a separate criterion.

\section{Reproducibility details}
\label{app:provenance}

\noindent\textbf{Reporting conventions.}
Pointwise percentile bootstrap intervals \citep{DavisonHinkley1997}
resample queries for learned-head studies, preserving paired method comparisons.
In the logistic direction and scalar damping studies, intervals for means use
replicate standard errors with a normal approximation.
Intervals for success rates use Wilson's method \citep{Wilson1927}.
The grids, endpoints, seeds and confidence rules in
\cref{app:logistic-calibration,app:binary-calibration} were fixed before evaluation.

\noindent\textbf{Computational environments.}
Learned-head experiments used Python~3.12 with PyTorch~2.8 and CUDA~12.6,
torchvision~0.23, Transformers~4.57, NumPy~2.3 and SciPy~1.16 on NVIDIA
RTX~3090 and RTX~A6000 GPUs. The logistic, damping and count studies used
Python~3.14 with NumPy~2.4 and SciPy~1.18 on an Apple M2 CPU.
The text head uses the checkpoint
\path{distilbert/distilbert-base-uncased-finetuned-sst-2-english}
at revision \path{714eb0fa89d2f80546fda750413ed43d93601a13} and the
\texttt{sst2} split of \path{nyu-mll/glue} at revision
\path{bcdcba79d07bc864c1c254ccfcedcce55bcc9a8c}.

\noindent\textbf{Randomization.}
Every study uses fixed seeds. Each learned-head draw comes from a NumPy seed
sequence keyed by the study seed, model, query, repetition and sampling
stream, and samples at smaller budgets are prefixes of those at larger
budgets.

\noindent\textbf{Numerical solvers.}
Learned-head Fisher matrix and KL computations use float64, and conjugate
gradients must reach relative residual at most \(10^{-8}\) within 2,000
iterations. Scalar searches include the zero step, expand a bracket and bisect
the KL boundary, so the query gain may peak before the boundary. Feasibility
uses tolerance \(10^{-7}\eta+10^{-12}\), and margin-experiment gains use
\(10^{-7}\) relative plus \(10^{-10}\) absolute tolerance.

\section{Count-based calibration experiments}
\label{app:experimental-details}

We vary rarity and gain requirements in the four-state family and the event-tilt construction to examine the sample scales in the calibration bounds.
Simulated counts represent independent observations. The methods receive the
counts and the information specified below.
Implementation and reporting conventions appear in \cref{app:provenance}.

\subsection{Calibration in the four-state family}
\label{app:matched-moment-study}
\begin{table}[htb]
\caption{Joint success over 2,048 paired count simulations per setting.
(a) Four-state law, $\omega=1.25$, $\pM=2^{-30}$, $\varepsilon=1/8$.
(b) Event-tilt confidence rule, 95\% event-mass confidence set, $\varepsilon=1/32$, $mp\varepsilon^2=16$.}
\label{tab:controlled-experiments}
\label{tab:calibration-sampling-main}
\begin{center}
\begin{tabular*}{\linewidth}{@{\extracolsep{\fill}}rrrrr}
\toprule
\multicolumn{3}{c}{\textbf{(a) Four-state calibration}} &
\multicolumn{2}{c}{\textbf{(b) Binary calibration}} \\
\cmidrule(lr){1-3}\cmidrule(lr){4-5}
$m\pM\varepsilon^2$ & Plug-in (\%) & Conservative (\%) & $p$ & Confidence (\%) \\
\midrule
$2^4$ & 48.97 & 52.78 & $10^{-6}$ & 95.80 \\
$2^{12}$ & 48.05 & 93.90 & $10^{-5}$ & 96.39 \\
$2^{20}$ & 50.29 & 100.00 & $10^{-4}$ & 96.29 \\
 & & & $2^{-11}$ & 97.07 \\
\bottomrule
\end{tabular*}
\end{center}
\end{table}

\begin{figure}[htbp]
  \centering
  \includegraphics[width=\linewidth]{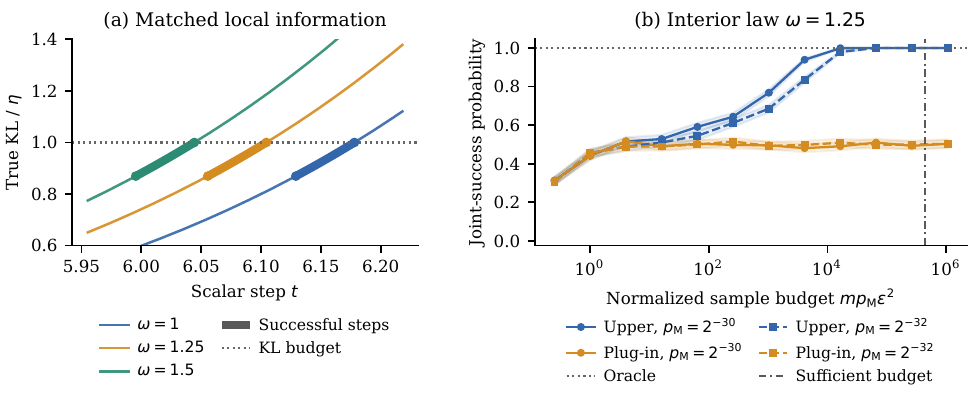}
  \caption{Calibration in the four-state family, whose laws share their initial gradients and scalar Fisher information.
  (a) True finite KL at $\pM=2^{-30}$. Thick segments are feasible and attain
  at least $7/8$ of optimal gain. The $\omega=1$ and $\omega=1.5$ segments are disjoint.
  (b) Paired rules based on counts of $h=3$ at $\omega=1.25$, $\varepsilon=1/8$, with
  2,048 binomial simulations per point and pointwise 95\% Wilson intervals \citep{Wilson1927}.
  The conservative rule, labeled Upper, rounds steps downward. The dash-dot line marks the sample budget sufficient for its theorem guarantee,
  which does not apply at smaller budgets. \Cref{app:matched-moment-study} gives the protocol and ranges over the full grid.}
  \label{fig:matched-moment-study}
\end{figure}

We compare conservative and plug-in calibration on the four-state law in
\cref{eq:matched-moment-law}. Supplying $\pM$, the initial gradient, scalar
Fisher information and natural gradient isolates the remaining calibration problem.
Write $T_\omega$ for the unique boundary with ${\mathcal K}_\omega(T_\omega)=1/10$, and denote
this KL budget by $\eta$.
The parameter grid is
\begin{equation}
 \pM\in\{2^{-30},2^{-32}\},\qquad
 \omega\in\{1,1.25,1.5\},\qquad
 \varepsilon\in\{1/8,1/16\}.
\end{equation}
We use $\delta=0.05$, KL budget $1/10$, and $m\pM\varepsilon^2=2^k$ for
$k=-2,0,2,\ldots,20$. Each setting has 2,048 repetitions.
We generate $n_3\sim\operatorname{Binomial}(m,\omega\pM)$ directly. This
yields the count of the state $h=3$ without generating the $m$ base draws. The
rules use only this count, although it is not a sufficient statistic for the
full family, and each trial is charged $m$ observations, up to $m=2^{60}$.
Both sampled rules use the same count in each repetition. The conservative rule from the theorem uses
\begin{equation}
 \omega_U=\min\{3/2,\max\{1,n_3/(m\pM)+\varepsilon/100\}\}
\end{equation}
and returns $\lfloor\Nround T_{\omega_U}\rfloor/\Nround$, where
$\Nround=\lceil100/\varepsilon\rceil$.
The plug-in rule returns $T_{\widehat\omega}$, where $\widehat\omega$ truncates
$n_3/(m\pM)$ to $[1,3/2]$, and the oracle returns $T_\omega$.
True finite KL and gain are evaluated under the actual law.
The plotted joint-success indicator allows $10^{-12}$ absolute tolerance
for KL feasibility and for the gain ratio. Pointwise 95\% Wilson
intervals quantify Monte Carlo uncertainty.

\begin{table}[htbp]
\caption{All normalized budgets in the four-state study.
Each success range spans the 12 settings of $(\pM,\omega,\varepsilon)$:
two scale values, three law coefficients and two gain targets.
Upper denotes the conservative rule. The last two columns give the largest
rates of true KL exceeding $1.1\eta$ across these settings.
Entries are observed percentages over 2,048 repetitions per setting.}
\label{tab:matched-moment-grid-ranges}
\begin{center}
\begin{tabular}{rcccc}
\toprule
$m\pM\varepsilon^2$ & Upper success & Plug-in success & Upper max. rate & Plug-in max. rate\\
\midrule
$2^{-2}$ & 24.02--73.19 & 24.02--73.19 & 40.09 & 40.09 \\
$2^{0}$ & 40.87--89.31 & 40.87--89.50 & 19.97 & 19.97 \\
$2^{2}$ & 47.61--99.17 & 46.78--99.32 & 3.81 & 4.25 \\
$2^{4}$ & 50.78--100.00 & 48.93--100.00 & 0.00 & 0.05 \\
$2^{6}$ & 53.12--100.00 & 49.95--100.00 & 0.00 & 0.00 \\
$2^{8}$ & 55.71--100.00 & 48.10--100.00 & 0.00 & 0.00 \\
$2^{10}$ & 60.74--100.00 & 47.80--100.00 & 0.00 & 0.00 \\
$2^{12}$ & 72.80--100.00 & 48.05--100.00 & 0.00 & 0.00 \\
$2^{14}$ & 85.79--100.00 & 48.78--100.00 & 0.00 & 0.00 \\
$2^{16}$ & 99.12--100.00 & 48.29--100.00 & 0.00 & 0.00 \\
$2^{18}$ & 100.00--100.00 & 49.02--100.00 & 0.00 & 0.00 \\
$2^{20}$ & 100.00--100.00 & 49.32--100.00 & 0.00 & 0.00 \\
\bottomrule
\end{tabular}
\end{center}
\end{table}

\begin{table}[htbp]
\caption{Paired offset/rounding ablation at $\omega=1.25$, $\varepsilon=1/8$,
reusing the 2,048 counts per row. P is plug-in, O adds the upward offset,
and R rounds the step downward. The four central columns give joint-success
percentages. The last two give the mean of $\max\{{\mathcal K}/\eta-1,0\}$ over all
trials, as a percentage of the KL budget.}
\label{tab:matched-rounding-ablation}
\begin{center}
\begin{tabular}{rrrrrrrr}
\toprule
$\pM$ & $m\pM\varepsilon^2$ & P & P+R & P+O & P+O+R & P excess & P+O+R excess \\
\midrule
$2^{-30}$ & $2^{0}$ & 44.14 & 44.14 & 44.14 & 44.14 & 4.603 & 4.457 \\
$2^{-30}$ & $2^{4}$ & 48.97 & 51.56 & 50.39 & 52.78 & 1.071 & 0.935 \\
$2^{-30}$ & $2^{8}$ & 49.76 & 59.47 & 55.37 & 64.36 & 0.263 & 0.152 \\
$2^{-30}$ & $2^{12}$ & 48.05 & 83.01 & 70.61 & 93.90 & 0.068 & 0.010 \\
$2^{-30}$ & $2^{16}$ & 50.93 & 100.00 & 99.02 & 100.00 & 0.016 & 0.000 \\
$2^{-30}$ & $2^{20}$ & 50.29 & 100.00 & 100.00 & 100.00 & 0.004 & 0.000 \\
$2^{-32}$ & $2^{0}$ & 45.51 & 45.51 & 45.51 & 45.51 & 4.511 & 4.366 \\
$2^{-32}$ & $2^{4}$ & 48.93 & 48.93 & 50.00 & 50.98 & 1.097 & 0.962 \\
$2^{-32}$ & $2^{8}$ & 51.46 & 55.47 & 56.54 & 61.13 & 0.260 & 0.153 \\
$2^{-32}$ & $2^{12}$ & 49.71 & 65.82 & 70.56 & 83.59 & 0.068 & 0.012 \\
$2^{-32}$ & $2^{16}$ & 49.80 & 95.12 & 98.58 & 100.00 & 0.017 & 0.000 \\
$2^{-32}$ & $2^{20}$ & 50.29 & 100.00 & 100.00 & 100.00 & 0.004 & 0.000 \\
\bottomrule
\end{tabular}
\end{center}
\end{table}

\Cref{fig:matched-moment-study} shows the interior setting
$\omega=1.25$, $\varepsilon=1/8$, fixed in advance.
\Cref{tab:matched-moment-grid-ranges} summarizes every grid budget.
At $m\pM\varepsilon^2=2^{20}$, the conservative rule succeeds in $2048/2048$ trials at every
family setting (pointwise Wilson lower endpoint $99.81\%$). The plug-in rule ranges from $49.32\%$ to $51.71\%$ for $\omega\in\{1.25,1.5\}$ and
attains $100\%$ at $\omega=1$. The oracle succeeds throughout.
At smaller budgets, the conservative rule can fail in a substantial fraction of trials.
Its uniform $95\%$ theorem guarantee applies only when
$m\pM\varepsilon^2\geq120000\log40\approx442666$, which only the last
grid budget exceeds.

\noindent\textbf{Separating the offset from rounding.}
A post hoc ablation reuses the same counts in a two-by-two comparison.
One factor adds or omits the positive offset $\varepsilon/100$ to the estimate of $\omega$.
The other includes or omits downward rounding of the step with $\Nround=\lceil100/\varepsilon\rceil$.
Clipping, boundary search and evaluation remain fixed.
\Cref{tab:matched-rounding-ablation} reports this interior setting.
At normalized budget $2^{12}$, every rule meets the gain target in every
trial for both masses, so the success differences measure feasibility.
Both modifications contribute, and the different rounded rates show
sensitivity to the location of the true boundary within the rounding grid.
At smaller budgets, rounding can instead reduce joint success by moving a
step below the gain threshold. The largest observed decrease is 1.86
percentage points across the full grid.
At the largest budget, the offset alone and the full rule succeed in every
trial across all 12 law/accuracy settings, whereas rounding alone ranges from
77.44\% to 100\%.
For the two interior settings in the table, the unmodified
plug-in succeeds in 50.29\% of trials, while its mean positive KL excess is
only about 0.004\% of the budget. Every method meets the gain target there,
and none exceeds $1.1\eta$. Strict boundary crossings can thus persist as their size shrinks.

\subsection{Event tilts: rarity and required gain}
\label{app:binary-calibration}

We vary event rarity and the allowed fractional gain loss $\varepsilon$
to separate their contributions to sample cost. The grid uses
\begin{equation}
 p\in\{10^{-6},10^{-5},10^{-4},2^{-11}\},\qquad
 \varepsilon\in\{1/4,1/8,1/16,1/32\}.
\end{equation}
Normalized sample budgets \(mp\varepsilon^2\) double from \(1/4\) to \(64\),
with 2,048 independent simulations of the cell counts per setting.
The evaluator uses \(p\) to set \(m\), while the rules receive only counts
and \(m\). At KL budget 0.1, we compare empirical 90\%-budget calibration,
four-cell confidence calibration and the direct event-count rule.
Both confidence rules use 95\% coverage. In the four-cell control, the event
and its complement each split into two equal cells, and the confidence set in
\eqref{eq:coupled-cell-confidence} for the four cell probabilities yields an
interval for the combined probability of the two event cells. Each confidence
rule maximizes KL over its entire event-probability interval, including any
interior maximizer. All rules restrict the tilt parameter to at most 64 and
use relative feasibility tolerance \(10^{-7}\).

\begin{figure}[htbp]
\centering
\begin{tikzpicture}
\begin{groupplot}[paper pair,group style={group size=2 by 1,horizontal sep=\paperpairsep},xmode=log,log basis x=2,xmin=.2,xmax=80,xtick={.25,1,4,16,64},xlabel={$mp\varepsilon^2$},ymin=0,ymax=103,ytick={0,25,50,75,100}]
\nextgroupplot[title={(a) event-count confidence},ylabel={useful-step success (\%)},legend to name=binarycalibrationleft,legend columns=2]
\addplot+[cbblue,mark=*] table[col sep=comma,x=scaled,y=event0]{\binarycalibrationtable};
\addlegendentry{$\varepsilon=1/4$}
\addplot+[cborange,mark=*] table[col sep=comma,x=scaled,y=event1]{\binarycalibrationtable};
\addlegendentry{$\varepsilon=1/8$}
\addplot+[cbgreen,mark=*] table[col sep=comma,x=scaled,y=event2]{\binarycalibrationtable};
\addlegendentry{$\varepsilon=1/16$}
\addplot+[cbpurple,mark=*] table[col sep=comma,x=scaled,y=event3]{\binarycalibrationtable};
\addlegendentry{$\varepsilon=1/32$}
\nextgroupplot[title={(b) methods at $\varepsilon=1/32$},ylabel={useful-step success (\%)},legend to name=binarycalibrationright,legend columns=1]
\addplot+[cbblue,mark=*] table[col sep=comma,x=scaled,y=event3]{\binarycalibrationtable};
\addlegendentry{event confidence}
\addplot+[cbgreen,mark=*] table[col sep=comma,x=scaled,y=cell3]{\binarycalibrationtable};
\addlegendentry{four-cell confidence}
\addplot+[cborange,mark=*] table[col sep=comma,x=scaled,y=empirical3]{\binarycalibrationtable};
\addlegendentry{empirical 90\%}
\end{groupplot}
\coordinate(binarycalibrationlegendtop) at (current bounding box.south);
\node[anchor=north,inner sep=0pt,yshift=-5pt]
  at (group c1r1.center |- binarycalibrationlegendtop) {\ref*{binarycalibrationleft}};
\node[anchor=north,inner sep=0pt,yshift=-5pt]
  at (group c2r1.center |- binarycalibrationlegendtop) {\ref*{binarycalibrationright}};
\end{tikzpicture}
\caption{Dependence on the required improvement at $p=10^{-5}$, $\eta=0.1$. The two confidence-based rules use 95\% confidence. Each point summarizes 2,048 simulated count vectors. Usefulness requires feasibility and a fraction $1-\varepsilon$ of the reference optimum. At a fixed horizontal coordinate, the number of observations varies with $\varepsilon$. Empirical calibration at 90\% of the budget can limit gain at stringent targets.}
\label{fig:binary-accuracy}
\end{figure}
At $p=10^{-5}$ and $mp\varepsilon^2=16$, the event-count confidence rule
achieves 96.4--99.1\% joint success over $\varepsilon\in\{1/4,1/8,1/16,1/32\}$
(\cref{fig:binary-accuracy}), with expected event counts from 256 to 16,384.
A smaller gain tolerance therefore requires more expected rare events at the
same normalized sample budget.

\begin{figure}[htbp]
\centering
\begin{tikzpicture}
\begin{groupplot}[paper pair,group style={group size=2 by 1,horizontal sep=\paperpairsep},xmode=log,ymin=0,ymax=103,ytick={0,25,50,75,100}]
\nextgroupplot[title={(a) normalized observation budget},log basis x=2,xmin=.2,xmax=80,xtick={.25,1,4,16,64},xlabel={$mp\varepsilon^2$},ylabel={useful-step success (\%)},legend to name=binaryraritieslegend,legend columns=4]
\addplot+[cbblue,mark=*,error bars/.cd,y dir=both,y explicit] table[col sep=comma,x=scaled,y=p0_success,y error plus expr=\thisrow{p0_high}-\thisrow{p0_success},y error minus expr=\thisrow{p0_success}-\thisrow{p0_low}]{\binaryraritiestable};
\addlegendentry{$p=10^{-6}$}
\addplot+[cborange,dashed,mark=square*,error bars/.cd,y dir=both,y explicit] table[col sep=comma,x=scaled,y=p1_success,y error plus expr=\thisrow{p1_high}-\thisrow{p1_success},y error minus expr=\thisrow{p1_success}-\thisrow{p1_low}]{\binaryraritiestable};
\addlegendentry{$p=10^{-5}$}
\addplot+[cbgreen,dashdotted,mark=triangle*,error bars/.cd,y dir=both,y explicit] table[col sep=comma,x=scaled,y=p2_success,y error plus expr=\thisrow{p2_high}-\thisrow{p2_success},y error minus expr=\thisrow{p2_success}-\thisrow{p2_low}]{\binaryraritiestable};
\addlegendentry{$p=10^{-4}$}
\addplot+[cbpurple,densely dotted,mark=diamond*,error bars/.cd,y dir=both,y explicit] table[col sep=comma,x=scaled,y=p3_success,y error plus expr=\thisrow{p3_high}-\thisrow{p3_success},y error minus expr=\thisrow{p3_success}-\thisrow{p3_low}]{\binaryraritiestable};
\addlegendentry{$p=2^{-11}$}
\nextgroupplot[title={(b) absolute observation budget},log basis x=10,xmin=3e5,xmax=1e11,xtick={1e6,1e8,1e10},xlabel={observation budget $m$},ylabel={useful-step success (\%)}]
\addplot+[cbblue,mark=*,error bars/.cd,y dir=both,y explicit] table[col sep=comma,x=p0_m,y=p0_success,y error plus expr=\thisrow{p0_high}-\thisrow{p0_success},y error minus expr=\thisrow{p0_success}-\thisrow{p0_low}]{\binaryraritiestable};
\addplot+[cborange,dashed,mark=square*,error bars/.cd,y dir=both,y explicit] table[col sep=comma,x=p1_m,y=p1_success,y error plus expr=\thisrow{p1_high}-\thisrow{p1_success},y error minus expr=\thisrow{p1_success}-\thisrow{p1_low}]{\binaryraritiestable};
\addplot+[cbgreen,dashdotted,mark=triangle*,error bars/.cd,y dir=both,y explicit] table[col sep=comma,x=p2_m,y=p2_success,y error plus expr=\thisrow{p2_high}-\thisrow{p2_success},y error minus expr=\thisrow{p2_success}-\thisrow{p2_low}]{\binaryraritiestable};
\addplot+[cbpurple,densely dotted,mark=diamond*,error bars/.cd,y dir=both,y explicit] table[col sep=comma,x=p3_m,y=p3_success,y error plus expr=\thisrow{p3_high}-\thisrow{p3_success},y error minus expr=\thisrow{p3_success}-\thisrow{p3_low}]{\binaryraritiestable};
\end{groupplot}
\end{tikzpicture}
\plotlegend{binaryraritieslegend}
\caption{\textbf{Calibration cost across event rarities.}
The event-count rule uses 95\% confidence for event mass,
$\varepsilon=1/32$ and KL budget $\eta=0.1$.
Success requires KL feasibility and at least $31/32$ of optimal event gain.
Both panels show the same 2,048-repetition estimates with pointwise
95\% Wilson intervals \citep{Wilson1927}. Multinomial counts simulate $m$ independent observations.}
\label{fig:binary-rarities}
\end{figure}

\Cref{fig:binary-rarities} compares the four event masses on normalized and
absolute sample scales. At $mp\varepsilon^2=16$ and $\varepsilon=1/32$, the
event-count rule succeeds in 95.8\% to 97.1\% of trials across rarities, at
sample sizes from \(3.36\times10^7\) to \(1.64\times10^{10}\). High success
rates already appear below the explicit sufficient threshold in
\cref{eq:accuracy-upper}.

\section{Logistic controls and damped estimation}
\label{app:logistic-experiments}

We first compare direction accuracy with finite gain under exact KL calibration.
We then measure the effect of estimating KL from samples and how damping changes
the required sample count. Finally, we extend the direction-recovery construction
to higher dimensions.

\subsection{Finite-step improvement with a sampled Fisher matrix}
\label{app:logistic-finite-step}
\begin{figure}[htbp]
 \centering
 \begin{tikzpicture}
\begin{groupplot}[
 paper pair,
 group style={group size=2 by 1, horizontal sep=\paperpairsep},
 xmode=log, log basis x=4, xmin=0.2, xmax=1400,
 xtick={0.25,1,4,16,64,256,1024},
 xticklabels={$1/4$,$1$,$4$,$16$,$64$,$256$,$1024$},
 xlabel={expected joint-feature count $b_{\mathrm L}=m/\DL$},
]
\nextgroupplot[title={(a) unit-direction estimation},
 ymin=-0.03,ymax=1.03,ytick={0,0.25,0.5,0.75,1},
 ylabel={probability of error $\leq 1/32$},
 legend to name=logisticapplicationlegend,
 legend columns=3,
 legend style={font=\footnotesize, draw=none, fill=none,
   cells={anchor=west},
   /tikz/every even column/.append style={column sep=0.85em}}]
 \addplot[color=black!60,mark=none,dashed]
   table[x=b,y=population_success]{\logisticapplication};
 \addlegendentry{exact natural gradient}
 \addplot[color=cbblue,mark=*]
   table[x=b,y=pseudoinverse_success]{\logisticapplication};
 \addlegendentry{sample pseudoinverse}
 \addplot[color=cborange,mark=square*]
   table[x=b,y=ridge_0_1_success]{\logisticapplication};
 \addlegendentry{damped, $\tau_\lambda=0.1$}
 \addplot[color=cbgreen,mark=star]
   table[x=b,y=ridge_1_success]{\logisticapplication};
 \addlegendentry{damped, $\tau_\lambda=1$}
 \addplot[color=cbpurple,mark=none,densely dotted]
   table[x=b,y=gradient_success]{\logisticapplication};
 \addlegendentry{gradient}
\nextgroupplot[title={(b) fixed-query improvement},
 ymin=18,ymax=104,ytick={20,40,60,80,100},
 ylabel={improvement (\% of optimum)}]
 \addplot[color=black!60,mark=none,dashed]
   table[x=b,y expr=100*\thisrow{population_gain}]{\logisticapplication};
 \addplot[color=cbblue,mark=*,error bars/.cd,y dir=both,y explicit]
   table[x=b,y expr=100*\thisrow{pseudoinverse_gain},
   y error plus expr=100*(\thisrow{pseudoinverse_gain_hi}-\thisrow{pseudoinverse_gain}),
   y error minus expr=100*(\thisrow{pseudoinverse_gain}-\thisrow{pseudoinverse_gain_lo})]{\logisticapplication};
 \addplot[color=cborange,mark=square*,error bars/.cd,y dir=both,y explicit]
   table[x=b,y expr=100*\thisrow{ridge_0_1_gain},
   y error plus expr=100*(\thisrow{ridge_0_1_gain_hi}-\thisrow{ridge_0_1_gain}),
   y error minus expr=100*(\thisrow{ridge_0_1_gain}-\thisrow{ridge_0_1_gain_lo})]{\logisticapplication};
 \addplot[color=cbgreen,mark=star,error bars/.cd,y dir=both,y explicit]
   table[x=b,y expr=100*\thisrow{ridge_1_gain},
   y error plus expr=100*(\thisrow{ridge_1_gain_hi}-\thisrow{ridge_1_gain}),
   y error minus expr=100*(\thisrow{ridge_1_gain}-\thisrow{ridge_1_gain_lo})]{\logisticapplication};
 \addplot[color=cbpurple,mark=none,densely dotted]
   table[x=b,y expr=100*\thisrow{gradient_gain}]{\logisticapplication};
\end{groupplot}
\end{tikzpicture}
\plotlegend{logisticapplicationlegend}
 \caption{Logistic query updates with \(K=1\), \(\DL=65{,}536\),
 and conditional KL budget \(0.05/\DL\).
 Left: fraction of directions with Euclidean error at most \(1/32\).
 Right: mean log-likelihood improvement as a percentage of the global
 constrained optimum, with 95\% intervals over 2,048 independent repetitions.
 Every nonzero direction uses the same exact reference-KL line search.
 Damping is \(\lambda=\tau_\lambda/(4\DL)\).}
 \label{fig:logistic-application}
\end{figure}

For the logistic law in \cref{eq:logistic-model}, we use
\(K\in\{1,2\}\), \(\DL\in\{256,4096,65536\}\), and
\({b_{\mathrm L}}=m/\DL\in\{1/4,1,4,16,64,256,1024\}\).
The 2,048 independent multinomial count vectors per setting are shared
across methods and KL budgets. Each vector
represents \(m\) independent draws without generating them individually.

The sampled Fisher information matrix is \(\widehat F=(4m)^{-1}\sum_iX_iX_i^\mathsf T\).
At zero this also equals the average outer product of the observed scores,
since \((Y-1/2)^2=1/4\).
We compare the population natural-gradient direction, \(\widehat F^\dagger e_1\),
\((\widehat F+\tau_\lambda I_2/(4\DL))^{-1}e_1\) for \(\tau_\lambda=0.1,1\),
and the gradient \(e_1\). Any positive diagonal preconditioner gives the same ray as the gradient for this
query, so the gradient also represents diagonal methods. Nonzero directions are normalized,
and a zero pseudoinverse direction gives a zero update. Direction success means
Euclidean error at most \(1/32\) relative to the normalized population natural-gradient direction.

Query improvement is \(J(\theta)=\log2-\log(1+e^{-\theta_1})\).
Write \(B_K(\theta)=\DL\KLlog_c(\theta)\) for the scaled KL in
\eqref{eq:logistic-main-kl}.
The scaled budget is \(\bar\eta=\DL\eta\), where \(\eta\) is the
absolute conditional-KL budget.
For each ray, we calibrate the step by bisection using the reference KL at
\(\bar\eta\in\{0.005,0.01,0.05,0.5\}\), so the comparison isolates direction quality.
Reference calibration receives the true covariate probabilities.
The benchmark is \(J_K^*(\bar\eta)=\max_{B_K(\theta)\leq\bar\eta}J(\theta)\).
To compute this benchmark, note that at fixed \(\theta_1=t>0\), strict convexity
gives a unique minimizing
\(\theta_{2,K}^\star(t)\in(-t,0)\) satisfying
\begin{equation}
 K\tanh(\theta_{2,K}^\star(t)/2)+\tanh((t+\theta_{2,K}^\star(t))/2)=0.
 \label{eq:logistic-experiment-root}
\end{equation}
The KL after minimizing over the second coordinate increases strictly to infinity, so a second bisection
solves \(B_K(t,\theta_{2,K}^\star(t))=\bar\eta\).
For \(K=1\), \(\theta_{2,1}^\star(t)=-t/2\): the population natural-gradient ray is globally optimal.
For \(K=1,\DL=65536,\bar\eta=0.05,{b_{\mathrm L}}=1\), the sampled Fisher matrix \(\widehat F\) is singular in 28.8\%
of trials. The pseudoinverse, the two damping levels and the gradient achieve mean gain
fractions of \(78.3,93.2,94.8,88.3\%\), respectively.
The corresponding fractions are \(90.3,96.0,97.4,94.9\%\) at \(K=2\).
The methods have the same ordering at all four budgets.
At \(K=1,{b_{\mathrm L}}=16\), pseudoinverse and lightly damped gains reach 99.5\%
and 99.6\%. Only 31.9\% of pseudoinverse directions meet the Euclidean tolerance.
Direction success rises to 91.6\% at \({b_{\mathrm L}}=256\).
Stronger damping retains its bias, with gains of 98.2\% and 98.4\% at these two
sample budgets. \Cref{fig:logistic-application} reports uncertainty for these estimates.

\noindent\textbf{Direction alignment in the Fisher metric.}
We evaluate direction alignment and reference-calibrated gain using the
same count vectors. Consider a linear objective with gradient
$b$, Fisher matrix $F\succ0$ and ray $u$ satisfying $b^\mathsf Tu>0$.
Under quadratic KL, the optimal gain along this ray divided by the global optimal gain is
\begin{equation}
 \frac{b^\mathsf Tu}{\sqrt{u^\mathsf TFu}\sqrt{b^\mathsf TF^{-1}b}}.
\end{equation}
This is the Fisher cosine between the estimated ray and the population
natural gradient. Zero directions contribute zero to both the local and finite-gain means.
\Cref{tab:logistic-fisher-angle} compares Fisher alignment with the Euclidean
criterion and actual nonlinear gain at \(K=1\), \(\DL=65536\) and
\(\bar\eta=0.05\).
At $m/\DL=16$, the mean cosine is 99.46\% and mean finite gain is 99.54\%,
despite the low Euclidean pass rate in \cref{tab:logistic-fisher-angle}.
Across all 840
combinations of method, law, sample budget and KL budget, the largest mean absolute
difference between the Fisher cosine and finite-gain fraction is 6.08 percentage points, at $\bar\eta=0.5$.

\begin{table}[htbp]
\caption{Post hoc Fisher alignment for the pseudoinverse direction at
$K=1$, $\DL=65536$ and $\bar\eta=0.05$, reusing the 2,048 trials per row.
All entries after the sample budget are percentages. Euclidean pass uses the $1/32$ tolerance.
Fisher cosine and finite gain are means over trials. $S_{90}$ is the
fraction attaining 90\% of global optimal finite gain under reference-KL
calibration. Zero directions remain in every denominator.}
\label{tab:logistic-fisher-angle}
\begin{center}
\begin{tabular}{rrrrr}
\toprule
$m/\DL$ & Euclidean pass & Fisher cosine & Finite gain & $S_{90}$ \\
\midrule
0.25 & 4.0 & 30.21 & 31.08 & 5.1 \\
1 & 18.3 & 77.31 & 78.30 & 41.1 \\
4 & 15.6 & 97.80 & 98.10 & 96.2 \\
16 & 31.9 & 99.46 & 99.54 & 100.0 \\
64 & 62.8 & 99.88 & 99.89 & 100.0 \\
256 & 91.6 & 99.97 & 99.97 & 100.0 \\
1024 & 100.0 & 99.99 & 99.99 & 100.0 \\
\bottomrule
\end{tabular}
\end{center}
\end{table}

\subsection{Sample-based logistic updates}
\label{app:logistic-calibration}

We now estimate the KL cost as well as the direction to measure their
combined effect on useful-step selection. We use the fixed query \((e_1,1)\) and global reference-KL gain benchmark
from \cref{app:logistic-finite-step}. The grid is
\(K\in\{1,2\}\), \(\DL\in\{4096,65536\}\),
\(\bar\eta\in\{0.01,0.05,0.5\}\), \(\eta=\bar\eta/\DL\), and
\(m/\DL\in\{1/4,1,4,16,64,256,1024\}\).
Each count setting has 2,048 exact multinomial repetitions
shared across methods and budgets.
This calibration comparison uses draws independent of those used in the direction-estimation experiment in
\cref{app:logistic-finite-step}.
Sample-based methods receive counts, absolute damping and budget values, and the query
gradient \(e_1/2\). They do not impose the construction's equality between
the masses of \(e_1\) and \(e_2\). In this family, the exact direction and
encoding size determine the cell masses, but sampled calibration uses only
the counts. This isolates the chosen rule's response to KL uncertainty.
We compare the population natural-gradient and pseudoinverse directions, damped directions
with \(\lambda\in\{0.1,1\}/(4\DL)\), and the ordinary gradient.
Nonzero directions are normalized, with zero directions giving zero updates.
We apply empirical, reference and confidence calibration to each direction.
For counts \(n_i\) from \(m\) covariate draws, let
\(\widehat p_i=n_i/m\), and let \(\Delta_4\) be the simplex on the four
covariate cells. Confidence calibration uses
\begin{equation}
 \mathcal P_m=\{{\boldsymbol w}\in\Delta_4:
 m\binaryKL(\widehat p_i\Vert w_i)\leq\log(8/\delta)
 \text{ for all }i\},\qquad\delta=0.05.
 \label{eq:coupled-cell-confidence}
\end{equation}
Two-sided Bernoulli Chernoff bounds give failure probability at most
\(\delta/4\) per cell \citep{Hoeffding1963}. A union bound therefore gives \(\Prob({\boldsymbol p}\in\mathcal P_m)\geq1-\delta\).
Boundary values of \(\binaryKL\) are defined by limits, so zero and full
counts are allowed.
The cost \(\KLlog_w(\theta)=\sum_xw_x\log\cosh(\theta^\mathsf Tx/2)\)
is linear in the covariate masses \(w\), so greedy allocation to
cells ordered by \(|v^\mathsf Tx|\) maximizes it.
The cell ordering is unchanged along each ray for \(t\geq0\).
For each nonzero direction \(v\), confidence calibration returns the largest
\(t\in[0,64]\) satisfying
\(\max_{w\in\mathcal P_m}\KLlog_w(tv)\leq\eta\).
Empirical and reference calibration replace this worst-case KL by the cost
under the observed cell frequencies and the true cell masses, respectively.
Coverage of the true cell masses guarantees KL feasibility even for directions
selected from the same counts, so sample splitting is unnecessary.

All searches use cap \(t=64\) and 50 bisections.
Relative feasibility tolerance is \(10^{-7}\).
The primary gain target is 90\%, with sensitivity analyses at 75\% and 95\%.

For confidence calibration in \cref{tab:logistic-oracle-controls}, joint
success rises from 1.1\% [0.7,1.6] to 98.9\% [98.4,99.3]
with population directions as \(m/\DL\) rises from 16 to 256.
\Cref{tab:logistic-secondary-settings} gives the corresponding
pseudoinverse results.
For empirical pseudoinverse calibration at ratios \(16,64,256,1024\),
feasibility is \(43.2,45.8,47.8,49.2\%\), while the 95th-percentile
KL ratios decrease to \(1.411,1.173,1.081,1.039\), respectively.
Smaller violations therefore need not imply a high rate of strict feasibility.
At \(K=1,\eta=0.05/\DL\), the gradient ray reaches at most 88.3\%
of global optimal gain, making the 90\% target unattainable on that ray.

\begin{table}[p]
\centering
\caption{Direction and calibration controls at $K=1$, $\DL=65{,}536$ and $\eta=0.05/\DL$. The population method uses the exact direction, while the pseudoinverse method estimates it from the counts. Reference calibration uses the true law. Confidence calibration uses the simultaneous confidence set for cell masses from those same counts. All metrics are percentages over 2,048 paired repetitions per row. \emph{Feas.} is true KL feasibility. \emph{Gain} is mean improvement divided by the global constrained query optimum, with infeasible gains set to zero. $S_{90}$ requires both feasibility and at least 90\% of that optimum. Brackets are pointwise 95\% Wilson intervals for $S_{90}$.}
\label{tab:logistic-oracle-controls}
\begin{tabularx}{\linewidth}{rllrrR}
\toprule
$m/\DL$ & Direction & Calibration & Feas. & Gain & $S_{90}$ [95\% CI] \\
\midrule
16 & Population & Reference & 100.0 & 100.0 & 100.0 [99.8, 100.0] \\
16 & Population & Confidence & 99.9 & 78.0 & 1.1 [0.7, 1.6] \\
16 & Pseudoinverse & Reference & 100.0 & 99.5 & 100.0 [99.8, 100.0] \\
16 & Pseudoinverse & Confidence & 99.9 & 76.9 & 0.5 [0.3, 1.0] \\
\midrule
64 & Population & Reference & 100.0 & 100.0 & 100.0 [99.8, 100.0] \\
64 & Population & Confidence & 100.0 & 88.0 & 23.6 [21.8, 25.5] \\
64 & Pseudoinverse & Reference & 100.0 & 99.9 & 100.0 [99.8, 100.0] \\
64 & Pseudoinverse & Confidence & 100.0 & 87.6 & 20.8 [19.1, 22.7] \\
\midrule
256 & Population & Reference & 100.0 & 100.0 & 100.0 [99.8, 100.0] \\
256 & Population & Confidence & 100.0 & 93.6 & 98.9 [98.4, 99.3] \\
256 & Pseudoinverse & Reference & 100.0 & 100.0 & 100.0 [99.8, 100.0] \\
256 & Pseudoinverse & Confidence & 100.0 & 93.5 & 98.7 [98.1, 99.1] \\
\bottomrule
\end{tabularx}
\end{table}

\begin{table}[p]
\caption{Pseudoinverse directions with confidence calibration at
$\eta=0.05/\DL$, over 2,048 repetitions per row.
$A_{\rm dir}$ is the percentage with direction error at most $1/32$.
Feasibility is evaluated under the true law. Normalized feasible gain and joint success
$S_{90}$ use the global query optimum as in \cref{tab:logistic-oracle-controls}.
Brackets are pointwise 95\% Wilson intervals for $S_{90}$.}
\label{tab:logistic-secondary-settings}
\begin{center}
\begin{tabularx}{\linewidth}{rrrrrrR}
\toprule
$K$ & $\DL$ & $m/\DL$ & $A_{\rm dir}$ & Feas. & Gain & $S_{90}$ [95\% CI] \\
\midrule
1 & 4,096 & 16 & 33.7 & 100.0 & 77.0 & 0.8 [0.5, 1.3] \\
1 & 4,096 & 64 & 61.2 & 100.0 & 87.5 & 19.2 [17.6, 21.0] \\
1 & 4,096 & 256 & 91.8 & 100.0 & 93.5 & 98.5 [97.9, 98.9] \\
\midrule
1 & 65,536 & 16 & 33.2 & 99.9 & 76.9 & 0.5 [0.3, 1.0] \\
1 & 65,536 & 64 & 60.8 & 100.0 & 87.6 & 20.8 [19.1, 22.7] \\
1 & 65,536 & 256 & 91.6 & 100.0 & 93.5 & 98.7 [98.1, 99.1] \\
\midrule
2 & 4,096 & 16 & 39.7 & 100.0 & 79.9 & 1.1 [0.7, 1.7] \\
2 & 4,096 & 64 & 68.0 & 100.0 & 88.9 & 32.4 [30.4, 34.5] \\
2 & 4,096 & 256 & 96.2 & 100.0 & 94.2 & 100.0 [99.8, 100.0] \\
\midrule
2 & 65,536 & 16 & 39.3 & 100.0 & 80.0 & 1.5 [1.0, 2.1] \\
2 & 65,536 & 64 & 68.2 & 100.0 & 89.0 & 32.1 [30.1, 34.1] \\
2 & 65,536 & 256 & 95.1 & 100.0 & 94.2 & 99.9 [99.6, 100.0] \\
\bottomrule
\end{tabularx}
\end{center}
\end{table}

\subsection{Sample dependence on the damping scale}
\label{app:logistic-damping-experiment}

\begin{figure}[htbp]
 \centering
 \begin{tikzpicture}
\begin{axis}[
 paper wide,
 xmode=log, log basis x=2, xmin=0.21, xmax=152,
 xtick={0.25,0.5,1,2,4,8,16,32,64,128},
 xticklabels={$1/4$,$1/2$,$1$,$2$,$4$,$8$,$16$,$32$,$64$,$128$},
 xlabel={target value of $s\lambda$},
 ylabel={probability of relative error $\leq 1/16$},
 ymin=-0.03, ymax=1.03, ytick={0,0.25,0.5,0.75,1},
 legend to name=logisticdampinglegend,
 legend columns=3,
]
 \addplot[color=black!60, mark=o]
   table[x=b,y=lambda0]{\logisticdamping};
 \addlegendentry{$\lambda=10^{-1}$}
 \addplot[color=cbpurple, mark=square]
   table[x=b,y=lambda1]{\logisticdamping};
 \addlegendentry{$\lambda=3\cdot10^{-2}$}
 \addplot[color=cbblue, mark=*]
   table[x=b,y=lambda2]{\logisticdamping};
 \addlegendentry{$\lambda=10^{-2}$}
 \addplot[color=cborange, mark=square*]
   table[x=b,y=lambda3]{\logisticdamping};
 \addlegendentry{$\lambda=3\cdot10^{-3}$}
 \addplot[color=cbgreen, mark=star]
   table[x=b,y=lambda4]{\logisticdamping};
 \addlegendentry{$\lambda=10^{-3}$}
 \addplot[color=black, densely dashed, mark=star]
   table[x=b,y=lambda5]{\logisticdamping};
 \addlegendentry{$\lambda=3\cdot10^{-4}$}
\end{axis}
\end{tikzpicture}
\plotlegend{logisticdampinglegend}
 \caption{Estimates of the reciprocal of damped scalar Fisher information from paired observations at
 \(p=\lambda\), with 4,096 repetitions per point and \(2s\) observations. At small sample sizes, the binomial counts meeting the error threshold change discretely,
 so success need not increase at every grid point.}
 \label{fig:logistic-damping}
\end{figure}

We set \(p=\lambda\) so that event information and damping have comparable
scales, as in the lower-bound construction. For \(s\) independent pairs of Bernoulli observations,
let \({N_{\mathrm{disc}}}\) count pairs whose outcomes differ. Each pair has discordance
probability \(2p(1-p)\). Thus
\begin{equation}
 {N_{\mathrm{disc}}}\sim\operatorname{Binomial}(s,2p(1-p)),\qquad
 \widehat F={N_{\mathrm{disc}}}/(2s),\qquad F=p(1-p).
 \label{eq:scalar-damping-binomial-law}
\end{equation}

We use \(\lambda\in\{0.1,0.03,0.01,0.003,0.001,0.0003\}\) and
\(s\lambda\in\{1/4,1/2,1,2,4,8,16,32,64,128\}\), rounding \(s\)
to the nearest integer. There are 4,096 exact binomial repetitions per
setting, each representing \(2s\) observations.
Success requires relative error in the estimated reciprocal of the damped scalar Fisher information
\(\lvert(F+\lambda)/(\widehat F+\lambda)-1\rvert\leq1/16\).
At \(s\lambda=128\), success spans 95.6--97.5\% over the six scales,
using 2,560--853,334 observations, compared with 84.1--88.6\% at 64.

\subsection{Direction recovery in higher dimensions}
\label{app:higher-dimension}

The two-dimensional obstruction persists in higher dimensions, with one active query in each of several blocks.
Let \(d=2{n_{\mathrm{blk}}}\) for an integer \({n_{\mathrm{blk}}}\geq1\). A draw selects one block uniformly and places the
two-feature covariate there, with zeros elsewhere. All blocks share the
same covariate law, and every block has a positive-label query:
\begin{equation}
 F_d(0)=\frac{I_{n_{\mathrm{blk}}}\otimes
 \left(\begin{smallmatrix}K+1&1\\1&K+1\end{smallmatrix}\right)}{4{n_{\mathrm{blk}}}\DL},
 \qquad
 \ell_d(\theta)=\frac1{n_{\mathrm{blk}}}\sum_j\log\sigm(\theta_1^{(j)}).
 \label{eq:higher-dimension-model}
\end{equation}

The Fisher information matrix has full rank and condition number \(1+2/K\leq3\).
The constraint is \({n_{\mathrm{blk}}}^{-1}\sum_jB_K(\theta^{(j)})\leq\bar\eta\),
equivalent to conditional KL at most \(\bar\eta/\DL\).
For \(\bar\theta={n_{\mathrm{blk}}}^{-1}\sum_j\theta^{(j)}\), convexity of \(B_K\)
and concavity of \(\log\sigm\) give
\begin{equation}
 B_K(\bar\theta)\leq {n_{\mathrm{blk}}}^{-1}\sum_jB_K(\theta^{(j)}),\qquad
 \log\sigm(\bar\theta_1)\geq {n_{\mathrm{blk}}}^{-1}\sum_j\log\sigm(\theta_1^{(j)}).
 \label{eq:block-averaging-jensen}
\end{equation}
Repeating the average block therefore preserves feasibility and does not
decrease the objective. The global gain optimum is exactly \(J_K^*(\bar\eta)\).

\noindent\textbf{Complexity of the block construction.}
The shared block law also preserves the direction-recovery obstruction. For the
computational construction, let \(\DL=2^{n+5}\). When \({n_{\mathrm{blk}}}\) is a
power of two, selecting a block requires \(\log_2{n_{\mathrm{blk}}}\) additional
fair bits, so exact sampling stays polynomial-time. A known projection then
recovers the two-dimensional SAT test from any accurate high-dimensional output.
Writing \(u_K=(K+1,-1)^\mathsf T/\sqrt{(K+1)^2+1}\), the unit
direction is \(\nu_d=\mathbf1_{n_{\mathrm{blk}}}\otimes u_K/\sqrt {n_{\mathrm{blk}}}\).
The unit vector \(w=\mathbf1_{n_{\mathrm{blk}}}\otimes e_2/\sqrt {n_{\mathrm{blk}}}\) gives
\(w^\mathsf T\nu_d=(u_K)_2\), so error \(1/32\) preserves the SAT threshold.
For rational output, with \(S=\sum_j\widehat\nu_{2j}\), the test
\(w^\mathsf T\widehat\nu<-3/8\) is exactly
\(S<0,\ 64S^2>9{n_{\mathrm{blk}}}\). The test identifies the unsatisfiable case on every valid output, which gives
NP-hardness for fixed or polynomially bounded power-of-two block counts.

For \(K=1,2\), the \(2{n_{\mathrm{blk}}}\) nonzero coordinate-atom masses
each change by \(1/({n_{\mathrm{blk}}}\DL)\) and the zero mass by \(2/\DL\).
Thus the total variation distance for one draw is \(2/\DL\), also with the fair label.
\Cref{lem:padding} and the same two-law test give \(m\geq\DL/6\).

\noindent\textbf{Sampling experiment.}
We compare separate estimation in each block with pooling observations
under the known common block law.
The grid fixes \(\DL=65536,\bar\eta=0.05\), uses \(K=1,2\),
\(d\in\{2,8,32,128,256\}\), and
\(m/({n_{\mathrm{blk}}}\DL)\in\{1/4,1,4,16,64,256,1024\}\), adding fixed-total
\(m=16\DL,256\DL\) comparisons. Each setting has 1,024 paired multinomial repetitions.
We compare the population natural-gradient direction, separate and pooled
pseudoinverse directions, and the gradient direction. Two further methods use
damped directions with \(\lambda=\tau_\lambda/(4{n_{\mathrm{blk}}}\DL)\)
for \(\tau_\lambda=0.1,1\).
Block magnitudes are retained before global normalization, and zero directions
give zero updates. Calibration always uses the reference law.

\begin{table}[htbp]
 \caption{(a) Separate-block pseudoinverse at \(m/({n_{\mathrm{blk}}}\DL)=16\).
 Direction successes are out of 1,024 and gains are percentages of the
 global optimum with 95\% normal half-widths.
 (b) Gain at fixed total \(m=16\DL,K=1\).
 All methods use reference-KL calibration.}
 \label{tab:higher-dimension}
\begin{center}
 \begin{tabular*}{\linewidth}{@{\extracolsep{\fill}}rrrrr}
\toprule
\multicolumn{5}{c}{\textbf{(a) Matched expected count per block:} \(m/({n_{\mathrm{blk}}}\DL)=16\)} \\
\midrule
& \multicolumn{2}{c}{\(K=1\)} & \multicolumn{2}{c}{\(K=2\)} \\
\cmidrule(lr){2-3}\cmidrule(lr){4-5}
\(d\) & Successes & Gain (\%) & Successes & Gain (\%) \\
\midrule
2 & 361 & \(99.53\pm0.04\) & 410 & \(99.75\pm0.02\) \\
8 & 0 & \(98.15\pm0.08\) & 1 & \(98.96\pm0.04\) \\
32 & 0 & \(97.71\pm0.05\) & 0 & \(98.68\pm0.03\) \\
128 & 0 & \(97.62\pm0.03\) & 0 & \(98.64\pm0.01\) \\
256 & 0 & \(97.60\pm0.02\) & 0 & \(98.63\pm0.01\) \\
\midrule
\end{tabular*}
\par
\begin{tabular*}{\linewidth}{@{\extracolsep{\fill}}rrrrrr}
\multicolumn{6}{c}{\textbf{(b) Fixed total samples:} \(m=16\DL\), \(K=1\)} \\
\midrule
\(d\) & Separate & \(\tau_\lambda=0.1\) & \(\tau_\lambda=1\) & Gradient & Pooled \\
\midrule
2 & 99.53 & 99.54 & 98.19 & 88.28 & 99.53 \\
8 & 91.18 & 92.47 & 95.29 & 88.28 & 99.53 \\
32 & 73.01 & 50.29 & 83.26 & 88.28 & 99.59 \\
128 & 43.11 & 73.64 & 79.15 & 88.28 & 99.54 \\
256 & 29.59 & 80.64 & 82.23 & 88.28 & 99.55 \\
\bottomrule
\end{tabular*}

\end{center}
\end{table}

\Cref{tab:higher-dimension} compares increasing dimension at matched
per-block sample sizes and at a fixed total budget.
At \(d=256,m/({n_{\mathrm{blk}}}\DL)=16\), both values of \(K\) have zero direction successes
(95\% Wilson upper endpoint 0.37\%) despite gains of 97.60\% and 98.63\%.
At fixed \(m=16\DL,K=1\), pooling instead gives 99.55\% gain versus
29.59\% for separate estimation. The gradient gains remain 88.28\%
and 94.87\% for \(K=1,2\), independent of dimension.

\section{Calibration in learned classifier heads}
\label{app:learned-updates}

These experiments test whether stopping at a sampled KL boundary is reliable in
learned models without constructed rarity. \Cref{app:learned-setup} fixes the
models and queries, \cref{app:learned-access} the directions, calibration
rules and outcome measures, \cref{app:learned-original,app:learned-followup}
report the full-budget and margin experiments, and
\cref{app:learned-verification} describes sampling costs and uncertainty.

\subsection{Models and fixed interventions}
\label{app:learned-setup}

To study calibration with learned representations, each intervention
updates the affine softmax head for one query while keeping the representation
fixed. We restore the original head before
each query and repetition.
Class-stratified round-robin sampling selects 32 initially misclassified
queries per model before evaluation.
\Cref{tab:learned-models} lists the reference-set sizes, accuracies on the splits used to select queries,
and numbers of free parameters in the heads.

\begin{table}[htbp]
\caption{Fixed models. Features include an intercept, and the last class's
logits are fixed at zero to remove redundant head parameters.}
\label{tab:learned-models}
\begin{center}
\begin{tabularx}{\linewidth}{lXX}
\toprule
 & CIFAR-10 / ResNet-18 & SST-2 / DistilBERT \\
\midrule
Reference inputs & 50,000 training inputs & 67,349 training inputs \\
Query split (accuracy) & Test (95.34\%) & Validation (91.055\%) \\
Selected errors & 32 of 466 & 32 of 78 \\
Feature dimension / free head parameters & 513 / 4,617 & 769 / 769 \\
\bottomrule
\end{tabularx}
\end{center}
\end{table}

ResNet-18 \citep{HeEtAl2016} uses the final checkpoint of
a prescribed 200-epoch CIFAR-10 run \citep{Krizhevsky2009}, without
test-based checkpoint selection.
Its stem is \(3\times3\), stride one, without max pooling.
Training uses SGD (learning rate \(0.1\), momentum \(0.9\),
weight decay \(5\times10^{-4}\)) and a batch size of 128.
We use cosine learning-rate decay, random crops with four-pixel padding and horizontal flips. Pixels divided by 255 are normalized
by means \((0.4914,0.4822,0.4465)\) and standard deviations
\((0.2470,0.2435,0.2616)\). Features are the 512 final average-pooled values.

DistilBERT \citep{SanhEtAl2019} uses a fixed GLUE SST-2 checkpoint
\citep{WangEtAl2019,SocherEtAl2013}, sequence length 128 and the
768-dimensional representation after the pre-classifier and ReLU.
Feature extraction uses evaluation mode without augmentation.
The reference law \(Q\) is uniform over training inputs.
Accuracies on the reference populations are 100\% and
98.855\%.

\subsection{Directions, step selection and outcomes}
\label{app:learned-access}

We compare reference and sampled damped natural-gradient directions with the
gradient direction. This isolates how information about the background input law affects the
update. All three use the exact query gradient.
For free head parameters \(\theta\), put
\(b=\nabla_\theta\log\pi_\theta(y_*\mid x_*)\) and
\(s_\theta(y,x)=\nabla_\theta\log\pi_\theta(y\mid x)\).

The model Fisher information matrix and
reference direction are
\begin{align}
 F_Q&=\E_{x\sim Q}\sum_y\pi_\theta(y\mid x)
                s_\theta(y,x)s_\theta(y,x)^{\mathsf T},
 \label{eq:learned-model-fisher}\\
 v_Q&=(F_Q+\lambda I)^{-1}b/
                 \|(F_Q+\lambda I)^{-1}b\|_2.
 \label{eq:learned-reference-direction}
\end{align}
For the estimated direction, we replace the input expectation by a sample
average. The gradient direction is \(b/\|b\|_2\).
Labels of the reference inputs are used only for the prediction diagnostics,
not for the direction or KL. These experiments use damped directions
without a conditioning restriction. The theoretical construction instead
uses an undamped natural gradient and a Fisher matrix with condition number at most $3$.
For a unit direction \(v\), define
\begin{align}
 G_v(t)&=\log\pi_{\theta+tv}(y_*\mid x_*)-\log\pi_\theta(y_*\mid x_*),\\
 {\mathcal K}_v(t)&=\E_Q\KL(\pi_\theta(\cdot\mid x)\Vert
                         \pi_{\theta+tv}(\cdot\mid x)).
 \label{eq:learned-ray-kl}
\end{align}
Calibration maximizes the known query gain under sampled or reference KL.
Reference directions and calibration require extra population information,
whereas empirical calibration uses sampled KL.
For each fixed direction, the reference-budget benchmark and success indicator are
\begin{equation}
 G_v^*(\eta)=\sup_{t\geq0:{\mathcal K}_v(t)\leq\eta}G_v(t),\qquad
 S_\alpha(t)=\one\{{\mathcal K}_v(t)\leq\eta,\ G_v(t)\geq\alpha G_v^*(\eta)\}.
 \label{eq:learned-joint-success}
\end{equation}
The supremum is finite because the initial query probability is positive.
If a ray leaves every reference conditional distribution unchanged, its optimal query gain
may be approached only as \(t\to\infty\). If at least one reference input
has nonconstant class-logit increments, positive baseline softmax probabilities
give \({\mathcal K}_v(t)\to\infty\). The feasible interval is then compact and the
supremum is attained.
We use \(\alpha=0.75\), with \(0.90\) as a secondary threshold.
The benchmark always uses the full budget, including for the \(0.9\eta\) rule.
We also report ratios of population KL to the budget, violations above \(1.1\eta\), corrected queries
and changes in predictions on the reference population.

\noindent\textbf{Why stopping at an estimated boundary violates the budget about half the time.}
Strict feasibility is sensitive to selection at an estimated boundary. Fix a ray
independently of its calibration data. Suppose \({\mathcal K}_v\) and its
empirical estimate \(\widehat {\mathcal K}_{v,m}\) are continuous and strictly increasing
on a fixed neighborhood of a positive root \(t^*\) of \({\mathcal K}_v(t)=\eta\).
Calibration selects a root \(\widehat t_m\) of \(\widehat {\mathcal K}_{v,m}(t)=\eta\),
and this root converges in probability to the population boundary.
On this neighborhood, the boundary comparison gives
\begin{equation}
 {\mathcal K}_v(\widehat t_m)>\eta
 \quad\Longleftrightarrow\quad
 \widehat {\mathcal K}_{v,m}(t^*)<\eta.
\end{equation}
If
\(\sqrt m[\widehat {\mathcal K}_{v,m}(t^*)-\eta]\) has a centered, nondegenerate normal
limit, the strict violation probability tends to \(1/2\), even though
the violation magnitude vanishes in probability. This concerns exact
feasibility before numerical tolerances. Inactive constraints, caps or rays
estimated from the calibration data need not satisfy these conditions.
The margin rule constrains the empirical finite-step KL, so it guards against
sampling error rather than against the error of a quadratic KL approximation.

\subsection{Full-budget experiment}
\label{app:learned-original}

We cross three directions and two calibration rules to distinguish
direction-estimation effects from calibration effects. The grid is fixed before evaluation.
Sample sizes are \(m\in\{256,1024,4096\}\), with budgets
\(\eta\in\{0.01,0.001\}\) and default damping \(\lambda=0.001\).
The values \(\lambda=0.0001,0.01\) are added at
\(m=1024,\eta=0.01\). Fisher matrix estimation and calibration use independent batches sampled with
replacement. Corresponding methods use the same sampled inputs, and smaller
batches use the first draws of the larger batches.
Each condition has 20 repetitions for each of 32 queries.
The primary comparison subtracts the raw gain under empirical calibration from
that under reference calibration, using the reference direction at \(m=1024,\eta=0.01,\lambda=0.001\).

\begin{table}[htbp]
\caption{Paired raw-gain contrasts at \(m=1024,\lambda=0.001\),
in \(10^{-3}\) nats, with 95\% query-bootstrap intervals.
Direction contrasts use reference calibration.}
\label{tab:learned-selected-effects}
\begin{center}
\begin{tabularx}{\linewidth}{>{\raggedright\arraybackslash}XRR}
\toprule
Contrast (budget) & CIFAR-10 & SST-2 \\
\midrule
Reference minus empirical calibration (\(0.01\)) &
\(-1.59\) [\(-5.09,1.75\)] & \(-0.112\) [\(-1.214,0.845\)] \\
Reference direction minus estimated direction (\(0.01\)) &
2.32 [\(-0.36,5.13\)] & 0.13 [\(-5.69,4.90\)] \\
Estimated direction minus gradient direction (\(0.01\)) &
156.97 [71.08,249.32] & 1717.06 [1401.41,2021.48] \\
Reference direction minus estimated direction (\(0.001\)) &
4.76 [1.10,8.64] & 76.58 [52.35,101.32] \\
\bottomrule
\end{tabularx}
\end{center}
\end{table}

Raw gains include infeasible steps and can exceed the reference-budget
optimum. The primary calibration intervals include zero.
At the primary setting, empirical calibration along the reference direction produces KL above
\(1.1\eta\) in 27.34\% and 37.19\% of interventions.
The direction comparisons in \cref{tab:learned-selected-effects}
depend on budget and have unequal sampling costs.
Joint success is a post hoc measure in this experiment.
At the primary setting, every empirical update reaches 90\% of its own ray benchmark.
Joint success along the reference direction is 47.50\% [42.97,51.72] on CIFAR and
45.63\% [41.88,49.38] on SST. These failures are KL violations.

\subsection{Margin experiment and outcomes}
\label{app:learned-followup}

We test whether reserving 10\% of the empirical KL budget reduces boundary
crossings while retaining useful gain. We fix the grid before evaluation and reuse the
checkpoints, queries, reference law, extracted features and reference directions
at \(\lambda=0.001\). For each query, we run
20 fresh repetitions at \(m\in\{256,1024,4096,16384\}\), using the same sampled
inputs to compare calibration rules. The four rules combine empirical or
reference KL with the budget \(\eta\) or \(0.9\eta\).
We apply all four rules at each KL budget.
The following lemma bounds the gain lost by reserving budget under the
exact law. This gives a benchmark for the empirical margin comparison.

\begin{lemma}[Exact-budget margin for affine-head log-likelihood]
\label{lem:learned-margin-control}
Let \(\pi_{\theta+tv}(\cdot\mid x)\) be a softmax classifier whose logits
are affine in \(t\geq0\), with fixed parameters \(\theta\) and unit direction
\(v\). Fix a finite reference law \(Q\) and a labeled query \((x_*,y_*)\).
Define the query log-likelihood gain, population KL cost and ray benchmark by
\begin{equation}
 \begin{aligned}
 G_v(t)&=\log\pi_{\theta+tv}(y_*\mid x_*)-\log\pi_\theta(y_*\mid x_*),\\
 \mathcal K_v(t)&=\E_Q\KL(\pi_\theta(\cdot\mid X)\Vert\pi_{\theta+tv}(\cdot\mid X)),\\
 G_v^*(\eta)&=\sup_{t\geq0:\,\mathcal K_v(t)\leq\eta}G_v(t).
 \end{aligned}
\end{equation}
Then \(G_v^*({\chi_{\mathrm{bud}}}\eta)\geq {\chi_{\mathrm{bud}}}G_v^*(\eta)\)
for every \(\eta>0\) and \(0<{\chi_{\mathrm{bud}}}<1\), whether or not either
supremum is attained.
\end{lemma}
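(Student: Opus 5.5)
The plan rests on two facts along the ray: the cost $\mathcal K_v$ is convex in $t$ with $\mathcal K_v(0)=0$, and the gain $G_v$ is concave in $t$ with $G_v(0)=0$. With these, scaling any feasible step by ${\chi_{\mathrm{bud}}}$ keeps it feasible at budget ${\chi_{\mathrm{bud}}}\eta$ and retains at least a ${\chi_{\mathrm{bud}}}$ fraction of its gain. Taking the supremum then gives the claim. Working with individual feasible steps, rather than with maximizers, means the argument never needs either supremum to be attained.

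First, convexity of the cost. At a fixed input $x$, the logit increments along the ray are $t\,d_a(x)$ for fixed $d_a(x)$, so by \cref{eq:categorical-kl-envelope}
\[
k_t(x)=\log\sum_a\pi_a(x)e^{t d_a(x)}-t\sum_a\pi_a(x)d_a(x).
\]
This is a log-moment-generating function minus a linear term. Its second derivative in $t$ is a tilted variance, hence nonnegative, so $k_t(x)$ is convex in $t$ with $k_0(x)=0$. Averaging over the finite law $Q$ preserves both properties for $\mathcal K_v$. Convexity with $\mathcal K_v(0)=0$ then gives $\mathcal K_v({\chi_{\mathrm{bud}}}t)\le{\chi_{\mathrm{bud}}}\mathcal K_v(t)$.

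Second, concavity of the gain. Writing $\ell_a(t)$ for the query logits, which are affine in $t$, we have
\[
G_v(t)=\ell_{y_*}(t)-\log\sum_a e^{\ell_a(t)}+\mathrm{const}.
\]
This is affine minus a convex function (log-sum-exp composed with an affine map), so $G_v$ is concave with $G_v(0)=0$. Hence $G_v({\chi_{\mathrm{bud}}}t)\ge{\chi_{\mathrm{bud}}}G_v(t)$ for all $t\ge0$.

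Now combine the two inequalities. Fix any $t\ge0$ with $\mathcal K_v(t)\le\eta$. The step $s={\chi_{\mathrm{bud}}}t$ satisfies $\mathcal K_v(s)\le{\chi_{\mathrm{bud}}}\eta$, so $G_v^*({\chi_{\mathrm{bud}}}\eta)\ge G_v(s)\ge{\chi_{\mathrm{bud}}}G_v(t)$. Taking the supremum over feasible $t$ yields $G_v^*({\chi_{\mathrm{bud}}}\eta)\ge{\chi_{\mathrm{bud}}}G_v^*(\eta)$. This holds also when the suprema are $+\infty$ or are not attained: the feasible set is nonempty because it contains $t=0$, and if $G_v^*(\eta)=+\infty$ the same chain of inequalities shows that $G_v^*({\chi_{\mathrm{bud}}}\eta)$ is infinite too.

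No step here is hard. The only point that needs care is recognizing that the query gain is concave even though it can be negative along parts of the ray; the inequality $G_v({\chi_{\mathrm{bud}}}t)\ge{\chi_{\mathrm{bud}}}G_v(t)$ uses only concavity together with $G_v(0)=0$, and never the sign of $G_v$.
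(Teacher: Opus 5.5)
Your proof is correct and follows the paper's argument. Convexity of $\mathcal K_v$ and concavity of $G_v$, both vanishing at $t=0$, give $\mathcal K_v({\chi_{\mathrm{bud}}}t)\le{\chi_{\mathrm{bud}}}\eta$ and $G_v({\chi_{\mathrm{bud}}}t)\ge{\chi_{\mathrm{bud}}}G_v(t)$ for each feasible $t$, and taking the supremum over feasible $t$ avoids any attainment assumption; you just derive explicitly (via the tilted variance and affine minus log-sum-exp) the convexity and concavity that the paper attributes to convexity of the log-partition function.
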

\begin{proof}[Proof of \cref{lem:learned-margin-control}]
The log-partition function is convex, so \({\mathcal K}_v\) is convex and \(G_v\)
concave, with \({\mathcal K}_v(0)=G_v(0)=0\).

For every \(t\geq0\) with
\({\mathcal K}_v(t)\leq\eta\),
\begin{equation}
 \begin{aligned}
 {\mathcal K}_v({\chi_{\mathrm{bud}}}t)&\leq {\chi_{\mathrm{bud}}}{\mathcal K}_v(t)\leq {\chi_{\mathrm{bud}}}\eta,\\
 G_v({\chi_{\mathrm{bud}}}t)&\geq {\chi_{\mathrm{bud}}}G_v(t).
 \end{aligned}
 \label{eq:learned-margin-proof}
\end{equation}
Thus \(G_v^*({\chi_{\mathrm{bud}}}\eta)\geq {\chi_{\mathrm{bud}}}G_v(t)\) for every such \(t\).
Taking the supremum over these candidates proves the claim without
assuming that either optimum is attained.
\end{proof}
\begin{table}[htb]
\begingroup
\captionof{table}{Results for empirical calibration in \cref{fig:learned-updates-recovery} at
$m=16{,}384$ and $\eta=0.01$. Joint success requires KL feasibility and
at least 75\% of the full-budget ray-optimal gain. Brackets give pointwise
95\% query-bootstrap intervals. Margin gain averages each intervention's raw gain as a percentage of the
full-budget ray optimum. This mean includes infeasible updates.}
\label{tab:learned-calibration-main}
\begin{center}
\begin{tabular*}{\linewidth}{@{\extracolsep{\fill}}lrrr}
\toprule
 & \multicolumn{2}{c}{Joint success (\%) [95\% CI]} & \\
\cmidrule(lr){2-3}
Dataset & Full budget $\eta$ & Margin $0.9\eta$ & Margin gain (\%) \\
\midrule
CIFAR-10 & 48.59 $[44.69,52.50]$ & 99.53 $[98.91,100.00]$ & 99.63 \\
SST-2 & 50.94 $[47.34,54.69]$ & 93.75 $[91.09,96.25]$ & 99.79 \\
\bottomrule
\end{tabular*}
\end{center}
\endgroup

\end{table}

Both reference rules attain 100\% joint success at both gain thresholds, and
the smallest reference-margin gain fraction is 96.23\%.
For empirical calibration at $0.9\eta$, with $\eta=0.01$ and
$m=16{,}384$, every update meets the 90\% gain target.
The remaining joint-success failures are KL violations: 3/640 on CIFAR
and 40/640 on SST. Of these, zero CIFAR violations and four SST violations
exceed \(1.1\eta\).
At this setting, the mean decline in accuracy on the training reference set stays below
0.15 percentage points under both empirical rules. Compared with full-budget reference calibration, reference-KL margins
correct two fewer CIFAR queries and one fewer SST query among 32 targets.
Joint success under the empirical margin rule increases at every tested \(m\)
for both budgets and models.

At \(\eta=0.001\), empirical margin success reaches 98.13\% on CIFAR and 91.56\% on SST at the largest sample size.
The SST interval is [88.13,94.84]. At the 75\% gain threshold, all 16
paired differences in joint success between margin and full-budget calibration
have positive lower interval endpoints. Coverage is pointwise for each comparison.
The 90\% target lowers success at some smaller sample sizes.

\subsection{Sampling cost and uncertainty}
\label{app:learned-verification}

Intervals use query means over 20 repetitions and 5,000 bootstrap resamples
of the 32 queries. They quantify variation across the selected queries, conditional on the
checkpoints and query selection.
An empirical update along the estimated direction uses independent batches
of \(m\) observations for Fisher matrix estimation and calibration.
Gradient and supplied-ray rules use only the calibration batch.

\end{document}